\tikzstyle{empty circle}=[fill=white, draw=black, shape=circle]
\tikzstyle{new style 0}=[fill=none, draw=black, shape=rectangle]
\tikzstyle{new style 1}=[fill=white, draw=none, shape=rectangle]
\tikzstyle{Xrec}=[fill=none, draw=black, shape=rectangle, dashed, minimum width=2cm, minimum height=3.5cm]
\tikzstyle{Zrec}=[fill=none, draw=black, shape=rectangle, dashed, minimum width=2.5cm, minimum height=3.5cm]
\tikzstyle{Yrec}=[fill=none, draw=black, shape=rectangle, dashed, minimum width=1.5cm, minimum height=3.5cm]
\tikzstyle{directed edge}=[fill=none, draw=black, ->, >=angle 60]
\tikzstyle{new edge style 0}=[-, dashed]
\icmltitlerunning{New Rules for Causal Identification with Background Knowledge}
\begin{document}

\twocolumn[
\icmltitle{New Rules for Causal Identification with Background Knowledge}



\icmlsetsymbol{equal}{*}
\begin{icmlauthorlist}
\icmlauthor{Tian-Zuo Wang}{yyy}
\icmlauthor{Lue Tao}{yyy}
\icmlauthor{Zhi-Hua Zhou}{yyy}
\end{icmlauthorlist}

\icmlaffiliation{yyy}{National Key Laboratory for Novel Software Technology, Nanjing University, Nanjing, 210023, China}
\icmlcorrespondingauthor{Zhi-Hua Zhou}{zhouzh@lamda.nju.edu.cn}

\icmlkeywords{Machine Learning, ICML}

\vskip 0.3in
]



\printAffiliationsAndNotice{} 

\begin{abstract}
Identifying causal relations is crucial for a variety of downstream tasks. In additional to observational data, \emph{background knowledge (BK)}, which could be attained from human expertise or experiments, is usually introduced for uncovering causal relations. This raises an open problem that in the presence of latent variables, what causal relations are identifiable from observational data and BK. In this paper, we propose two novel rules for incorporating BK, which offer a new perspective to the open problem. In addition, we show that these rules are applicable in some typical causality tasks, such as determining the set of possible causal effects with observational data. Our rule-based approach enhances the state-of-the-art method by circumventing a process of \emph{enumerating block sets} that would otherwise take exponential complexity. 
\end{abstract}
\section{Introduction}
\label{sec:Introduction}
In recent years, the adoption of causal thinking~\citep{books/2009causality} has opened up new venues for many machine learning topics, such as semi-supervised learning~\citep{conf/icml/ScholkopfJPSZM12,conf/uai/KugelgenMLS20}, reinforcement learning~\citep{conf/iclr/HuangFLM022,conf/iclr/RuanZDB23}, transfer learning~\citep{conf/icml/GongZLTGS16,conf/nips/0001GSHLG20,conf/aaai/CaiC0CZYLYZ21}, and so on. One essence of causal thinking lies in the \emph{causal relations} among the variables, generally characterized by a \emph{causal graph}. As a causal graph is usually not pre-known, uncovering the causal relations is vital for addressing downstream tasks. 

Given observational data, the existing theoretical results have shown that only a Markov equivalence class (MEC) of causal graphs is identifiable, which contains some uncertain causal relations~\citep{books/spirtes2000causation,ali2005orientation,journals/ai/Zhang08}. To further reveal these relations, additional structural knowledge is usually incorporated, which could be attained from experiments or human expertise~\citep{conf/uai/Meek95}. In the literature, we usually call this kind of knowledge by \emph{background knowledge}, or BK for short. 

In the presence of both observational data and BK, a core problem is \emph{causal identification}, \emph{i.e.}, understanding what causal relations are \emph{identifiable} from these knowledge. This problem is vital because it concerns the extent to which causal relations can be inferred from available information. On one hand, it pursues to identify as many causal relations as possible from existing knowledge, which can provide supports for fully utilizing BK in practical tasks~\citep{journals/he2008active,journals/ijar/HauserB14}. On the other hand, even without BK, it is still valuable to some tasks with only observational data, such as causal effect estimation~\citep{journals/2009estimating,conf/uai/FangH20,conf/icml/WangQZ23} and equivalent causal graph enumeration~\citep{conf/aaai/WienobstLBL23,conf/icml/WangTZ24}. In these tasks, some additional structures that can be viewed as hypothetical BK are possibly introduced, thus solving causal identification can facilitate uncovering the most informative causal graph given these additional structures.

Significant efforts have been made towards causal identification in scenarios without latent variables~\citep{conf/uai/VermaP90,conf/uai/VermaP92}. And~\citet{conf/uai/Meek95} closed the problem by presenting four \emph{sound} and \emph{complete} rules to uncover causal relations. However, in real world tasks, \emph{latent confounders} that influence some observable variables generally exist. In these instances, \emph{ancestral graph} is usually used to characterize the causal relations among observable variables~\citep{richardson2002ancestral}. To identify causal relations with observational data in such contexts, ten sound and complete rules have been proposed~\citep{ali2005orientation,journals/ai/Zhang08}. And there are also several studies about causal identification with some kind of \emph{specific} BK~\citep{conf/aistats/Andrews20,conf/nips/JaberKSB20,journals/arXiv/WangQZ2022}. Nevertheless, the thorough result for causal identification with \emph{any} kinds of BK in the presence of latent confounders remains elusive. 


In this paper, we propose two novel rules for incorporating BK in the presence of latent confounders. Different from existing rules which identify causal relations based on few \emph{edges} or \emph{paths}, the identified causal relations by our proposed rules may rely on a \emph{subgraph}. Our findings suggest that in the presence of latent variables, causal identification requires more complicated orientation rules when BK is incorporated, thereby highlighting the intrinsic hardness of causal identification with BK. Interestingly, we find that the proposed rules are essentially the generalizations of two existing rules in the literature. We believe that the proposed rules can inspire the establishment of sound and complete rules to incorporate BK in the future. 


Further, even without BK, the proposed rules are also applicable in some typical causality tasks with only observational data. We show that our proposed rules can take benefit to \emph{set determination} task by improving the state-of-the-art method PAGcauses~\citep{conf/icml/WangQZ23}. As previously discussed, with observational data, we can only identify a MEC, within which the causal effect of a variable $X$ on variable $Y$ is possibly unidentifiable. To mitigate this unidentifiable case, a common solution is to determine the \emph{set of possible causal effects} instead, which consists of the causal effect values in all the causal graphs within the MEC, which is called \emph{set determination} for brevity. In the absence of latent variables, many efficient methods have been proposed for set determination~\citep{journals/2009estimating,conf/uai/PerkovicKM17,conf/uai/FangH20,witte2020efficient}. For the scenarios with latent confounders,~\citet{journal/malinsky2016} proposed the first relevant method by locally enumerating MAGs. Then,~\citet{journals/arXiv/WangQZ2022} presented an enumeration-free method \emph{PAGcauses}, which reduces the complexity super-exponentially compared to the enumeration-based method. In this paper, we introduce the proposed rules to enhance PAGcauses by avoiding a process of enumerating block sets, which reduces an exponential complexity relative to the number of vertices.

In summary, this paper makes two significant contributions. Firstly, we present two novel rules for incorporating BK in the presence of latent confounders. Secondly, we apply the rules in set determination task, effectively eliminating an exponential computational burden of the state-of-the-art method. All the proofs are shown in appendix.
\section{Preliminary}
\label{sec:Preliminary}
Denote a graph by $G$. Let $\mf{V}(G)$ denote the set of vertices (variables) and $\mf{E}(G)$ denote the set of edges in $G$. We use bold letter (\emph{e.g.}, $\mf{A}$) to denote a set of vertices and normal letter (\emph{e.g.}, $A$) to denote a vertex. Given a set of vertices $\mf{V}'\subseteq \mf{V}(G)$, $G[\mf{V}']$ is the subgraph of $G$ induced by $\mf{V}'$ which consists of vertices $\mf{V}'$ and all the edges between $\mf{V}'$. $G[-\mf{V}']$ denotes $G[\mf{V}(G)\backslash \mf{V}']$. $G$ is a \emph{complete graph} if for any two vertices in $G$, there is an edge connecting them.

In this paper, we assume the absence of selection bias. Hence the case for selection bias is not involved in the following definitions. A graph is a \emph{mixed graph} if it contains directed and bi-directed edges. The two ends of an edge are \emph{marks}, which could be \emph{arrowhead}, \emph{tail}, and \emph{circle}($\circ$). The symbol $\circ$ represents that the mark here is unknown. The symbol $\ast$ is a \emph{wildcard} that represents any marks. A \emph{partial mixed graph (PMG)} is a graph containing arrowheads, tails, and circles. Due to space limit, some definitions are shown in Appendix~\ref{sec: prelimiary about graphs}, including \emph{directed path, minimal path, collider path, parent, ancestor, descendant, possible ancestor, possible descendant, circle edge, circle component}.


In a graph $G$, if there is $V_i\rightarrowast V_j\leftarrowast V_k$ where $V_i$ is not adjacent to $V_k$, they form an \emph{unshielded collider}. Consider a path $p=\langle V_1,V_2,\cdots,V_k\rangle$, $p$ is a \emph{possible directed path} if for the edge between $V_i$ and $V_{i+1}$, $\forall 1\leq i\leq k-1$, there is no arrowhead at $V_i$ and no tail at $V_{i+1}$; $p$ is \emph{uncovered} if $V_{i-1}$ is not adjacent to $V_{i+1}$, $\forall 2\leq i\leq k-1$. In $G$, denote the set of parents/ancestors/descendants/possible descendants of $V_i$ by $\text{Pa}(V_i,G)/\text{Anc}(V_i,G)/\text{De}(V_i,G)/\text{PossDe}(V_i,G)$. Given a vertex $V_i$ and a set of vertices $\mf{V}'$ in $G$, $V_i\in \text{Anc}(\mf{V}',G)$ if there exists a vertex $V_j\in \mf{V}'$ such that $V_i\in \text{Anc}(V_j,G)$. 

For a mixed graph $G$, if there is a directed path from $V_i$ to $V_j$ and an edge $V_j\rightarrow V_i$/$V_j\leftrightarrow V_i$, they form a \emph{directed cycle/almost directed cycle}. A mixed graph $G$ is \emph{ancestral} if there are no directed cycles and no almost directed cycles. The \emph{maximal property} is given in Appendix~\ref{sec: prelimiary about graphs}. In the presence of latent variables, \emph{maximal ancestral graph (MAG)} is usually used to characterize the causal relations among observable variables. Essentially, MAG is a projection graph on the observable variables of an underlying DAG that contains both observable and latent variables. We say \emph{a DAG $\ml{D}$ is represented by a MAG $\ml{M}$} if $\ml{M}$ is a projection graph of an underlying $\ml{D}$. Note many DAGs can be represented by one MAG, which is detailed in Appendix~\ref{sec: prelimiary about graphs}. A \emph{partial ancestral graph (PAG)} represents a Markov equivalence class (MEC) of MAGs. Denote MAG and PAG by $\ml{M}$ and $\ml{P}$, respectively. Suppose we obtain a PMG $H$ from $\ml{P}$ by transforming some circles. We say a MAG $\ml{M}$ is \emph{consistent with} $H$ if $\ml{M}$ has the same non-circle marks at $H$ and $\ml{M}$ belongs to the MEC represented by $\ml{P}$. Note when we say an edge $\rightcircleast$, the $\ast$ here is not a tail, for otherwise the circle can only be an arrowhead due to no selection bias. $G_{\utilde{X}}$ denotes the subgraph of $G$ by deleting all the edges out of $X$.

In the literature, there are orientation rules $\ml{R}_1-\ml{R}_{11}$ to identify a PAG or incorporate \emph{local background knowledge} into a PAG. These rules are shown in Appendix~\ref{subsec:preliminary about rules}.

\emph{Covariate adjustment} is a classical method to estimate the causal effect given a causal graph, by finding an \emph{adjustment set} $\mf{Z}$ such that $P(Y|do(X))=\sum_\mf{Z} P(\mf{Z})P(Y|X,\mf{Z})\diff \mf{Z}$. More related results are shown in Appendix~\ref{subsec:preliminary about causal effect}.
\begin{figure*}[tb]
  \centering
  \subfigure[]{\label{figure:11}
  \includegraphics[height=0.18\linewidth]{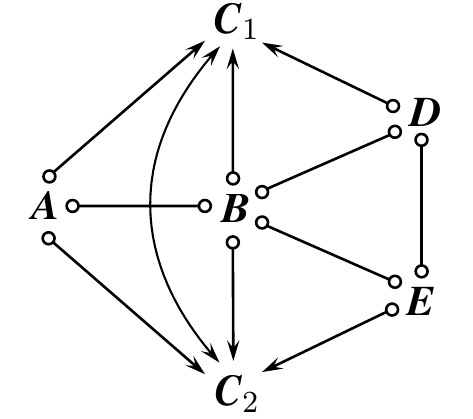}} \hspace{0.24cm}
  \subfigure[]{\label{figure:12}
  \includegraphics[height=0.18\linewidth]{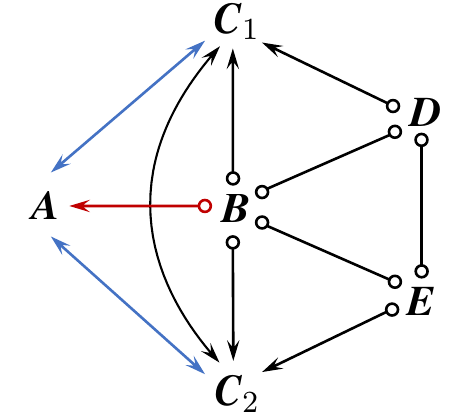}} \hspace{0.24cm}
  \subfigure[]{\label{figure:13}
  \includegraphics[height=0.18\linewidth]{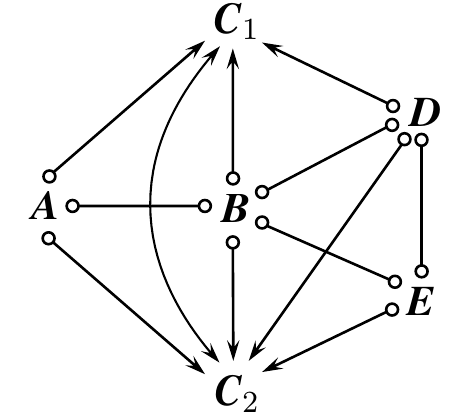}} \hspace{0.24cm}
  \subfigure[]{\label{figure:14}
  \includegraphics[height=0.18\linewidth]{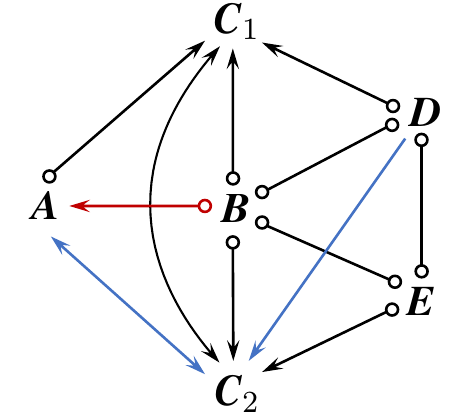}} \hspace{0.24cm}
  \caption{Two examples for $\ml{R}_{12}$ and $\ml{R}_{13}$. Two PAGs are shown in Fig.~\ref{figure:11} and~\ref{figure:13}. Blue lines denote the edges transformed according to BK, red lines denote the edges transformed by $\ml{R}_{12}$ and $\ml{R}_{13}$. Fig.~\ref{figure:12} shows a PMG transformed from Fig.~\ref{figure:11} with additional BK and $\ml{R}_{12}$. Fig.~\ref{figure:14} shows a PMG transformed from Fig.~\ref{figure:13} with additional BK and $\ml{R}_{13}$.} 
  \label{figure:1} 
  \end{figure*}
\section{Proposed Rules}
\label{sec:proposed rule}
In this section, we present two novel rules to incorporate BK into a partial mixed graph (PMG) $H$. As there have established sound and complete rules to obtain a PAG with observational data in the literature~\citep{journals/ai/Zhang08}, we do not consider the stage of identifying a PAG in this paper. Hence, we restrict that $H$ is a PAG or a PMG transformed from a PAG. Also, we assume that the introduced BK is correct, \emph{i.e.}, there exist MAGs consistent with the PMG and BK.

At first, we introduce an important concept, \emph{unbridged path relative to $\mf{V}'$} in a PMG $H$, in Def.~\ref{def:unbridged path}, where $\mf{V}'$ is a set of vertices in $H$. Intuitively, an unbridged path $p$ relative to $\mf{V}'$ is a path with an \emph{intriguing property}: if every vertex in $p$ is not an ancestor of $\mf{V}'$ in $H$, then every vertex in $p$ \emph{must be} an ancestor of $\mf{V}'$ in any MAG consistent with $H$.
\begin{myDef}[Unbridged path relative to $\mf{V}'$]
\label{def:unbridged path}
Suppose $H$ a PMG. If there is an uncovered circle path $p:V_0\leftrightcircle V_1\leftrightcircle\cdots \leftrightcircle V_n,n\geq 1$ in $H[-\mf{V}']$ such that $\ml{F}_{V_0}\backslash \ml{F}_{V_{1}}\neq \emptyset$ and $\ml{F}_{V_n}\backslash \ml{F}_{V_{n-1}}\neq \emptyset$, where $\ml{F}_{V_i}=\{V\in \mf{V}'\mid  V\rightcircleast V_i \mbox{ or } V\rightarrowast V_i\mbox{ in }H\}$, then $p$ is an unbridged path relative to $\mf{V}'$.
\end{myDef}
\begin{myRem}
One may wonder why the abovementioned property holds for unbridged path $p$ if every vertex in $p$ is not an ancestor of $\mf{V}'$ in $H$. The reason is, in any MAG $\ml{M}$ consistent with $H$, there cannot be additional unshielded colliders relative to $H$, which introduce additional conditional independence such that the graphs do not belong to the MEC. Suppose $C_1\in \ml{F}_{V_0}\backslash \ml{F}_{V_1}$ and $C_2\in \ml{F}_{V_n}\backslash \ml{F}_{V_{n-1}}$ according to Def.~\ref{def:unbridged path}. Since (1) $C_1\not\in \ml{F}_{V_1}$ and (2) $V_1$ is not an ancestor of $C_1\in\mf{V}'$ in $H$, we can conclude that $C_1$ is not adjacent to $V_1$. Similarly, $C_2$ is not adjacent to $V_{n-1}$. Hence, to avoid generating unshielded colliders, the corresponding path in $\ml{M}$ of $p$ as well as $C_1$ and $C_2$ can only be $C_1\leftrightast V_0\rightarrow \cdots\rightarrow V_n\rightarrow C_2$, $C_1\leftarrow V_0\leftarrow \cdots \leftarrow V_n\leftrightast C_2$, or $C_1\leftarrow V_0\leftarrow \cdots \leftarrowast$ $V_i\rightarrow \cdots V_n \rightarrow C_2$. In any case, any vertex in $p$ is an ancestor of either $C_1$ or $C_2$. See Fig.~\ref{figure:11} for an example. $D\leftrightcircle E$ is an unbridged path relative to $\mf{V}'=\{C_1,C_2\}$ due to $C_1\in \ml{F}_{D}\backslash \ml{F}_E$ and $C_2\in \ml{F}_E\backslash \ml{F}_D$. If we transform all the circles in $C_1\leftarrowcircle D\leftrightcircle E\rightarrowcircle C_2$ without generating unshielded colliders, $D$ and $E$ must be ancestors of either $C_1$ or $C_2$.
\end{myRem}
Next, we present the orientation rule $\ml{R}_{12}$ inspired by the property above, and then the orientation rule $\ml{R}_{13}$ as a supplement of the case of $\ml{R}_{12}$ when some vertex in the unbridged path \emph{has been} an ancestor of $\mf{V}'$ in $H$.\footnote{Recently, $\ml{R}_{13}$ was independently identified by~\citet{venkateswaran2024towards}, along with some other fundamental results.}
\begin{enumerate}
\item[$\ml{R}_{12}$] Suppose an edge $A\leftcircleast B$ in a PMG $H$. Let $\mf{S}_A=\{V\in\mf{V}(H)|V\rightarrowast A\mbox{ in }H\}\cup\{A\}$. If there is an unbridged path $\langle K_1,\cdots,K_m\rangle$ relative to $\mf{S}_A$ in $H[-\mf{S}_A]$ and for every vertex $K_i\in \{K_1,\cdots,K_m\}$, there exists an uncovered possible directed path $\langle A,B,\cdots,K_i \rangle$ ($B\neq K_i$), then orient $A\leftcircleast B$ as $A\leftarrowast B$.
\item[$\ml{R}_{13}$] Suppose an edge $A\leftcircleast B$ in a PMG $H$. Let $\mf{S}_A=\{V\in\mf{V}(H)|V\rightarrowast A\mbox{ in }H\}\cup\{A\}$. If there is an uncovered possible directed path $\langle A,B,\cdots,K\rangle$ in $H$, where $K\in \mbox{\normalfont Anc}(\mf{S}_A,H)$, then orient $A\leftcircleast B$ as $A\leftarrowast B$.
\end{enumerate}

We present two examples for $\ml{R}_{12}$ and $\ml{R}_{13}$ in Fig.~\ref{figure:1}. Consider PAG in Fig.~\ref{figure:11} and BK $C_1\rightarrowast A\leftarrowast$ $C_2$ in Fig.~\ref{figure:12}. See $\ml{R}_{12}$, $\mf{S}_A=\{C_1,C_2,A\}$, and there exist uncovered possible directed paths $p_1=\langle A,B,D\rangle$ and $p_2=\langle A,B,E\rangle$ from $A$ to $D$ and $E$, respectively. $D\leftrightcircle E$ is unbridged relative to $\mf{S}_A$. Hence, the edge between $A$ and $B$ is transformed to $A\leftarrowcircle B$ by $\ml{R}_{12}$. This transformation is intuitive after knowing the property of unbridged path. In Fig.~\ref{figure:12}, no vertex in the unbridged path is an ancestor of $\mf{S}_A$, thus $D$ and $E$ are ancestors of $\mf{S}_A$ in any MAG consistent with $H$. Without loss of generality, suppose $D$ is ancestor of $C_1$. Due to the uncovered possible directed path $p_1$, if there is $A\rightarrow B$, $p_1$ can only be a directed path from $A$ to $D$, and thus there is an almost directed cycle $A\rightarrow B\rightarrow D\rightarrow C_1\leftrightarrow A$, which violates the ancestral property. For $\ml{R}_{13}$, see a PAG in Fig.~\ref{figure:13}. If BK is $C_2\leftrightarrow A$ and $D\rightarrow C_2$ as Fig.~\ref{figure:14}, there is $\mf{S}_A=\{A,C_2\}$ and an uncovered possible directed path $\langle A,B,D\rangle$ where $D\in \text{Anc}(\mf{S}_A,H)$. Hence we transform $A\leftrightcircle B$ to $A\leftarrowcircle B$ for the same reason as above.

We present Thm.~\ref{Thm:rule 12 all vertices} to imply the soundness of $\ml{R}_{12}$ and $\ml{R}_{13}$ to incorporate BK in the presence of latent confounders. Note previous rules~\citep{journals/ai/Zhang08,conf/aistats/Andrews20,journals/arXiv/WangQZ2022} cannot trigger these two transformations. Recently, ~\citet{venkateswaran2024towards} independently discover $\ml{R}_{13}$, along with some fundamental results, while $\ml{R}_{12}$ is not involved. 
\begin{algorithm}[tb]
  \KwIn{PMG $H$}
  \KwOut{Updated $H$}
  \While{there is an edge $A\leftcircleast B$ in $H$}{
  Obtain $\mf{S}_A=\{V\in \mf{V}(H)|V\rightarrowast A\mbox{ in }H\}\cup\{A\}$\;
  Obtain a set of vertices $\mf{D}$ defined as $V\in\mf{D}$ if and only if $V\in \mf{V}(H)\backslash \mf{S}_A$ and there is an uncovered path $p$ from $A$ to $V$ where $B$ is the vertex adjacent to $A$ in $p$\;
  \uIf{there exists $V\in\mf{D}$ such that $V\in{\normalfont \mbox{Anc}}(\mf{S}_A,H)$}{Transform $A\leftcircleast B$ to $A\leftarrowast B$}
  \Else{
  Obtain graph $H'$ based on $H$ by transforming $V\leftcircleast V'$ to $V\leftarrowast V'$, $\forall V\in \mf{D},\forall V'\in\mf{S}_A$\;
  Update the circle component in $H'[\mf{D}]$ as follows until no updates: for $V_i,V_j\in \mf{D}$, transform $V_i\leftrightcircle V_j$ into $V_i\rightarrow V_j$ if either of the two conditions holds (1) $\ml{F}_{V_i}\backslash \ml{F}_{V_j}\neq \emptyset$; or (2) there is a vertex $V_k\in \mf{D}$ such that there is $V_k\rightarrow V_i$ and $V_k$ is not adjacent to $V_j$, where $\ml{F}_V=\{V'\in \mf{S}_A |V'\rightcircleast V\mbox{ or }V'\rightarrowast V \mbox{ in }H\}$\;
  \lIf{there are new unshielded colliders in $H'$}{Transform $A\leftcircleast B$ to $A\leftarrowast B$ in $H$}}}
  \caption{Implementation of $\ml{R}_{12}$ and $\ml{R}_{13}$}
  \label{alg: R1213}
\end{algorithm}

\begin{myThm}
\label{Thm:rule 12 all vertices}
$\ml{R}_{12}$ and $\ml{R}_{13}$ are sound to incorporate BK.
\end{myThm}
One remaining issue is the implementation of $\ml{R}_{12}$ and $\ml{R}_{13}$. We provide Alg.~\ref{alg: R1213} to implement $\ml{R}_{12}$ and $\ml{R}_{13}$,\footnote{The main focus of our paper is not on implementation. It is possible that there can be a more efficient method to implement $\ml{R}_{12}$ and $\ml{R}_{13}$, which we leave for future work.} with theoretical guarantee in Prop.~\ref{prop:implementation soundess}. Essentially, the edges transformed on Line 5 of Alg.~\ref{alg: R1213} are triggered by $\ml{R}_{13}$, the edges transformed on Line 9 are triggered by $\ml{R}_{12}$. Line 8 involves detecting the presence of unbridged paths, during which new unshielded colliders are generated if such paths are found. See the proof of Prop.~\ref{alg: R1213} for the details. Suppose the number of edges in $H$ is $m$, the complexity of implementing Alg.~\ref{alg: R1213} is $\ml{O}(m^3)$, detailed in Appendix~\ref{sec:complexity of rules}. 
\begin{prop}
\label{prop:implementation soundess}
Given a PMG $H$, Alg.~\ref{alg: R1213} can transform all and only the edges that can be transformed by $\ml{R}_{12}$ or $\ml{R}_{13}$.
\end{prop}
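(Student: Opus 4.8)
The plan is to prove the two directions of the claim — ``all'' (completeness: every edge transformable by $\ml{R}_{12}$ or $\ml{R}_{13}$ is transformed by Alg.~\ref{alg: R1213}) and ``only'' (soundness of the algorithm: every edge the algorithm transforms is indeed transformable by $\ml{R}_{12}$ or $\ml{R}_{13}$) — separately, for each edge $A \leftcircleast B$ processed in the while-loop. Throughout I fix such an edge and the associated set $\mf{S}_A = \{V \in \mf{V}(H) \mid V \rightarrowast A \text{ in } H\} \cup \{A\}$, and I work with the set $\mf{D}$ of Line~3, which by construction collects exactly the endpoints $V \notin \mf{S}_A$ of uncovered paths from $A$ through $B$; note that for $\ml{R}_{12}$ and $\ml{R}_{13}$ the relevant objects are \emph{uncovered possible directed} paths $\langle A, B, \dots, K\rangle$, so a first lemma I would isolate is that a vertex $K$ lies on such a possible directed path iff $K \in \mf{D}$ \emph{and} $K$ is a possible descendant reachable via circles/tails-only along that uncovered path; I would want to argue that restricting attention to $\mf{D}$ loses nothing, because any uncovered path from $A$ through $B$ that is not possible-directed can be truncated at its first ``bad'' mark, and the ancestral structure inside $H[\mf{D}]$ after the transformations on Lines 6--7 records precisely which truncated endpoints remain possible ancestors of $\mf{S}_A$.

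For the $\ml{R}_{13}$ branch (Lines 4--5), completeness is nearly immediate: if some uncovered possible directed path $\langle A,B,\dots,K\rangle$ exists with $K \in \mathrm{Anc}(\mf{S}_A, H)$, then $K \in \mf{D}$ and $K \in \mathrm{Anc}(\mf{S}_A,H)$, so the \textbf{if} on Line~4 fires and the edge is transformed; conversely, if Line~4 fires there is $V \in \mf{D}$ with $V \in \mathrm{Anc}(\mf{S}_A,H)$, and I must produce from the uncovered path witnessing $V \in \mf{D}$ an uncovered \emph{possible directed} subpath ending at some vertex still in $\mathrm{Anc}(\mf{S}_A,H)$ — here I would walk along the uncovered path from $A$ until the first vertex that fails the possible-directed condition, observe that the directed portion into $\mf{S}_A$ (via $V \in \mathrm{Anc}(\mf{S}_A,H)$) forces an earlier vertex to already be an ancestor, and splice. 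This ``splicing'' argument, matching up the combinatorics of uncovered possible directed paths with ancestry in $H$, is the first place I expect real bookkeeping.

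The harder half is the $\ml{R}_{12}$ branch (Lines 6--9), where I must show that ``a new unshielded collider appears in $H'$'' is equivalent to ``there is an unbridged path relative to $\mf{S}_A$ in $H[-\mf{S}_A]$ all of whose vertices lie on uncovered possible directed paths $\langle A,B,\dots\rangle$.'' The forward direction uses Def.~\ref{def:unbridged path} directly: given such an unbridged path $p : V_0 \leftrightcircle \cdots \leftrightcircle V_n$, pick $C_1 \in \ml{F}_{V_0} \setminus \ml{F}_{V_1}$ and $C_2 \in \ml{F}_{V_n}\setminus \ml{F}_{V_{n-1}}$ (these are in $\mf{S}_A$); the transformations on Line~6 make $C_1 \rightarrowast V_0$ and $C_2 \rightarrowast V_n$ definite arrowheads, and then I must show the circle-component updates on Line~7 cannot orient $V_0 \leftrightcircle V_1$ as $V_0 \leftarrow V_1$ nor the path away from both endpoints in a way that destroys the collider — i.e. that at least one of $\langle C_1, V_0, V_1\rangle$ or $\langle V_{n-1}, V_n, C_2\rangle$ ends up an unshielded collider, using non-adjacency of $C_1$ to $V_1$ and $C_2$ to $V_{n-1}$ (the Remark's argument, which I can invoke). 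For the reverse direction, I assume Line~9 produces a new unshielded collider $U \leftrightast W \leftrightast U'$ in $H'$, trace which edges became arrowheads — either edges from $\mf{S}_A$ into $\mf{D}$ (Line~6) or circle edges inside $H[\mf{D}]$ oriented by Line~7 — and reconstruct a circle path in $H[-\mf{S}_A]$ whose two ends carry the ``$\ml{F}$-asymmetry'' of Def.~\ref{def:unbridged path}, then argue every vertex on it lies on an uncovered possible directed path from $A$ through $B$ because it sits in $\mf{D}$ and the circle-component of $H[\mf{D}]$ connects it back toward $B$.

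The main obstacle I anticipate is precisely this last correspondence: showing that the \emph{local, iterative} circle-orientation procedure of Line~7 (conditions (1) and (2)) faithfully detects the \emph{global, existential} condition ``$\exists$ unbridged path'' of $\ml{R}_{12}$ — in particular that it neither misses an unbridged path (completeness of the detection) nor hallucinates a new unshielded collider when no genuine unbridged path exists (soundness of the detection). I expect to need an invariant for the Line~7 loop of the form ``after each update, $V_i \rightarrow V_j$ in $H'[\mf{D}]$ only if, in every MAG consistent with $H$ in which no vertex of $\mf{D}$ is an ancestor of $\mf{S}_A$, the corresponding mark is an arrowhead at $V_j$,'' together with a termination/confluence argument that the order of updates does not matter. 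Once that invariant is in place, a new unshielded collider in $H'$ corresponds to an unavoidable contradiction with ancestrality under the ``no vertex of $\mf{D}$ is an ancestor of $\mf{S}_A$'' hypothesis, which is exactly the logical content of $\ml{R}_{12}$; and conversely the existence of an unbridged path forces, through the $\ml{F}$-asymmetry, that the loop cannot complete the circle component consistently, producing the collider. I would also need to double-check the edge case $B = K_i$ excluded in $\ml{R}_{12}$ and the degenerate unbridged path $n=1$, and confirm that the while-loop's re-processing of edges does not change which edges are ultimately transformed (a monotonicity/fixpoint remark).
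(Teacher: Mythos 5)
Your decomposition matches the paper's: treat Lines 4--5 as the $\ml{R}_{13}$ branch, and reduce the $\ml{R}_{12}$ branch to the equivalence between ``a new unshielded collider appears in $H'$'' and ``an unbridged path relative to $\mf{S}_A$ exists in $H[\mf{D}]$.'' Your forward direction is the paper's argument almost verbatim: pick $S_1\in\ml{F}_{V_0}\setminus\ml{F}_{V_1}$ and $S_2\in\ml{F}_{V_n}\setminus\ml{F}_{V_{n-1}}$, derive their non-adjacency to $V_1$ and $V_{n-1}$ from the fact that Line 5 did not fire (so no vertex of $\mf{D}$ is an ancestor of $\mf{S}_A$ in $H$), and conclude that the Line 8 update must create an unshielded collider on the resulting uncovered path. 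Where you diverge is the reverse direction: the paper does not construct a loop invariant for Line 7, but instead invokes the proof of Lemma 5 of \citet{conf/icml/WangQZ23} to pass from ``new unshielded collider in $H'$'' to ``$H[\mf{D}]$ is not bridged relative to $\mf{S}_A$,'' and then extracts an unbridged path from a minimal non-bridged circle path by locating the first index $j$ from the left with $\ml{F}_{V_j}\setminus\ml{F}_{V_{j+1}}\neq\emptyset$ and the first index $k$ from the right with $\ml{F}_{V_k}\setminus\ml{F}_{V_{k-1}}\neq\emptyset$, and showing $k\geq j+1$. Your invariant-plus-confluence plan is self-contained where the paper leans on prior work, but it is also the one step you have not carried out, and it is exactly the step the paper itself only sketches (``since the rigorous proof is somewhat tedious, we just show a proof sketch here''), so you have correctly located the crux rather than missed it. Your worry about Line 3 reading ``uncovered path'' rather than ``uncovered possible directed path'' is legitimate and is glossed over by the paper, whose proof asserts that Lines 2--5 ``totally follow'' the conditions of $\ml{R}_{13}$ and later treats every vertex of $\mf{D}$ as the endpoint of an uncovered possible directed path from $A$ through $B$; if you pursue your splicing argument, be aware the paper offers no help there.
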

Note $\ml{R}_{12}$ is quite different from the previous rules $\ml{R}_1-\ml{R}_{11}$, which are shown in Appendix~\ref{subsec:preliminary about rules}. These existing rules transform an edge based on just few edges or few paths. $\ml{R}_{12}$ is more complicated, since $\ml{R}_{12}$ considers not only the unbridged path, but also the large number of paths from $A$ in $\ml{R}_{12}$ to every vertex in the unbridged path, which form a sub-graph. The establishment of this rule implies the intrinsic hardness of causal identification from observational data and BK in the presence of latent variables.

Interestingly, we find that essentially, $\ml{R}_{12}$ and $\ml{R}_{13}$ are two \emph{generalizations} of $\ml{R}_3$ and $\ml{R}_2$. See $\ml{R}_2$ and $\ml{R}_{3}$ in Appendix~\ref{subsec:preliminary about rules}. Suppose a PMG $H$. Consider there is $E\rightarrow C\rightarrowast A$: $\ml{R}_2$ says if there is an edge $A\leftcircleast E$, then we orient it as $A\leftarrowast E$; while $\ml{R}_{13}$ says if there is an uncovered possible directed path from $A$ to $E$ beginning with $A\leftcircleast B$, then we orient $A\leftcircleast B$ as $A\leftarrowast B$. $\ml{R}_{13}$ generalizes an edge $A\leftcircleast E$ in $\ml{R}_2$ to an uncovered possible directed path from $A$ to $E$ beginning with $A\leftcircleast B$. $\ml{R}_{12}$ is also a generalization of $\ml{R}_3$. Consider there is an unshielded triple $C\rightarrowast A\leftarrowast D$ in a PMG $H$: $\ml{R}_3$ says if there is $C\rightcircleast B\leftcircleast D$ and an edge $A\leftcircleast B$, then we orient $A\leftcircleast B$ to $A\leftarrowast B$. Here the reason for the transformation is, although $B$ is not an ancestor of $\{C,D\}$ in $H$, $B$ must be an ancestor of either $C$ or $D$ in any MAG consistent with $H$. That is, the vertex $B$ here has the same property as the unbridged path we discuss above. Consider $C\rightarrowast A$ and $D\rightarrowast A$ in $\ml{R}_3$, if there is $A\rightarrow B$, either $A,B,C$ or $A,B,D$ will form a directed or almost directed cycle, which violates the ancestral property. While $\ml{R}_{12}$ says if there is an unbridged path $p$ relative to $\{A,C,D\}$ and $B$ is the vertex adjacent to $A$ in the uncovered path from $A$ to every vertex in $p$, then we orient $A\leftcircleast B$ to $A\leftarrowast B$. $\ml{R}_{13}$ generalizes the one vertex $B$ in $\ml{R}_3$ to a subgraph induced by $B$ and all the vertices $V$ such that there is an uncovered possible directed path from $A$ to $V$ beginning with $A\leftcircleast B$.
\section{Application on Set Determination}
\label{sec:proposed method}
In this section, we demonstrate the applicability of the rules proposed in Sec.~\ref{sec:proposed rule} to causality tasks that rely solely on observational data. Specifically, we focus on \emph{set determination} task in the presence of latent confounders, \textit{i.e.}, determining the set of possible causal effects of vertex $X$ on vertex $Y$ with observational data. We will present a rule-based method by introducing the proposed rules into the state-of-the-art method PAGcauses~\citep{conf/icml/WangQZ23}, which takes a substantial improvement on efficiency. 
\begin{figure*}[tb]
  \centering
  \subfigure[$\ml{P}$]{\label{figure:21}
  \includegraphics[height=0.16\linewidth]{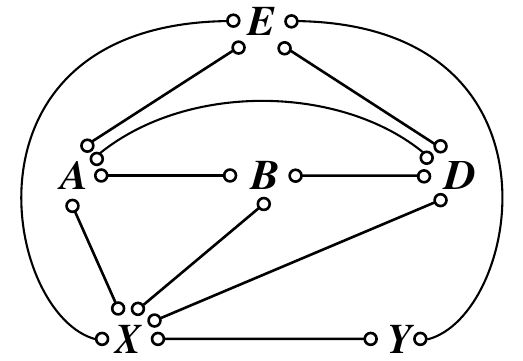}} \hspace{-0.15cm}
  \subfigure[$\mb{M}_1$]{\label{figure:22}
  \includegraphics[height=0.16\linewidth]{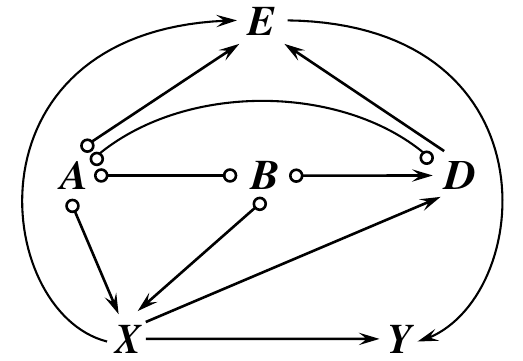}} \hspace{-0.15cm}
  \subfigure[$\mb{M}_2$]{\label{figure:23}
  \includegraphics[height=0.16\linewidth]{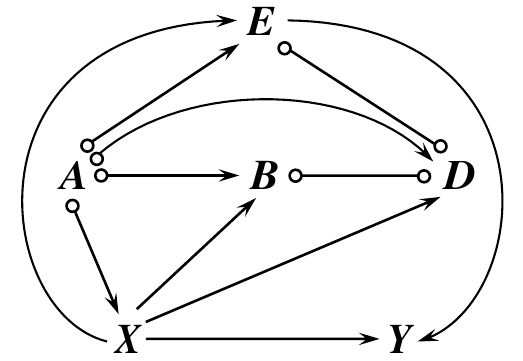}} \hspace{-0.15cm}
  \subfigure[$\mb{M}'$]{\label{figure:24}
  \includegraphics[height=0.16\linewidth]{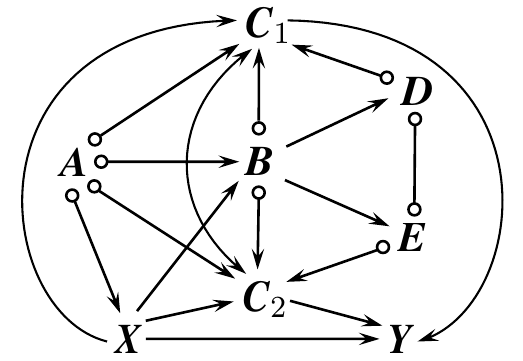}} \hspace{-0.15cm}
  \caption{Fig.~\ref{figure:21} shows a PAG $\ml{P}$. In the first step of PAGcauses, they obtain the maximal local MAG by introducing the local transformation of $X$ and updating the graph with their proposed orientation rules. $\mb{M}_1$ and $\mb{M}_2$ in Fig.~\ref{figure:22} and~\ref{figure:23} are two examples of the maximal local MAG obtained after different local transformations of $X$. Fig.~\ref{figure:23} and~\ref{figure:24} show two examples for implementing Alg.~\ref{alg:get S} given $\mf{W}=\emptyset$, where $\mf{S}=\{E\}$ and $\mf{S}=\{C_1,C_2,B\}$ are returned, respectively.}
  \label{figure:2} 
  \end{figure*}

In set determination task, we have a PAG $\ml{P}$ identified with observational data~\citep{books/spirtes2000causation}. Suppose we are interested in the causal effect of $X$ on $Y$ henceforth. As many causal graphs are consistent with $\ml{P}$ and possibly associated with different causal effects, the target of set determination is to determine the set of \emph{possible causal effects}, consisting of all the causal effects in the DAGs represented by the MAGs in the MEC represented by $\ml{P}$. Following previous studies~\citep{journals/2009estimating,journal/malinsky2016,conf/uai/FangH20,conf/icml/WangQZ23}, we only consider the possible causal effects that are estimated by covariate adjustment, and only focus on \emph{finding all the adjustment sets} for estimating the possible causal effects. Note there possibly exist some DAGs with causal effects unidentifiable by adjustment, all the relevant methods cannot output these effects as it is beyond the ability of covariate adjustment.

We start by revisiting PAGcauses in Sec.~\ref{subsec: reviewing latest results}. Note PAGcauses is a complicated method, which is hard to introduce in limited space. Hence, we just show the core idea of PAGcauses and the parts where $\ml{R}_{12}$ can take an improvement. The details are given in Appendix~\ref{sec:More Related Results}. Then, we propose the theoretical supports for set determination based on $\ml{R}_{12}$ in Sec.~\ref{subsec:untilizing proposed rules}, and present the rule-based algorithm in Sec.~\ref{subsec:method}.

\subsection{Revisiting Latest Results}
\label{subsec: reviewing latest results}
\emph{PAGcauses} is a two-step method. See PAG $\ml{P}$ in Fig.~\ref{figure:21} for an example. As there are some circles at $X$, the first step is to transform all the circles at $X$. This step is similar to the classical method IDA which applies for set determination without latent variables~\citep{journals/2009estimating}. For each possible transformations of (the circles at) $X$, PAGcauses introduces a graphical characterization to evaluate the \emph{validity} of each transformation of $X$, that is, whether there is a MAG consistent with the local transformation in the MEC represented by $\ml{P}$. For each valid transformation, they obtain an updated PMG $\mb{M}$ called \emph{maximal local MAG} with the proposed sound and complete orientation rules. $\mb{M}_1$ and $\mb{M}_2$ in Fig.~\ref{figure:22},\ref{figure:23} are two examples of maximal local MAGs, obtained from different local transformation of $X$. 

In the presence of latent variables, obtaining the maximal local MAG $\mb{M}$ by incorporating valid local transformation of $X$ is not sufficient for determining the causal effect of $X$ on $Y$ in all the MAGs consistent with $\mb{M}$, which is quite different from the case absence of latent confounders~\citep{journals/2009estimating}. Hence, they have to further consider the uncertain structures in $\mb{M}$. However, enumerating all the MAGs and then finding adjustment sets take a super-exponential complexity, which is evidently infeasible. Hence, in the second step of \emph{PAGcauses}, for each $\mb{M}$, \citet{conf/icml/WangQZ23} established the graphical characterization in Prop.~\ref{prop:existencecritical} to \emph{directly} evaluate whether each given set of vertices $\mf{W}$ can be an adjustment set in some MAG consistent with $\mb{M}$. With this result, instead of enumerating the super-exponential number of MAGs, PAGcauses only needs to enumerate exponential number of vertices set $\mf{W}$ whose space is super-exponentially less than that of MAGs, and evaluates whether each $\mf{W}$ is an adjustment set in some MAG by Prop.~\ref{prop:existencecritical}. Before presenting Prop.~\ref{prop:existencecritical}, we first show two definitions. The definition of \emph{bridged} in Prop.~\ref{prop:existencecritical} is presented in Appendix~\ref{sec:More Related Results}.
\begin{myDef}
\label{def:T1 and T2}
Given a set of vertices $\mf{W}$ in a maximal local MAG $\mb{M}$, define a set of vertices $\bar{\mf{W}}$ as $V\in \bar{\mf{W}}$ if and only if $V\in \text{PossAn}(Y,\mb{M})\backslash \mf{W}$ and there exists a collider path beginning with an arrowhead from $X$ to $V$ where each non-endpoint vertex belongs to $\mf{W}$. Denote ${\normalfont \mbox{Anc}}(Y\cup \mf{W},\mb{M})\cap[{\normalfont \mbox{PossDe}}(\bar{\mf{W}},\mb{M})\backslash \bar{\mf{W}}]$ by $\mf{S}_{min}$ and ${\normalfont \mbox{PossAn}}(Y\cup \mf{W},\mb{M})\cap[{\normalfont \mbox{PossDe}}(\bar{\mf{W}},\mb{M})\backslash \bar{\mf{W}}]$ by $\mf{S}_{max}$. $\mf{S}$ is a \emph{block set} if $\mf{S}_{min} \subseteq \mf{S} \subseteq \mf{S}_{max}$.
\end{myDef}
\begin{myDef}
\label{def: possible adjust}
In a maximal local MAG $\mb{M}$, $\mf{W}$ is a \emph{potential adjustment set} if 
\begin{enumerate}
\item[(1)] $\forall V\in \mf{W}$, there is a collider path $X\leftrightarrow \cdots \leftarrowast V$ such that each non-endpoint belongs to $\mf{W}$, and there is a possible directed path from $V$ to $Y$ that does not go through the vertices in $\bar{\mf{W}}$;
\item[(2)] $\mf{W}\cap {\normalfont \mbox{PossDe}}(X,\mb{M})=\emptyset$;
\item[(3)] $\bar{\mf{W}}\cap {\normalfont \mbox{Anc}}(Y\cup \mf{W},\mb{M})=\emptyset$.
\end{enumerate}
\end{myDef}
\begin{prop}
\label{prop:existencecritical}
Given a maximal local MAG $\mb{M}$, for any potential adjustment set $\mf{W}$, there exists a MAG $\ml{M}$ consistent with $\mb{M}$ such that $\mf{W}$ is an adjustment set in $\ml{M}$ if there exists a block set $\mf{S}$ such that
\begin{enumerate}
\item[(1)] $\mbox{\normalfont PossDe}(\bar{\mf{W}},\mb{M}[-\mf{S}])\cap \mbox{\normalfont Pa}(\mf{S},\mb{M})=\emptyset$;
\item[(2)] $\mb{M}[\mf{S}_{V}]$ is a complete graph for any $V\in \bar{\mf{W}}$, where $\mf{S}_{V}=\{V'\in\mf{S}|V\leftcircleast V'\mbox{ in }\mb{M}\}$; 
\item[(3)] $\mb{M}[\mbox{\normalfont PossDe}(\bar{\mf{W}},\mb{M}[-\mf{S}])]$ is bridged relative to $\mf{S}$ in $\mb{M}$.
\end{enumerate}
\end{prop}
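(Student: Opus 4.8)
<br>

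\textbf{Proof proposal.} Since the statement only asserts the ``if'' direction, it suffices to construct a single MAG $\ml{M}$ consistent with $\mb{M}$ in which $\mf{W}$ meets the graphical adjustment criterion for MAGs relative to $(X,Y)$: (a) $\mf{W}$ contains no vertex lying on, or descending from a vertex on, a proper possibly-directed path from $X$ to $Y$; and (b) $\mf{W}$ $m$-separates $X$ and $Y$ along every proper non-causal path. The plan is to let the block set $\mf{S}$ dictate how the remaining circles of $\mb{M}$ become arrowheads/tails, the intended meaning of $\mf{S}_{min}\subseteq\mf{S}\subseteq\mf{S}_{max}$ being that the vertices of $\mf{S}$ are committed to being ancestors of $Y\cup\mf{W}$ in $\ml{M}$, whereas every vertex of $\bar{\mf{W}}$ --- and more generally of $\mf{D}:=\mbox{\normalfont PossDe}(\bar{\mf{W}},\mb{M}[-\mf{S}])$ --- is committed to being a non-ancestor of $\mf{S}$, hence of $Y\cup\mf{W}$. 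Concretely I would first orient, for every $V\in\bar{\mf{W}}$ and every $V'\in\mf{S}_V$, the circle at the $V$-end of $V\leftcircleast V'$ into an arrowhead (so $V$ cannot become an ancestor of $\mf{S}$ through that edge), and more generally orient all $\mf{S}$--$\mf{D}$ edges so no $\mf{D}$-vertex is made an ancestor of $\mf{S}$; then orient the circle component of $\mb{M}[\mf{D}]$ into a collider-free fragment in which no $\mf{D}$-vertex is an ancestor of $\mf{S}$ --- possible precisely because, by condition~(3), $\mb{M}[\mf{D}]$ is \emph{bridged} relative to $\mf{S}$, i.e.\ contains no unbridged path relative to $\mf{S}$ in the sense of Def.~\ref{def:unbridged path}, which is the dual of the forcing phenomenon in the Remark after that definition; finally complete the resulting PMG to a MAG using the sound and complete orientation rules and orienting any leftover circle components without new unshielded colliders. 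Call the result $\ml{M}$.

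\emph{Step: $\ml{M}$ is a MAG consistent with $\mb{M}$.} This is where the three hypotheses are consumed. Condition~(2), the completeness of $\mb{M}[\mf{S}_V]$ for $V\in\bar{\mf{W}}$, guarantees that the arrowheads introduced at $V$ into the vertices of $\mf{S}_V$ do not create an unshielded collider among the $\mf{S}$-endpoints, so the first orientation is admissible; condition~(3) guarantees the circle component of $\mb{M}[\mf{D}]$ admits the desired collider-free orientation; and condition~(1), $\mbox{\normalfont PossDe}(\bar{\mf{W}},\mb{M}[-\mf{S}])\cap\mbox{\normalfont Pa}(\mf{S},\mb{M})=\emptyset$, forbids a directed route from $\mf{D}$ back into $\mf{S}$ and so prevents any directed or almost directed cycle through $\mf{S}$, keeping $\ml{M}$ ancestral. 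Combining this with the fact that $\mb{M}$ is a maximal local MAG --- so that the orientation rules preserve the skeleton, the unshielded colliders, and the discriminating-path colliders --- one concludes that $\ml{M}$ is a maximal ancestral graph in the MEC of $\ml{P}$, hence consistent with $\mb{M}$.

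\emph{Step: $\mf{W}$ is an adjustment set in $\ml{M}$.} For (a): clause~(2) of Def.~\ref{def: possible adjust} gives $\mf{W}\cap\mbox{\normalfont PossDe}(X,\mb{M})=\emptyset$, and no MAG consistent with $\mb{M}$ can turn a non-possible-descendant of $X$ into a descendant of $X$; since the forbidden set is contained in the descendants of $X$, the amenability/forbidden-set condition holds. For (b): let $p$ be a proper non-causal path from $X$ to $Y$ in $\ml{M}$, and suppose toward a contradiction that it is $m$-connecting given $\mf{W}$, so every non-collider of $p$ avoids $\mf{W}$ while every collider of $p$ is an ancestor of $\mf{W}$ in $\ml{M}$. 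Walking $p$ from $X$ and using clause~(1) of Def.~\ref{def: possible adjust} --- each $V\in\mf{W}$ sits on a collider path out of $X$ with interior in $\mf{W}$ and a possibly-directed path to $Y$ avoiding $\bar{\mf{W}}$ --- one shows the open status of $p$ ultimately forces some collider of $p$ to lie in $\bar{\mf{W}}$; but clause~(3) of Def.~\ref{def: possible adjust} gives $\bar{\mf{W}}\cap\mbox{\normalfont Anc}(Y\cup\mf{W},\mb{M})=\emptyset$, and the orientation above preserved this property in $\ml{M}$, so that collider is not an ancestor of $\mf{W}$ --- a contradiction. Hence $\mf{W}$ blocks every proper non-causal path and is a valid adjustment set.

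\emph{Main obstacle.} The crux is the coupling of the construction and the consistency check: the orientation of the circle component of $\mb{M}[\mf{D}]$ and of the $\mf{S}$--$\mf{D}$ cut must simultaneously (i) introduce no new unshielded or discriminating-path collider, so $\ml{M}$ stays in the MEC, and (ii) keep every vertex of $\bar{\mf{W}}$ out of $\mbox{\normalfont Anc}(Y\cup\mf{W},\ml{M})$, which is what the blocking argument relies on. That such an orientation exists is exactly the content of the ``bridged'' hypothesis together with the completeness hypothesis; making the accompanying case analysis rigorous --- especially handling discriminating paths and the interaction with the leftover circle components outside $\mf{D}$ --- is the part I expect to require the most care, and I would isolate it as a separate lemma before assembling the three steps.
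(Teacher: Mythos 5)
First, a point of reference: this paper does not prove Prop.~\ref{prop:existencecritical} at all --- it is imported from \citet{conf/icml/WangQZ23}, and the appendix only reproduces the MAG-construction procedure (Alg.~\ref{algo:maximal localMAG to MAG}) that the original proof uses. Your overall plan --- write $\mf{D}:=\mbox{\normalfont PossDe}(\bar{\mf{W}},\mb{M}[-\mf{S}])$, orient the $\mf{S}$--$\mf{D}$ cut with arrowheads on the $\mf{D}$ side, orient the circle component of $\mb{M}[\mf{D}]$ so that no unshielded collider is created and no vertex of $\mf{D}$ becomes an ancestor of $\mf{S}$, then close off the remaining circles --- is exactly the shape of that construction, so the strategy is the right one.

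As written, however, the proposal has two genuine gaps, both located where you yourself flag difficulty. (i) The claim that condition~(3) (``bridged'') guarantees an orientation of the circle component of $\mb{M}[\mf{D}]$ that is \emph{simultaneously} collider-free relative to the $\mf{S}$-neighbourhoods and keeps every vertex of $\mf{D}$ out of $\mbox{\normalfont Anc}(\mf{S},\ml{M})$ is essentially the whole content of the proposition; it does not follow formally from the definition of bridged but requires the explicit update procedure of Line~2 of Alg.~\ref{algo:maximal localMAG to MAG} together with a soundness argument that this procedure terminates in an acyclic, collider-free configuration (the same machinery invoked via ``Lemma 5'' in the proof of Prop.~\ref{prop:implementation soundess}). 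Saying the orientation exists ``precisely because $\mb{M}[\mf{D}]$ is bridged'' restates the theorem rather than proving it. (ii) Your verification that $\mf{W}$ is an adjustment set attacks the generalized adjustment criterion directly, and the pivotal step (``walking $p$ from $X$ \ldots\ one shows the open status of $p$ ultimately forces some collider of $p$ to lie in $\bar{\mf{W}}$'') is exactly the nontrivial combinatorial claim; a proper non-causal $m$-connecting path need not pass through $\bar{\mf{W}}$ in any obvious way, and ruling this out requires the full D-SEP analysis. The intended, cleaner route is to show that in the constructed $\ml{M}$ one has ${\normalfont \mbox{D-SEP}}(X,Y,\ml{M}_{\utilde{X}})=\mf{W}$ and ${\normalfont \mbox{D-SEP}}(X,Y,\ml{M}_{\utilde{X}})\cap{\normalfont \mbox{De}}(X,\ml{M})=\emptyset$ --- which is what $\bar{\mf{W}}$ in Def.~\ref{def:T1 and T2} and clauses~(2)--(3) of Def.~\ref{def: possible adjust} are engineered for --- and then invoke Prop.~\ref{thm:dag and mag}. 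Until those two steps are supplied, what you have is a correct outline of the known proof, not a proof.
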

Prop.~\ref{prop:existencecritical} provides a sufficient condition for the existence of MAGs consistent with $\mb{M}$ such that a given set $\mf{W}$ is an adjustment set. When using this condition in PAGcauses, it includes (1) enumerate every block set $\mf{S}$; and (2) given each $\mf{S}$, evaluate the three conditions. The complexity of (2) is $\ml{O}(d^3)$, where $d$ denotes the number of vertices. The intuition of the sufficient condition is, they try to construct a MAG $\ml{M}$ such that the adjustment set is $\mf{W}$. To ensure it, they have to restrict that some vertices (characterized by set $\bar{\mf{W}}$) are not ancestors of $\mf{W}\cup\{Y\}$ in $\ml{M}$. Hence, it is necessary to introduce some additional arrowheads to \emph{block} all the possible directed paths from $\bar{\mf{W}}$ to $\mf{W}\cup\{Y\}$. For this purpose, intuitively, the block set $\mf{S}$ in Def.~\ref{def:T1 and T2} is introduced to characterize the position to introduce arrowheads to \emph{block} the possibly directed paths (See Alg.~\ref{algo:maximal localMAG to MAG} in Appendix~\ref{sec:More Related Results} for MAG construction algorithm). There are exponential number (with respect to $d$) of ways to introduce arrowheads to achieve it, hence there are exponential number of block sets in Def.~\ref{def:T1 and T2}. Thus, given $\mf{W}$, evaluating the existence of MAGs in Prop.~\ref{prop:existencecritical} needs to enumerate every block set $\mf{S}$. Essentially, the process of enumerating $\mf{S}$ and evaluating the three conditions is to determine the existence of a kind of way to introduce arrowheads such that a MAG can be constructed with $\mf{W}$ being the adjustment set.
\subsection{Utilizing Proposed Rules}
\label{subsec:untilizing proposed rules}
As discussed in Sec.~\ref{subsec: reviewing latest results}, given a potential adjustment set $\mf{W}$, an exponential complexity of enumerating all the block sets is involved in using Prop.~\ref{prop:existencecritical}, because there are exponential number of possible ways to block all the possible directed paths from $\bar{\mf{W}}$ to $\mf{W}\cup\{Y\}$ such that no vertex in $\bar{\mf{W}}$ is an ancestor of $\mf{W}\cup\{Y\}$ in the constructed MAG. In the following, we show that enumerating block sets can be avoided by introducing $\ml{R}_{12}$, thus can circumvent the possibly exponential computational burden here.

Note the final target is to find a way to introduce arrowheads to prevent $\bar{\mf{W}}$ being ancestors of $\mf{W}\cup\{Y\}$. At first we determine a set of vertices $\mf{S}_0$ in Def.~\ref{def:S_V for V in barW}.
\begin{myDef}
\label{def:S_V for V in barW}
Given a maximal local MAG $\mb{M}$ and a potential adjustment set $\mf{W}$, we define $\mf{S}_0$ as $V'\in \mf{S}_0$ if and only if there exists a vertex $V\in \bar{\mf{W}}$ and there exists a minimal possible directed path $p$ from $V$ to ${\normalfont \mbox{Anc}}(\mf{W}\cup\{Y\},\mb{M})$ such that $V'$ is the vertex adjacent to $V$ in $p$ and each non-endpoint in $p$ does not belong to $\bar{\mf{W}}$.
\end{myDef}
It is evident that if we want to obtain a MAG $\ml{M}$ consistent with $\mb{M}$ such that $\bar{\mf{W}}$ are not ancestors of $\mf{W}\cup\{Y\}$ in $\ml{M}$, all the edges $V\leftcircleast S,V\in \bar{\mf{W}},S\in \mf{S}_0$ in $\mb{M}$ must be transformed to $V\leftarrowast S$, for otherwise there will be a directed path from $V$ to $\mf{W}\cup\{Y\}$. Initialize $\mf{S}=\mf{S}_0$, characterizing the positions to introduce arrowheads, similar to the block set in Prop.~\ref{prop:existencecritical}. The transformed edges above are hypothesis BK, and can possibly trigger $\ml{R}_{12}$ to further update the graph. The updates triggered by $\ml{R}_{12}$ help us enlarge the set $\mf{S}$.

Following the idea above, we present Alg.~\ref{alg:get S}. Note when we propose $\ml{R}_{12}$, the premise is that BK is correct. However, the premise does not necessarily hold here. Hence, in the process of triggering $\ml{R}_{12}$, we need to evaluate whether the hypothesis BK is valid, which is achieved on Line 4 and 12. When $\ml{R}_{12}$ can no longer be triggered (Line 8), roughly speaking, if there is not an unbridged path relative to $\mf{S}$, $\mf{S}$ is a block set that satisfies the three conditions of Prop.~\ref{prop:existencecritical}.
\begin{algorithm}[tb]
  \KwIn{Maximal local MAG $\mb{M}$, $X$, $Y$, $\mf{W}$}
  \KwOut{$\mf{S}$}
  $\mf{S}$ is initialized as $\mf{S}_0$ in Def.~\ref{def:S_V for V in barW}, and $\bar{\mf{W}}$ is determined as Def.~\ref{def:T1 and T2}\;
  $\mf{T}={\normalfont \mbox{PossDe}}(\bar{\mf{W}},\mb{M}[-\mf{S}])\backslash \bar{\mf{W}}$\;
  \While{1}{
  \lIf{$\mb{M}[\mf{S}_V]$ is not a complete graph for some $V\in \bar{\mf{W}}$, where $\mf{S}_V=\{V'\in \mf{S}|V\leftcircleast V'\mbox{or }V\leftarrowast V'\mbox{ in }\mb{M}\}$; or ${\normalfont \mbox{PossDe}}(\bar{\mf{W}},\mb{M}[-\mf{S}])\cap {\normalfont \mbox{Pa}}(\mf{S},\mb{M})\not=\emptyset$}{\Return ``No''}
  Update $\mb{M}$ by transforming $V\leftcircleast S$ to $V\leftarrowast S$ for any $V\in \bar{\mf{W}}$ and $S\in \mf{S}$\;
  \uIf{an edge $A\leftarrowast B$ can be transformed by $\ml{R}_{12}$ in $\mb{M}$}{$\mf{S}=\mf{S}\cup ({\normalfont \mbox{Anc}}(B,\mb{M})\cap \mf{T})$}
  \Else{\uIf{there is not an unbridged path relative to $\mf{S}$ in $\mb{M}[{\normalfont \mbox{PossDe}}(\bar{\mf{W}},\mb{M}[-\mf{S}])]$}{\Return $\mf{S}$}
  \Else{\Return ``No''}}}
  \caption{Updating $\mf{S}$}
  \label{alg:get S}
\end{algorithm}

Then, we present relevant theoretical guarantees for Alg.~\ref{alg:get S}. Thm.~\ref{prop:existencecritical rules} implies that given a maximal local MAG $\mb{M}$ and a potential adjustment set $\mf{W}$, if we can obtain a set by Alg.~\ref{alg:get S}, then there is a MAG $\ml{M}$ consistent with $\mb{M}$ such that $\mf{W}$ is an adjustment set in $\ml{M}$. According to Thm.~\ref{prop:existencecritical rules}, whether Alg.~\ref{alg:get S} outputs a set of vertices or ``No'' is an indicator of whether the input $\mf{W}$ is an adjustment set in some MAG or not. Hence, by executing Alg.~\ref{alg:get S} for each potential adjustment set $\mf{W}$, we can find a set of adjustment sets in the MAGs consistent with $\mb{M}$. To further ensure that we can estimate all the causal effects identifiable by covariate adjustment, we provide Thm.~\ref{thm:existenceDSEP rules}, to indicate that if there exists a MAG $\ml{M}$ consistent with $\mb{M}$ such that the causal effect is identifiable by covariate adjustment, then there is an adjustment set in $\ml{M}$ being a potential adjustment set such that Alg.~\ref{alg:get S} can return a set of vertices. It ensures that via using Alg.~\ref{alg:get S} for each potential adjustment set $\mf{W}$, we can estimate all the possible causal effects that are identifiable by adjustment.
\begin{myThm}
\label{prop:existencecritical rules}
Given a maximal local MAG $\mb{M}$, for any potential adjustment set $\mf{W}$, if Alg.~\ref{alg:get S} could return a set of vertices $\mf{S}$, then there exists a MAG $\ml{M}$ consistent with $\mb{M}$ such that $\mf{W}$ is an adjustment set in $\ml{M}$.
\end{myThm}
\begin{myThm}
\label{thm:existenceDSEP rules}
Given a maximal local MAG $\mb{M}$, suppose a MAG $\ml{M}$ consistent with $\mb{M}$ such that there exists an adjustment set relative to $(X,Y)$. Let $\mf{W}$ be ${\normalfont \mbox{D-SEP}}(X,Y,\ml{M}_{\utilde{X}})$ as Def.~\ref{def:d-sep}. Then $\mf{W}$ is a potential adjustment set in $\mb{M}$ and Alg.~\ref{alg:get S} can return a set of vertices $\mf{S}$ given $\mf{W}$ as the input.
\end{myThm}
\begin{myRem}
There are at most $d(d-1)/2$ circles that could be transformed on Line 6 of Alg.~\ref{alg:get S}, where $d$ denotes the number of vertices. And the transformation on Line 6 is a necessary condition for entering the next round of loop. Hence, the number of loop in Alg.~\ref{alg:get S} is at most $\ml{O}(d^2)$. For the other parts in Alg~\ref{alg:get S}, the complexity is at most $\ml{O}(d^3)$. Hence Alg.~\ref{alg:get S} can be implemented in polynomial time.
\end{myRem}
\begin{myRem}
One may wonder given a maximal local MAG $\mb{M}$ and a potential adjustment set $\mf{W}$, whether we could determine whether $\mf{W}$ can be an adjustment set in some MAG consistent with $\mb{M}$ by judging the three conditions in Prop.~\ref{prop:existencecritical} with a prescribed $\mf{S}$, such as $\mf{S}=\mf{S}_{min}$ or $\mf{S}=\mf{S}_{max}$ in Def.~\ref{def:T1 and T2}. It is infeasible. Consider $\mb{M}_2$ in Fig.~\ref{figure:23} and $\mf{W}=\emptyset$, the three conditions (in Prop.~\ref{prop:existencecritical}) hold when $\mf{S}=\mf{S}_{min}=\{E\}$, but do not hold when $\mf{S}=\mf{S}_{max}=\{B,D,E\}$. While consider $\mb{M}'$ in Fig.~\ref{figure:24} and $\mf{W}=\emptyset$, the three conditions do not hold when $\mf{S}=\mf{S}_{min}=\{C_1,C_2\}$, but hold when $\mf{S}=\{C_1,C_2,B\}$. Hence, it is not direct that which block set can satisfy the three conditions. Intuitively, the benefit taken by utilizing $\ml{R}_{12}$ is that it implies which vertex should be added into $\mf{S}$, instead of enumerating all block sets as Prop.~\ref{prop:existencecritical}. For example, when using Alg.~\ref{alg:get S} for $\mb{M}_2$ in Fig.~\ref{figure:23} given $\mf{W}=\emptyset$, there is $\mf{S}_0=\{E\}$ and no unbridged paths, thus $\mf{S}=\{E\}$ is returned; while for $\mb{M}'$ in Fig.~\ref{figure:24} given $\mf{W}=\emptyset$, there is $\mf{S}_0=\{C_1,C_2\}$ and an unbridged path $D\leftrightcircle E$, thus $\mf{S}=\{C_1,C_2,B\}$ is returned.
\end{myRem}
\subsection{The Algorithm for Set Determination}
\label{subsec:method}
In this part, we present the improved algorithm for set determination, through utilizing the proposed rules. Given a PAG $\ml{P}$, we can obtain several maximal local MAGs $\mb{M}$ based on different local transformations of $X$. Then in each $\mb{M}$, a direct method is to find each subset $\mf{W}\subseteq \mf{V}(\mb{M})\backslash \{X,Y\}$, and then evaluate whether it is a potential adjustment set. If it is a potential adjustment set, we further evaluate whether a set of vertices can be returned by Alg.~\ref{alg:get S}. By this direct method, we can find all the adjustment sets according to Thm.~\ref{prop:existencecritical rules} and Thm.~\ref{thm:existenceDSEP rules}.

However, the method above is still somewhat inefficient, because the enumeration of all the subsets of $\mf{V}(\mb{M})\backslash \{X,Y\}$ is not always necessary. In fact, given a maximal local MAG, some vertices can be determined to belong to the adjustment set ${\normalfont \mbox{D-SEP}}(X,Y,\ml{M}_{\utilde{X}})$ in any MAG $\ml{M}$ (See Appendix~\ref{subsec:preliminary about causal effect} for ${\normalfont \mbox{D-SEP}}(X,Y,\ml{M}_{\utilde{X}})$). In the following, we present Def.~\ref{def:dd-sep} to characterize these vertices, and show ${\normalfont \mbox{DD-SEP}}(X,Y,\mb{M}_{\utilde{X}})\subseteq {\normalfont \mbox{D-SEP}}(X,Y,\ml{M}_{\utilde{X}})$ in Prop.~\ref{prop: DD-SEP}. 
\begin{myDef}[${\normalfont \mbox{DD-SEP}}(X,Y,\mb{M}_{\utilde{X}})$]
\label{def:dd-sep}
Let $\mb{M}$ be a maximal local MAG. $V\in {\normalfont \mbox{DD-SEP}}(X,Y,\mb{M}_{\utilde{X}})$ if and only if there is a collider path $X\leftrightarrow V_1\leftrightarrow \cdots \leftrightarrow  V_{k-1} \leftarrowast V$ in $\mb{M}_{\utilde{X}}$, where $(1)$ $Y\in \text{PossDe}(X,\mb{M})$; $(2)$ $V_1,\cdots,V_{k-1}\in {\normalfont \mbox{DD-SEP}}(X,Y,\mb{M}_{\utilde{X}})$; $(3)$ $V\in \text{Anc}(Y,\mb{M})$ or the subgraph $\mb{M}[\ml{Q}_V]$ is not a complete graph, where $\ml{Q}_V=\{V'\in \text{Anc}(Y,\mb{M})\mid V\leftcircleast V'\mbox{ in }\mb{M}\}$.
\end{myDef}

\begin{algorithm}[tb]
  \KwIn{PAG $\ml{P}$, $X$, $Y$}
  \KwOut{$\widehat{\mbox{\normalfont AS}}(\ml{P})$\Comment{Adjustment sets in MAGs consistent with $\ml{P}$}}
  $\widehat{\mbox{\normalfont AS}}(\ml{P})= \emptyset$ \Comment{Record all the valid adjustment sets}\;
  \lIf{$X\not\in{\normalfont \mbox{PossAn}}(Y,\ml{P})$}{\Return No causal effects}
  \uIf{the conditions in Prop.~\ref{prop: improved generalized backdoor} are satisfied for $\ml{P}$}{\Return $\widehat{\mbox{\normalfont AS}}(\ml{P})\leftarrow \{{\normalfont \mbox{D-SEP}}(X,Y,\ml{P}_{\underline{X}})\}$\Comment{Prop.~\ref{prop: improved generalized backdoor}}}
  \For{each set $\mf{C}\subseteq \{V\mid V\rightcircleast X\mbox{ in }\ml{P}\}$}{
      \uIf{the three conditions in Prop.~\ref{thm:existence} are satisfied}{
      Obtain a maximal local MAG $\mb{M}$ based on $\ml{P}$ and $\mf{C}$\;
      Find all potential adjustment sets $\mf{W}_1,\mf{W}_2,\cdots,$ that contains ${\normalfont \mbox{DD-SEP}}(X,Y,\mb{M})$ given $\mb{M}$\;
      \For{each potential adjustment set $\mf{W}_i$}{
        Obtain $\mf{S}_0$ and $\bar{\mf{W}}$ as Def.~\ref{def:S_V for V in barW} and Def.~\ref{def:T1 and T2}\;
        \uIf{Alg.~\ref{alg:get S} can return a set of vertices given $\mf{W}_i$ and $\mb{M}$}{$\widehat{\mbox{\normalfont AS}}(\ml{P})\leftarrow \widehat{\mbox{\normalfont AS}}(\ml{P})\cup \{\mf{W}_i\}$\;}
      }}
  }
  \caption{PAGrules}
  \label{alg:obtain causal bounds with rules}
\end{algorithm}
\begin{prop}
\label{prop: DD-SEP}
Given a maximal local MAG $\mb{M}$, if $V\in {\normalfont \mbox{DD-SEP}}(X,Y,\mb{M}_{\utilde{X}})$, then $V\in {\normalfont \mbox{D-SEP}}(X,Y,\ml{M}_{\utilde{X}})$ in any MAG $\ml{M}$ consistent with $\mb{M}$ such that there exists an adjustment set relative to $(X,Y)$. 
\end{prop}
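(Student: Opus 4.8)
Fix a MAG $\ml{M}$ consistent with $\mb{M}$ in which an adjustment set relative to $(X,Y)$ exists, fix $V\in\text{DD-SEP}(X,Y,\mb{M}_{\utilde{X}})$, and let $p:X\leftrightarrow V_1\leftrightarrow\cdots\leftrightarrow V_{k-1}\leftarrowast V$ be a collider path witnessing this membership as in Def.~\ref{def:dd-sep}. The plan is to prove $V\in\text{D-SEP}(X,Y,\ml{M}_{\utilde{X}})$ by induction on the path length $k$. Recall (Def.~\ref{def:d-sep}) that D-SEP membership is witnessed by a collider path from $X$ all of whose vertices are ancestors of $X$ or $Y$, so the induction hypothesis in particular delivers, for free, that any previously-handled vertex lies in $\text{Anc}(\{X,Y\},\ml{M}_{\utilde{X}})$. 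The first small step is to observe that each proper prefix $\langle X,V_1,\dots,V_i\rangle$ of $p$ is itself a DD-SEP-witnessing path for $V_i$: conditions (1) and (3) of Def.~\ref{def:dd-sep} are path-independent (they only mention $\mb{M}$, $Y$, and $\ml{Q}_{V_i}$), and condition (2) for the prefix asks only about the subset $\{V_1,\dots,V_{i-1}\}$ of $\{V_1,\dots,V_{k-1}\}$, which are in DD-SEP by hypothesis. Hence the induction hypothesis applies to $V_1,\dots,V_{k-1}$.

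Next I would transfer $p$ from $\mb{M}_{\utilde{X}}$ to $\ml{M}_{\utilde{X}}$. Since $\ml{M}$ is consistent with $\mb{M}$ it has the same skeleton and the same non-circle marks, so every arrowhead of $p$ (both ends of each $\leftrightarrow$ edge and the $V_{k-1}$-end of $V_{k-1}\leftarrowast V$) survives; thus each $V_i$, $1\le i\le k-1$, remains a collider on $p$, and since the edge of $p$ incident to $X$ is $X\leftrightarrow V_1$ (not out of $X$), $p$ still lies in $\ml{M}_{\utilde{X}}$. Combined with the previous paragraph, $X,V_1,\dots,V_{k-1}$ are all ancestors of $\{X,Y\}$ in $\ml{M}_{\utilde{X}}$, so it suffices to show $V\in\text{Anc}(\{X,Y\},\ml{M}_{\utilde{X}})$; this is exactly what condition (3) of Def.~\ref{def:dd-sep} is designed to guarantee. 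If $V\in\text{Anc}(Y,\mb{M})$, the witnessing directed path in $\mb{M}$ uses only solid $\to$ edges, which persist in $\ml{M}$; truncating it at the first occurrence of $X$ (if any) yields a directed path in $\ml{M}_{\utilde{X}}$ from $V$ to $X$ or to $Y$. Otherwise $\mb{M}[\ml{Q}_V]$ is not complete, so $|\ml{Q}_V|\ge 2$ and there are non-adjacent $V',V''\in\ml{Q}_V$ with $V\leftcircleast V'$ and $V\leftcircleast V''$ in $\mb{M}$; I claim the $V$-end of at least one of these edges is a tail in $\ml{M}$. Indeed, if both $V$-ends were arrowheads in $\ml{M}$, then $\langle V',V,V''\rangle$ would be an unshielded collider in $\ml{M}$ that is absent in the PAG $\ml{P}$ (whose $V$-ends on these edges are circles, since $\mb{M}$'s are and the maximal-local-MAG construction never turns a circle back into a non-circle mark), contradicting that $\ml{M}$ lies in the MEC of $\ml{P}$. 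So, say, the $V$-end of $V$--$V'$ is a tail in $\ml{M}$, which (no selection bias, $\ml{M}$ a MAG) forces $V\to V'$; since $V'\in\text{Anc}(Y,\mb{M})\subseteq\text{Anc}(Y,\ml{M})$ and $V\neq X$, truncating again at $X$ gives $V\in\text{Anc}(\{X,Y\},\ml{M}_{\utilde{X}})$. In all cases every vertex of the collider path $p\subseteq\ml{M}_{\utilde{X}}$ is an ancestor of $\{X,Y\}$ in $\ml{M}_{\utilde{X}}$, so Def.~\ref{def:d-sep} yields $V\in\text{D-SEP}(X,Y,\ml{M}_{\utilde{X}})$, closing the induction.

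The step I expect to be the main obstacle is the ``no new unshielded collider'' argument inside the second case of condition (3): one must pin down precisely why a triple carrying circle marks at $V$ in $\mb{M}$ also carries circle marks at $V$ in $\ml{P}$ (i.e., the local transformation of $X$ together with the sound-and-complete rules only ever sharpens circles into arrowheads/tails), and then invoke the characterization of Markov equivalence by skeleton and unshielded colliders to conclude that $\ml{M}$ cannot turn that triple into a collider. Two minor points also need care: the bookkeeping around $\ml{M}_{\utilde{X}}$, namely checking that each directed path invoked above can be truncated so as to avoid edges out of $X$ (so that an ``$\text{Anc}(Y,\ml{M})$'' conclusion becomes an ``$\text{Anc}(\{X,Y\},\ml{M}_{\utilde{X}})$'' conclusion); and making explicit where the hypothesis that an adjustment set relative to $(X,Y)$ exists is used, which I would import as a standard fact about D-SEP (Appendix~\ref{subsec:preliminary about causal effect}) guaranteeing that $\text{D-SEP}(X,Y,\ml{M}_{\utilde{X}})$ is the relevant object and that the ancestral relations underlying Def.~\ref{def:dd-sep}(1) behave as expected in $\ml{M}$.
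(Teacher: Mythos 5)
Your proposal is correct and follows essentially the same route as the paper's proof: induction on the length of the witnessing collider path, transferring the path's arrowheads to $\ml{M}$, and using the dichotomy in condition (3) of Def.~\ref{def:dd-sep} — in the non-complete-$\mb{M}[\ml{Q}_V]$ case arguing that a pair of non-adjacent vertices in $\ml{Q}_V$ would otherwise create a new unshielded collider at $V$, forcing $V\rightarrow V'$ and hence $V\in\text{Anc}(Y,\ml{M})$. Your write-up is somewhat more careful than the paper's about the prefix-path bookkeeping and the truncation at $X$, but the substance is identical.
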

Combining the parts above, we provide \emph{PAGrules} in Alg.~\ref{alg:obtain causal bounds with rules} for set determination, based on the framework of PAGcauses. Note if $X\not\in \text{PossAn}(Y,\ml{P})$ on Line 2, $X$ has no causal effect on $Y$ in all the MAGs consistent with $\ml{P}$, thus there is no need to consider the set of possible causal effects. Similarly, if the causal effect is identifiable by covariate adjustment in $\ml{P}$, we can directly return the causal effect by Prop.~\ref{prop: improved generalized backdoor}, which is detailed in Appendix~\ref{subsec:preliminary about causal effect}. On Line 5, we enumerate all possible local transformations of $X$, which is represented by a set of vertices $\mf{C}$ that implies transforming the edge $X\leftcircleast V$ to $X\leftarrowast V$ if $V\in \mf{C}$ and to $X\rightarrow V$ if $V\not\in \mf{C}$. Evidently, all the sets $\mf{C}$ can represent all the possible local transformations of $X$. On Line 6-7, we obtain all the valid local transformations of $X$ and obtain the maximal local MAGs based on each valid local transformation. According to Prop.~\ref{prop: DD-SEP}, we only need to consider the potential adjustment set containing ${\normalfont \mbox{DD-SEP}}(X,Y,\mb{M})$, which is executed on Line 8. And from Line 9-12, we evaluate whether each potential adjustment set is an adjustment set by Alg.~\ref{alg:get S}.

Finally, Cor.~\ref{cor:corsetequals} implies that PAGrules can return the set of causal effects that are identifiable by covariate adjustment in the DAGs represented by the MAGs consistent with $\ml{P}$.
\begin{corollary}
\label{cor:corsetequals}
For a PAG $\ml{P}$, denote $\text{CE}(\ml{P})$ and $\widehat{\text{CE}}(\ml{P})$ the set of causal effects in the DAGs represented by the MAGs consistent with $\ml{P}$ that can be estimated by covariate adjustment with observable variables and the set of causal effects output by Alg.~\ref{alg:obtain causal bounds with rules}. It holds that $\text{CE}(\ml{P})\stackrel{set}{=}\widehat{\text{CE}}(\ml{P})$. 
\end{corollary}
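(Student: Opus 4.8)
The plan is to prove the two set inclusions $\widehat{\mathrm{CE}}(\ml{P})\subseteq \mathrm{CE}(\ml{P})$ (soundness) and $\mathrm{CE}(\ml{P})\subseteq \widehat{\mathrm{CE}}(\ml{P})$ (completeness) separately, using the PAGcauses scaffolding for the reduction to maximal local MAGs and the new results of Sections~\ref{sec:proposed rule} and~\ref{sec:proposed method} for the per-MAG bookkeeping. First I would record the facts inherited from the PAGcauses analysis (Appendix~\ref{sec:More Related Results}): (i) the covariate-adjustment value $\sum_{\mf{Z}}P(\mf{Z})P(Y\mid X,\mf{Z})$ depends only on the observational law and on the set $\mf{Z}$; (ii) every MAG $\ml{M}$ in the MEC represented by $\ml{P}$ is consistent with the maximal local MAG $\mb{M}$ produced from exactly one valid local transformation $\mf{C}$ of $X$, and conversely every valid $\mf{C}$ (those passing Prop.~\ref{thm:existence}) yields a $\mb{M}$ admitting a consistent MAG; (iii) in a MAG $\ml{M}$ the effect is estimable by covariate adjustment iff $\ml{M}$ admits an adjustment set relative to $(X,Y)$, in which case $\mathrm{D\text{-}SEP}(X,Y,\ml{M}_{\utilde{X}})$ is one such set, and all adjustment sets of $\ml{M}$ reproduce the same value, equal to $P(Y\mid do(X))$ in any DAG represented by $\ml{M}$. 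Consequently $\mathrm{CE}(\ml{P})$ is exactly the union, over valid $\mb{M}$ and over MAGs $\ml{M}$ consistent with $\mb{M}$, of that single adjustment value.

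For soundness, I would trace every value placed in $\widehat{\mathrm{CE}}(\ml{P})$ by Alg.~\ref{alg:obtain causal bounds with rules}. A value is reported either on Line~4 or inside the main loop via a set added to $\widehat{\mathrm{AS}}(\ml{P})$. On Line~4 the hypotheses of Prop.~\ref{prop: improved generalized backdoor} hold, so $\mathrm{D\text{-}SEP}(X,Y,\ml{P}_{\underline{X}})$ is a valid adjustment set in \emph{every} MAG consistent with $\ml{P}$ and the effect is point-identified; the returned value therefore lies in $\mathrm{CE}(\ml{P})$. Inside the loop, a reported value corresponds to a potential adjustment set $\mf{W}_i$ relative to some maximal local MAG $\mb{M}$ (from a valid $\mf{C}$) for which Alg.~\ref{alg:get S} returns a set of vertices; by Thm.~\ref{prop:existencecritical rules} there is a MAG $\ml{M}$ consistent with $\mb{M}$ in which $\mf{W}_i$ is an adjustment set, so by fact~(i) the value equals $P(Y\mid do(X))$ in any DAG represented by $\ml{M}$, hence lies in $\mathrm{CE}(\ml{P})$. (If Line~2 returns ``No causal effects'', $X\notin\mathrm{PossAn}(Y,\ml{P})$ forces a null effect in all consistent MAGs, again consistent with $\mathrm{CE}(\ml{P})$.) This yields $\widehat{\mathrm{CE}}(\ml{P})\subseteq \mathrm{CE}(\ml{P})$.

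For completeness, fix $\tau\in\mathrm{CE}(\ml{P})$. By facts~(ii)--(iii) there is a MAG $\ml{M}$ in the MEC, consistent with the maximal local MAG $\mb{M}$ of some valid $\mf{C}$, that admits an adjustment set relative to $(X,Y)$ and has $\tau=P(Y\mid do(X))$ in a DAG it represents. If Alg.~\ref{alg:obtain causal bounds with rules} exits early I would check the two cases directly: $X\notin\mathrm{PossAn}(Y,\ml{P})$ makes $\tau$ null and reported on Line~2; the hypotheses of Prop.~\ref{prop: improved generalized backdoor} make $\tau$ the unique value and reported on Line~4. Otherwise the loop reaches this $\mf{C}$, since $\mf{C}$ being valid means the conditions of Prop.~\ref{thm:existence} hold, and it builds $\mb{M}$ on Line~7. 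Set $\mf{W}:=\mathrm{D\text{-}SEP}(X,Y,\ml{M}_{\utilde{X}})$. By Thm.~\ref{thm:existenceDSEP rules}, $\mf{W}$ is a potential adjustment set in $\mb{M}$ and Alg.~\ref{alg:get S} returns a set of vertices on input $\mf{W}$; by Prop.~\ref{prop: DD-SEP}, $\mathrm{DD\text{-}SEP}(X,Y,\mb{M}_{\utilde{X}})\subseteq\mf{W}$, so $\mf{W}$ is among the potential adjustment sets enumerated on Line~8. Hence PAGrules adds $\mf{W}$ to $\widehat{\mathrm{AS}}(\ml{P})$, and since $\mf{W}$ is a genuine adjustment set of $\ml{M}$, its adjustment value equals $\tau$; thus $\tau\in\widehat{\mathrm{CE}}(\ml{P})$. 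Combining the two inclusions gives $\mathrm{CE}(\ml{P})\stackrel{set}{=}\widehat{\mathrm{CE}}(\ml{P})$.

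The step I expect to be delicate is completeness, specifically that Alg.~\ref{alg:obtain causal bounds with rules} does not sweep over all vertex subsets but only over potential adjustment sets containing $\mathrm{DD\text{-}SEP}(X,Y,\mb{M}_{\utilde{X}})$: a priori some realizable value $\tau$ could be witnessed only by adjustment sets omitting a DD-SEP vertex. This is exactly what Prop.~\ref{prop: DD-SEP} precludes --- every such vertex lies in the canonical set $\mathrm{D\text{-}SEP}$ of every consistent MAG admitting an adjustment set --- and, together with fact~(iii) (D-SEP already realizes \emph{every} achievable value) and the MEC-coverage in fact~(ii), it is the crux of the argument. Everything else (the two early returns, the dependence of the adjustment functional on only the data and $\mf{Z}$, and value-invariance across adjustment sets of a fixed MAG) is routine and borrowed from the PAGcauses analysis.
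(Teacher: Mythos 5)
Your proposal is correct and follows essentially the same route as the paper's proof, which likewise reduces the claim to Thm.~\ref{prop:existencecritical rules} (soundness per maximal local MAG), Thm.~\ref{thm:existenceDSEP rules} (completeness via the canonical set ${\normalfont \mbox{D-SEP}}(X,Y,\ml{M}_{\utilde{X}})$), and the exhaustive enumeration of valid local transformations of $X$. You are in fact somewhat more careful than the paper's one-paragraph argument: you explicitly invoke Prop.~\ref{prop: DD-SEP} to justify that restricting Line~8 to potential adjustment sets containing ${\normalfont \mbox{DD-SEP}}(X,Y,\mb{M}_{\utilde{X}})$ loses nothing, a step the paper's proof leaves implicit.
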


\section{Conclusion}
\label{sec:conclusion}
It remains an open problem for a long time that what causal relations are identifiable from observational data and background knowledge (BK) in the presence of latent variables. In this paper, we propose two novel rules for incorporating BK. The rules are quite different from the existing rules in the form. We believe the proposed rules can take some new insights to the open problem and could possibly inspire the establishment of the sound and complete rules in the future. Further, by utilizing the proposed rules, we present a novel algorithm to determine the set of possible causal effects given a PAG. The rules can help improve the efficiency of the state-of-the-art method.

In the future, it is worthy to continue investigating the sound and complete rules to incorporate BK, based on the rules that have been established. In addition, in light of the fact that only a Markov equivalence class (MEC) is identifiable given observational data, the causal effect is usually unidentifiable within a MEC. Hence, exploring and exploiting the additional knowledge that is available in practice may be helpful to causal effect estimation tasks. 


\bibliography{multisetICML24}

\begin{thebibliography}{32}
\providecommand{\natexlab}[1]{#1}
\providecommand{\url}[1]{\texttt{#1}}
\expandafter\ifx\csname urlstyle\endcsname\relax
  \providecommand{\doi}[1]{doi: #1}\else
  \providecommand{\doi}{doi: \begingroup \urlstyle{rm}\Url}\fi

\bibitem[Ali et~al.(2005)Ali, Richardson, Spirtes, and Zhang]{ali2005orientation}
Ali, R.~A., Richardson, T., Spirtes, P., and Zhang, J.
\newblock Orientation rules for constructing markov equivalence classes of maximal ancestral graphs.
\newblock Technical report, 2005.

\bibitem[Andrews et~al.(2020)Andrews, Spirtes, and Cooper]{conf/aistats/Andrews20}
Andrews, B., Spirtes, P., and Cooper, G.~F.
\newblock On the completeness of causal discovery in the presence of latent confounding with tiered background knowledge.
\newblock In \emph{Proceedings of the 23rd International Conference on Artificial Intelligence and Statistics}, pp.\  4002--4011, 2020.

\bibitem[Cai et~al.(2021)Cai, Chen, Li, Chen, Zhang, Ye, Li, Yang, and Zhang]{conf/aaai/CaiC0CZYLYZ21}
Cai, R., Chen, J., Li, Z., Chen, W., Zhang, K., Ye, J., Li, Z., Yang, X., and Zhang, Z.
\newblock Time series domain adaptation via sparse associative structure alignment.
\newblock In \emph{Proceedings of the 35th {AAAI} Conference on Artificial Intelligence}, pp.\  6859--6867, 2021.

\bibitem[Fang \& He(2020)Fang and He]{conf/uai/FangH20}
Fang, Z. and He, Y.
\newblock {IDA} with background knowledge.
\newblock In \emph{Proceedings of the Thirty-Sixth Conference on Uncertainty in Artificial Intelligence}, volume 124 of \emph{Proceedings of Machine Learning Research}, pp.\  270--279. {AUAI} Press, 2020.

\bibitem[Gong et~al.(2016)Gong, Zhang, Liu, Tao, Glymour, and Sch{\"{o}}lkopf]{conf/icml/GongZLTGS16}
Gong, M., Zhang, K., Liu, T., Tao, D., Glymour, C., and Sch{\"{o}}lkopf, B.
\newblock Domain adaptation with conditional transferable components.
\newblock In Balcan, M. and Weinberger, K.~Q. (eds.), \emph{Proceedings of the 33nd International Conference on Machine Learning}, pp.\  2839--2848, 2016.

\bibitem[Hauser \& B{\"{u}}hlmann(2014)Hauser and B{\"{u}}hlmann]{journals/ijar/HauserB14}
Hauser, A. and B{\"{u}}hlmann, P.
\newblock Two optimal strategies for active learning of causal models from interventional data.
\newblock \emph{International Journal of Approximate Reasoning}, 55\penalty0 (4):\penalty0 926--939, 2014.

\bibitem[He \& Geng(2008)He and Geng]{journals/he2008active}
He, Y. and Geng, Z.
\newblock Active learning of causal networks with intervention experiments and optimal designs.
\newblock \emph{Journal of Machine Learning Research}, 9:\penalty0 2523--2547, 2008.

\bibitem[Huang et~al.(2022)Huang, Feng, Lu, Magliacane, and Zhang]{conf/iclr/HuangFLM022}
Huang, B., Feng, F., Lu, C., Magliacane, S., and Zhang, K.
\newblock Adarl: What, where, and how to adapt in transfer reinforcement learning.
\newblock In \emph{Proceedings of the 10th International Conference on Learning Representations}, 2022.

\bibitem[Jaber et~al.(2020)Jaber, Kocaoglu, Shanmugam, and Bareinboim]{conf/nips/JaberKSB20}
Jaber, A., Kocaoglu, M., Shanmugam, K., and Bareinboim, E.
\newblock Causal discovery from soft interventions with unknown targets: Characterization and learning.
\newblock In \emph{Advances in Neural Information Processing Systems}, 2020.

\bibitem[Maathuis et~al.(2009)Maathuis, Kalisch, B{\"u}hlmann, et~al.]{journals/2009estimating}
Maathuis, M.~H., Kalisch, M., B{\"u}hlmann, P., et~al.
\newblock Estimating high-dimensional intervention effects from observational data.
\newblock \emph{The Annals of Statistics}, 37\penalty0 (6A):\penalty0 3133--3164, 2009.

\bibitem[Maathuis et~al.(2015)Maathuis, Colombo, et~al.]{journals/AOS/maathuis2015}
Maathuis, M.~H., Colombo, D., et~al.
\newblock A generalized back-door criterion.
\newblock \emph{The Annals of Statistics}, 43\penalty0 (3):\penalty0 1060--1088, 2015.

\bibitem[Malinsky \& Spirtes(2016)Malinsky and Spirtes]{journal/malinsky2016}
Malinsky, D. and Spirtes, P.
\newblock Estimating causal effects with ancestral graph markov models.
\newblock In \emph{Conference on Probabilistic Graphical Models}, pp.\  299--309, 2016.

\bibitem[Meek(1995)]{conf/uai/Meek95}
Meek, C.
\newblock Causal inference and causal explanation with background knowledge.
\newblock In \emph{Proceedings of the 11st Annual Conference on Uncertainty in Artificial Intelligence}, pp.\  403--410, 1995.

\bibitem[Pearl(2009)]{books/2009causality}
Pearl, J.
\newblock \emph{Causality}.
\newblock Cambridge University Press, 2009.

\bibitem[Perkovic et~al.(2017{\natexlab{a}})Perkovic, Kalisch, and Maathuis]{conf/uai/PerkovicKM17}
Perkovic, E., Kalisch, M., and Maathuis, M.~H.
\newblock Interpreting and using cpdags with background knowledge.
\newblock In \emph{Proceedings of the 33rd Conference on Uncertainty in Artificial Intelligence}, 2017{\natexlab{a}}.

\bibitem[Perkovic et~al.(2017{\natexlab{b}})Perkovic, Textor, Kalisch, and Maathuis]{journals/jmlr/PerkovicTKM17}
Perkovic, E., Textor, J., Kalisch, M., and Maathuis, M.~H.
\newblock Complete graphical characterization and construction of adjustment sets in markov equivalence classes of ancestral graphs.
\newblock \emph{Journal of Machine Learning Research}, 18:\penalty0 220:1--220:62, 2017{\natexlab{b}}.

\bibitem[Richardson et~al.(2002)Richardson, Spirtes, et~al.]{richardson2002ancestral}
Richardson, T., Spirtes, P., et~al.
\newblock Ancestral graph markov models.
\newblock \emph{The Annals of Statistics}, 30\penalty0 (4):\penalty0 962--1030, 2002.

\bibitem[Ruan et~al.(2023)Ruan, Zhang, Di, and Bareinboim]{conf/iclr/RuanZDB23}
Ruan, K., Zhang, J., Di, X., and Bareinboim, E.
\newblock Causal imitation learning via inverse reinforcement learning.
\newblock In \emph{Proceedings of the 11st International Conference on Learning Representations}, 2023.

\bibitem[Sch{\"{o}}lkopf et~al.(2012)Sch{\"{o}}lkopf, Janzing, Peters, Sgouritsa, Zhang, and Mooij]{conf/icml/ScholkopfJPSZM12}
Sch{\"{o}}lkopf, B., Janzing, D., Peters, J., Sgouritsa, E., Zhang, K., and Mooij, J.~M.
\newblock On causal and anticausal learning.
\newblock In \emph{Proceedings of the 29th International Conference on Machine Learning}, pp.\  459--466, 2012.

\bibitem[Spirtes et~al.(2000)Spirtes, Glymour, and Scheines]{books/spirtes2000causation}
Spirtes, P., Glymour, C.~N., and Scheines, R.
\newblock \emph{Causation, prediction, and search}.
\newblock MIT Press, 2000.

\bibitem[van~der Zander et~al.(2014)van~der Zander, Liskiewicz, and Textor]{conf/uai/ZanderLT14a}
van~der Zander, B., Liskiewicz, M., and Textor, J.
\newblock Constructing separators and adjustment sets in ancestral graphs.
\newblock In \emph{Proceedings of the Thirtieth Conference on Uncertainty in Artificial Intelligence}, pp.\  907--916, 2014.

\bibitem[Venkateswaran \& Perkovic(2024)Venkateswaran and Perkovic]{venkateswaran2024towards}
Venkateswaran, A. and Perkovic, E.
\newblock Towards complete causal explanation with expert knowledge.
\newblock \emph{arXiv preprint arXiv:2407.07338}, 2024.

\bibitem[Verma \& Pearl(1990)Verma and Pearl]{conf/uai/VermaP90}
Verma, T. and Pearl, J.
\newblock Equivalence and synthesis of causal models.
\newblock In \emph{Proceedings of the Sixth Annual Conference on Uncertainty in Artificial Intelligence}, pp.\  255--270, 1990.

\bibitem[Verma \& Pearl(1992)Verma and Pearl]{conf/uai/VermaP92}
Verma, T. and Pearl, J.
\newblock An algorithm for deciding if a set of observed independencies has a causal explanation.
\newblock In \emph{Proceedings of the 8th Annual Conference on Uncertainty in Artificial Intelligence}, pp.\  323--330, 1992.

\bibitem[von K{\"{u}}gelgen et~al.(2020)von K{\"{u}}gelgen, Mey, Loog, and Sch{\"{o}}lkopf]{conf/uai/KugelgenMLS20}
von K{\"{u}}gelgen, J., Mey, A., Loog, M., and Sch{\"{o}}lkopf, B.
\newblock Semi-supervised learning, causality, and the conditional cluster assumption.
\newblock In \emph{Proceedings of the 36th Conference on Uncertainty in Artificial Intelligence}, pp.\  1--10, 2020.

\bibitem[Wang et~al.(2023{\natexlab{a}})Wang, Qin, and Zhou]{conf/icml/WangQZ23}
Wang, T.-Z., Qin, T., and Zhou, Z.-H.
\newblock Estimating possible causal effects with latent variables via adjustment.
\newblock In \emph{Proceedings of the 40th International Conference on Machine Learning}, pp.\  36308--36335, 2023{\natexlab{a}}.

\bibitem[Wang et~al.(2023{\natexlab{b}})Wang, Qin, and Zhou]{journals/arXiv/WangQZ2022}
Wang, T.-Z., Qin, T., and Zhou, Z.-H.
\newblock Sound and complete causal identification with latent variables given local background knowledge.
\newblock \emph{Artificial Intelligence}, 322:\penalty0 103964, 2023{\natexlab{b}}.

\bibitem[Wang et~al.(2024)Wang, Du, and Zhou]{conf/icml/WangTZ24}
Wang, T.-Z., Du, W.-B., and Zhou, Z.-H.
\newblock An efficient maximal ancestral graph listing algorithm.
\newblock In \emph{Proceedings of the 41st International Conference on Machine Learning}, 2024.

\bibitem[Wien{\"{o}}bst et~al.(2023)Wien{\"{o}}bst, Luttermann, Bannach, and Liskiewicz]{conf/aaai/WienobstLBL23}
Wien{\"{o}}bst, M., Luttermann, M., Bannach, M., and Liskiewicz, M.
\newblock Efficient enumeration of markov equivalent dags.
\newblock In \emph{Proceedings of the 37th {AAAI} Conference on Artificial Intelligence}, pp.\  12313--12320, 2023.

\bibitem[Witte et~al.(2020)Witte, Henckel, Maathuis, and Didelez]{witte2020efficient}
Witte, J., Henckel, L., Maathuis, M.~H., and Didelez, V.
\newblock On efficient adjustment in causal graphs.
\newblock \emph{The Journal of Machine Learning Research}, 21\penalty0 (1):\penalty0 9956--10000, 2020.

\bibitem[Zhang(2008)]{journals/ai/Zhang08}
Zhang, J.
\newblock On the completeness of orientation rules for causal discovery in the presence of latent confounders and selection bias.
\newblock \emph{Artificial Intelligence}, 172\penalty0 (16-17):\penalty0 1873--1896, 2008.

\bibitem[Zhang et~al.(2020)Zhang, Gong, Stojanov, Huang, Liu, and Glymour]{conf/nips/0001GSHLG20}
Zhang, K., Gong, M., Stojanov, P., Huang, B., Liu, Q., and Glymour, C.
\newblock Domain adaptation as a problem of inference on graphical models.
\newblock In \emph{Advances in Neural Information Processing Systems 33}, 2020.

\end{thebibliography}
\bibliographystyle{icml2022}

\appendix
\onecolumn
\section{Detailed Preliminary}
\label{sec: a detailed preliminary}
\subsection{Preliminary about Graphs}
\label{sec: prelimiary about graphs}
In a graph $G$, consider a path $p=\langle V_1,V_2,\cdots,V_k\rangle$, $p$ is a \emph{directed path} if there is $V_i\rightarrow V_{i+1}$, $\forall 1\leq i\leq k-1$; $p$ is a collider path if there is $V_{i-1}\rightarrowast V_i\leftarrowast V_{i+1}$, $\forall 2\leq i\leq k-1$; $p$ is \emph{minimal} if any two non-consecutive vertices are not adjacent. A vertex $V_1$ is a \emph{parent} of $V_2$ if there is $V_1\rightarrow V_2$. $V_1$ is an \emph{ancestor/descendant} of $V_2$ if there is a directed path from $V_1$/$V_2$ to $V_2$/$V_1$. $V_1$ is a \emph{possible ancestor/possible descendant} of $V_2$ if there is a possible directed path from $V_1$/$V_2$ to $V_2$/$V_1$. Note each vertex is an ancestor/descendant/possible ancestor/possible descendant of itself. An edge in the form of $\leftrightcircle$ is a \emph{circle edge}. The \emph{circle component} of a graph $G$ is the subgraph of $G=(\mf{V},\mf{E})$ consisting of the vertices $\mf{V}$ and all the circle edges. We say two vertices $V_i$ and $V_j$ is a \emph{connected circle component} in $G$ if there is a circle path $V_i\leftrightcircle \cdots\leftrightcircle V_j$ from $V_i$ to $V_j$ in $G$. For two paths $p_1=\langle V_1,V_2,\cdots,V_d\rangle $ and $p_2=\langle S_1,S_2,\cdots,S_n\rangle$, we use $p_1\bigoplus p_2$ to denote the concatenate path $\langle V_1,\cdots,V_d,S_1,\cdots,S_n\rangle$.

If there is a path $V_1\rightarrow V_2\rightarrow \cdots\rightarrow V_d$ and an edge $V_d\rightarrow V_1$/$V_d\leftrightarrow V_1$, then there is a \emph{directed cycle/almost directed cycle}. For a mixed graph, if there is not a directed cycle or almost directed cycle, then it is \emph{ancestral}. For an ancestral graph, if for any two non-adjacent vertices, there is a set of vertices that \emph{m-seperates} them, then the graph is \emph{maximal}. If a mixed graph is both ancestral and maximal, it is a \emph{maximal ancestral graph (MAG)}, denoted by $\ml{M}$. 

Essentially, MAG is a projection graph on the observable variables of some underlying DAGs containing both observable and latent variables.~\citet{books/spirtes2000causation,journals/ai/Zhang08} presented the algorithm to obtain a MAG with vertices $\mf{O}$ from a DAG with vertices $\mf{O}\cup \mf{L}\cup\mf{S}$, where $\mf{O},\mf{L},\mf{S}$ denote the observable vertices set, latent vertices set, and selection vertices set. Since we do not consider selection bias in this paper, we do not consider $\mf{S}$ in the following. Next we show inducing path in Def.~\ref{def:incuding path}, followed by the algorithm to obtain a MAG based on a DAG. According to the algorithm, as the number of latent vertices can be arbitrary, there can be infinite number of DAGs which could lead to one MAG by the algorithm. 

\begin{myDef}[Inducing path;~\citet{books/spirtes2000causation}]
\label{def:incuding path}
Let $X,Y$ be two vertices in an maximal ancestral graph. Denote $\mf{L},\mf{S}$ two disjoint sets of vertices that $X,Y$ do not belong to. A path $p$ from $X$ to $Y$ is an \emph{inducing path relative to $\langle \mf{L},\mf{S}\rangle$} if every non-endpoint vertex on $p$ is either in $\mf{L}$ or a collider, and every collider on $p$ is an ancestor of either $X,Y$, or a member of $\mf{S}$.
\end{myDef}

\textbf{Input:} a DAG $\ml{D}$ over $\mf{V}=\mf{O}\cup \mf{L}$;\\
\textbf{Output:} a MAG $\ml{M}$ over $\mf{O}$.\\
$(1)$ for each pair of vertices $A,B\in \mf{O}$, $A$ and $B$ are adjacent in $\ml{M}$ if and only if there is an inducing path relative to $\langle \mf{L},\emptyset\rangle$ from $A$ to $B$ in $\ml{D}$;\\
$(2)$ for each pair of adjacent vertices $A,B$ in $\ml{M}$, orient the edge between $A$ and $B$ according to the following steps:
\begin{enumerate}
\item[(a)] orient $A\rightarrow B$ in $\ml{M}$ if $A\in {\normalfont\mbox{Anc}}(B,\ml{D})$ and $B\not\in {\normalfont\mbox{Anc}}(A,\ml{D})$;
\item[(b)] orient $A\leftarrow B$ in $\ml{M}$ if $B\in {\normalfont\mbox{Anc}}(A,\ml{D})$ and $A\not\in {\normalfont\mbox{Anc}}(B,\ml{D})$;
\item[(c)] orient $A\leftrightarrow B$ in $\ml{M}$ if $A\not\in {\normalfont\mbox{Anc}}(B,\ml{D})$ and $B\not\in {\normalfont\mbox{Anc}}(A,\ml{D})$.
\end{enumerate}

Given observational data, we can only identify a \emph{Markov equivalence class (MEC)} of MAGs, which is represented by a \emph{partial ancestral graph (PAG)} and denoted by $\ml{P}$. In a PAG, there is an arrowhead/tail at some position if and only if there is an arrowhead/tail at this position in all the MAGs in the MEC; and there is a circle at some position if and only if there are both arrowheads and tails at this position in all the MAGs in the MEC. 

\subsection{Preliminary about Orientation Rules}
\label{subsec:preliminary about rules}
\citet{ali2005orientation,journals/ai/Zhang08} proposed ten rules $\ml{R}_1-\ml{R}_{10}$ to identify a PAG with observational data.~\citet{conf/nips/JaberKSB20} presented the solid result to imply that when there is additional interventional data and selection bias is not allowed for, the ten rules are also complete. Another study~\citep{conf/aistats/Andrews20} indicates that the ten rules are complete if we incorporate \emph{tiered background knowledge}, which means that the BK can classify the variables into distinct parts, where the causal order between different parts is explicit, but the structural information within each part cannot be directly known according to the BK.

Further, when we have obtained a PAG and incorporate \emph{local background knowledge (local BK)}, \emph{i.e.}, the full structural knowledge regarding some specific variables,~\citet{journals/arXiv/WangQZ2022} proposed a rule $\ml{R}_4'$ to replace $\ml{R}_4$ and an additional rule $\ml{R}_{11}$. ~\citet{journals/arXiv/WangQZ2022} proved that the rules are sound and complete to incorporate local BK into a PAG. We show these rules in the following. Since
$\ml{R}_5-\ml{R}_7$ are triggered only if the selection bias is involved and we assume the absence of selection bias, we omit these three rules.

\begin{enumerate}[wide = 0pt, leftmargin = *]
\item[] $\ml{R}_1$: If $A\rightarrowast B\leftcircleast R$, and $A$ and $R$ are not adjacent, then orient the triple as $A\rightarrowast B\rightarrow R$.
 \item[] $\ml{R}_2$: If $A\rightarrow B\rightarrowast R$ or $A\rightarrowast B\rightarrow R$, and $A\rightcircleast R$, then orient $A\rightcircleast R$ as $A\rightarrowast R$.
 \item[] $\ml{R}_3$: If $A\rightarrowast B\leftarrowast R$, $A\rightcircleast D\leftcircleast R$, $A$ and $R$ are not adjacent, and $D\rightcircleast B$, then orient $D\rightcircleast B$ as $D\rightarrowast B$.
 \item[] $\ml{R}_4$: If $\langle K,\dots,A,B,R\rangle$ is a discriminating path between $K$ and $R$ for $B$, and $B\leftcircleast R$; then if $B\in \text{Sepset}(K,R)$, orient $B\leftcircleast R$ as $B\rightarrow R$; otherwise orient the triple $\langle A,B,R\rangle$ as $A\leftrightarrow B\leftrightarrow R$.
 \item[] $\ml{R}_8$: If $A\rightarrow B\rightarrow R$, and $A\rightarrowcircle R$, orient $A\rightarrowcircle R$ as $A\rightarrow R$.
 \item[] $\ml{R}_9$: If $A\rightarrowcircle R$, and $p=\langle A,B,D,\dots,R\rangle$ is an uncovered possible directed path from $A$ to $R$ such that $R$ and $B$ are not adjacent, then orient $A\rightarrowcircle R$ as $A\rightarrow R$.
 \item[] $\ml{R}_{10}$: Suppose $A\rightarrowcircle R$, $B\rightarrow R\leftarrow D$, $p_1$ is an uncovered possible directed path from $A$ to $B$, and $p_2$ is an uncovered possible directed path from $A$ to $D$. Let $U$ be the vertex adjacent to $A$ on $p_1$ ($U$ could be $B$), and $W$ be the vertex adjacent to $A$ on $p_2$ ($W$ could be $D$). If $U$ and $W$ are distinct, and are not adjacent, then orient $A\rightarrowcircle R$ as $A\rightarrow R$.
 \item[] $\ml{R}_4'$: If $\langle K,\cdots,A,B,R\rangle$ is a discriminating path between $K$ and $R$ for $B$, and $B\leftcircleast R$, then orient $B\leftcircleast R$ as $B\rightarrow R$.
 \item[] $\ml{R}_{11}$: If $A\rightcircle B$, then $A\rightarrow B$.
\end{enumerate}

\subsection{Preliminary about Causal Effect Estimation}
\label{subsec:preliminary about causal effect}
\begin{myDef}[Adjustment set;~\citet{books/2009causality,conf/uai/ZanderLT14a}]
\label{def:adjustment set}
Given a DAG, MAG, or PAG $G$, $\mf{Z}$ is an adjustment set relative to $(X,Y)$ if for any probability density $f$ compatible with $G$, the causal effect of $X$ on $Y$
\begin{align}
\label{eq:adjustment criterion}
P(Y|do(X))=\left\{\begin{array}{ll}
P(Y|X), & \text { if } \mathbf{Z}=\varnothing, \\
\int_{\mathbf{Z}} P(Y|\mathbf{Z}, X) P(\mathbf{Z}) \diff \mathbf{Z}, & \text { otherwise. }
\end{array}\right.
\end{align}
\end{myDef}
\citet{journals/AOS/maathuis2015,journals/jmlr/PerkovicTKM17} presented the necessary and sufficient graphical characterization for the causal effect identifiablility via covariate adjustment given a DAG/CPDAG/MAG/PAG. We show them in Prop.~\ref{prop: improved generalized backdoor}. See \citet{journals/AOS/maathuis2015} for the notation $G_{\underline{X}}$. At first, we introduce an important concept ${\normalfont \mbox{D-SEP}(X,Y,G)}$ in Def.~\ref{def:d-sep}. ${\normalfont \mbox{D-SEP}(X,Y,G)}$ is essentially a set of vertices.
\begin{myDef}[${\normalfont \mbox{D-SEP}(X,Y,G)}$;~\citet{books/spirtes2000causation}]
\label{def:d-sep}
Let $X$ and $Y$ be two distinct vertices in a mixed graph $G$. We say that $V\in {\normalfont \mbox{D-SEP}(X,Y,G)}$ if $V\not=X$, and there is a collider path between $X$ and $V$ in $G$, such that every vertex on this path (including $V$) is an ancestor of $X$ or $Y$ in $G$.
\end{myDef}
\begin{prop}[\citet{journals/AOS/maathuis2015,journals/jmlr/PerkovicTKM17}]
\label{prop: improved generalized backdoor}
Let $G$ be a MAG or PAG, and $X$ and $Y$ be two distinct vertices in $G$. There exists an adjustment set relative to $(X,Y)$ in $G$ if and only if $Y\not\in {\normalfont \mbox{Adj}}(X,G_{\underline{X}})$ and ${\normalfont \mbox{D-SEP}(X,Y,G_{\underline{X}})}\cap {\normalfont \mbox{PossDe}}(X,G)=\emptyset$. Moreover, if an adjustment set exists, then ${\normalfont \mbox{D-SEP}(X,Y,G_{\underline{X}})}$ is such a set. Denote ${\normalfont \mbox{D-SEP}(X,Y,G_{\underline{X}})}$ by $\mf{D}$, then
\begin{align}
\label{eq:gbc}
f(Y|do(X=x))=\int_{\mf{D}} f(\mf{D})f(Y|\mf{D},X=x)\diff \mf{D}.
\end{align}
\end{prop}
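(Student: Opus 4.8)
The plan is to treat this as the established generalized back-door result of \citet{journals/AOS/maathuis2015} and \citet{journals/jmlr/PerkovicTKM17} and to reduce the validity of a covariate-adjustment set to a purely graphical m-separation condition in the back-door graph $G_{\underline{X}}$. Concretely, I would first recall the \emph{generalized back-door criterion}: a set $\mf{Z}$ is an adjustment set relative to $(X,Y)$ in a MAG/PAG $G$ precisely when (i) no vertex of $\mf{Z}$ is a possible descendant of $X$ in $G$, and (ii) $\mf{Z}$ m-separates $X$ from $Y$ in $G_{\underline{X}}$. Granting this equivalence, the adjustment formula \eqref{eq:gbc} is just the instantiation of Def.~\ref{def:adjustment set} with $\mf{Z}=\mf{D}$, so the genuine content of the proposition is the two-sided existence characterization together with the claim that $\mf{D}={\normalfont \mbox{D-SEP}}(X,Y,G_{\underline{X}})$ is a canonical witness.

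For the sufficiency direction I would show that, under conditions (a) and (b), the set $\mf{D}$ itself satisfies the generalized back-door criterion. Condition (b), ${\normalfont \mbox{D-SEP}}(X,Y,G_{\underline{X}})\cap{\normalfont \mbox{PossDe}}(X,G)=\emptyset$, is exactly requirement (i): no element of $\mf{D}$ is a possible descendant of $X$, so conditioning on $\mf{D}$ introduces no forbidden vertex. The heart is requirement (ii), that $\mf{D}$ m-separates $X$ and $Y$ in $G_{\underline{X}}$. Here I would argue by contradiction: an $X$--$Y$ path in $G_{\underline{X}}$ left open given $\mf{D}$ would, via a collider-path/inducing-path construction (Def.~\ref{def:incuding path}), yield a vertex reachable from $X$ by a collider path all of whose vertices are ancestors of $X$ or $Y$ --- i.e. a vertex of ${\normalfont \mbox{D-SEP}}(X,Y,G_{\underline{X}})=\mf{D}$ through which the open path does \emph{not} pass; but every such vertex lies in $\mf{D}$ and therefore blocks the path, contradicting openness. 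Condition (a), $Y\not\in{\normalfont \mbox{Adj}}(X,G_{\underline{X}})$, disposes of the degenerate length-one case, since an edge into $X$ cannot be blocked by any conditioning. As this argument exhibits $\mf{D}$ as a valid adjustment set whenever (a)--(b) hold, it simultaneously settles the canonical-set claim.

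For necessity I would assume some adjustment set $\mf{Z}$ exists and derive (a) and (b). If $Y\in{\normalfont \mbox{Adj}}(X,G_{\underline{X}})$, the corresponding edge is a non-causal $X$--$Y$ connection with no intermediate vertex: it cannot be blocked by conditioning, and $Y$ itself cannot be placed in $\mf{Z}$, so no adjustment set can exist, giving (a). For (b), suppose some $V\in{\normalfont \mbox{D-SEP}}(X,Y,G_{\underline{X}})$ is also a possible descendant of $X$. By Def.~\ref{def:d-sep} there is a collider path from $X$ to $V$ in $G_{\underline{X}}$ whose vertices are all ancestors of $X$ or $Y$; this route can be activated so that blocking it forces $\mf{Z}$ to contain $V$ (or one of its colliders), while requirement (i) forbids conditioning on the descendant $V$. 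The resulting tension shows no admissible $\mf{Z}$ can exist, contradicting the assumption and yielding (b).

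Finally, once $\mf{D}$ is validated as an adjustment set, the formula is immediate: substituting $\mf{Z}=\mf{D}$ into Def.~\ref{def:adjustment set} gives exactly \eqref{eq:gbc}. The main obstacle is the m-separation combinatorics underlying step (ii) of sufficiency and its mirror in the proof of (b); both require careful control of collider paths, inducing paths, and the ancestor structure preserved under the back-door manipulation $G_{\underline{X}}$. A secondary subtlety, when $G$ is a PAG rather than a single MAG, is the Markov-equivalence lift: the criterion must hold simultaneously across every MAG in the class, which is precisely why the statement is phrased with ${\normalfont \mbox{PossDe}}$ and possible-ancestor notions rather than their definite counterparts, and checking that D-SEP computed in the PAG behaves correctly relative to the MAG-wise back-door sets is the delicate point.
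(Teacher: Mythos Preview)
The paper does not prove this proposition at all: it is stated in the preliminaries (Appendix~\ref{subsec:preliminary about causal effect}) as an established result, attributed to \citet{journals/AOS/maathuis2015} and \citet{journals/jmlr/PerkovicTKM17}, and is used throughout as a black box. There is therefore no ``paper's own proof'' to compare against; your sketch is an attempt to reconstruct the argument from those external references, which the present paper does not reproduce.

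That said, your outline is broadly faithful to how those original papers proceed: reducing adjustment validity to the generalized back-door criterion, showing $\mf{D}={\normalfont \mbox{D-SEP}}(X,Y,G_{\underline{X}})$ satisfies both clauses when the stated conditions hold, and deriving the necessity of the two conditions from the impossibility of blocking a length-one back-door edge and from the forbidden-descendant clause. The one place where your sketch is genuinely soft is the contradiction in sufficiency step~(ii): saying an open path ``yields a vertex of $\mf{D}$ through which the open path does not pass'' is not quite right, since the D-SEP property of \citet{books/spirtes2000causation} guarantees that $\mf{D}$ m-separates $X$ from every vertex it can m-separate $X$ from, and the real argument routes through an inducing-path characterization rather than a single missed vertex. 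Similarly, for necessity of~(b) you would need more than ``the resulting tension'': one must show that the collider path to $V$, concatenated with a possible directed $X$--$V$ path, forces any blocking set to contain a forbidden vertex, which requires the careful case analysis carried out in \citet{journals/jmlr/PerkovicTKM17}. These are the places where a full proof would demand substantially more work than your proposal indicates.
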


\section{A Detailed Introduction to PAGcauses (Wang et al. 2023a)}
\label{sec:More Related Results}
In this part, we introduce the method PAGcaused to determine the set of possible causal effects given a PAG, which is proposed by~\citet{conf/icml/WangQZ23}. We first show some theoretical results of~\citet{conf/icml/WangQZ23}. Some of these results are needed in our proof.

The first important result is Prop.~\ref{thm:dag and mag}. It provides a graphical characterization for the adjustment set comprised of observable variables in all the DAGs represented by a given MAG. It implies that there exists a DAG where the causal effect of $X$ on $Y$ is identifiable by covariate adjustment and the adjustment set is comprised of some observable variables if and only if ${\normalfont \mbox{D-SEP}}(X,Y,\ml{M}_{\utilde{X}})\cap {\normalfont \mbox{De}(X,\ml{M})}=\emptyset$, and ${\normalfont \mbox{D-SEP}}(X,Y,\ml{M}_{\utilde{X}})$ is the adjustment set. Hence, given a MAG $\ml{M}$, we can determine whether there exists some DAG where the causal effect is identifiable by adjusting for observable variables without the need the enumerate the DAGs. And for all the DAGs above, they are associated with the same causal effect.
\begin{prop}
\label{thm:dag and mag}
Suppose a MAG $\ml{M}$ where $X\in {\normalfont \mbox{Anc}(Y,\ml{M})}$. There exists a DAG $\ml{D}$ represented by $\ml{M}$ such that the causal effect of $X$ on $Y$ in $\ml{D}$ can be identified by adjusting for a set comprised of $\mf{V}(\ml{M})$ if and only if ${\normalfont \mbox{D-SEP}}(X,Y,\ml{M}_{\utilde{X}})\cap {\normalfont \mbox{De}(X,\ml{M})}=\emptyset$. Furthermore, if such a set exists, ${\normalfont \mbox{D-SEP}}(X,Y,\ml{M}_{\utilde{X}})$ is an adjustment set. Hence, in the following, when we say an adjustment set in $\ml{M}$, it means the adjustment set in the DAGs represented by the MAG $\ml{M}$ such that the causal effect of $X$ on $Y$ can be identified by adjusting for this adjustment set.
\end{prop}
According to Prop.~\ref{thm:dag and mag}, when addressing the set determination task for the causal effect of $X$ on $Y$, we only need to consider the MAG in the MEC presented by $\ml{P}$, without the need to consider the DAGs. Considering there are many circles in a PAG $\ml{P}$, PAGcauses considers transforming the circles of $X$ at first. They use a set of vertices $\mf{C}$ to represent a local transformation of $X$, \emph{i.e.}, transform $X\leftcircleast V$ to $X\leftarrowast V$ if $V\in \mf{C}$ and transform $X\leftcircleast V$ to $X\rightast V$ if $V\in \{V\in \mf{V}(\ml{P})|X\leftcircleast V\mbox{ in }\ml{P}\}\backslash \mf{C}$. An important problem here is to evaluate the validity of each local transformation, \emph{i.e.}, whether there is a MAG consistent with this local transformation in the MEC represented by $\ml{P}$. For this purpose, they presented a graphical characterization for the valid local transformation of $X$ implied by $\mf{C}$ given the PAG $\ml{P}$, which is shown in Prop.~\ref{thm:existence}. The three conditions in Prop.~\ref{thm:existence} can be evaluated in $\ml{O}(d^3)$, where $d$ denotes the number of vertices. In Prop.~\ref{thm:existence}, the concept of bridged is involved, which is shown in Def.~\ref{def:bridged}.
\begin{myDef}[Bridged relative to $\mf{V}'$ in $H$;~\citeauthor{conf/icml/WangQZ23}~\citeyear{conf/icml/WangQZ23}]
\label{def:bridged}
Let $H$ be a partial mixed graph. Denote $H'$ a subgraph of $H$ induced by a set of vertices $\mf{K}$. Given a set of vertices $\mf{V}'$ in $H$ that is disjoint of $\mf{K}$, two vertices $A$ and $B$ in a connected circle component of $H'$ are \emph{bridged relative to $\mf{V}'$} if either $A=B$ or in each minimal circle path $A(=V_0)\leftrightcircle V_1\leftrightcircle\cdots\leftrightcircle V_n\leftrightcircle B(=V_{n+1})$ from $A$ to $B$ in $H'$, there exists one vertex $V_s,0\leq s\leq n+1$, such that $\ml{F}_{V_i}\subseteq \ml{F}_{V_{i+1}},0\leq i\leq s-1$ and $\ml{F}_{V_{i+1}}\subseteq \ml{F}_{V_i},s\leq i\leq n$, where $\ml{F}_{V_i}=\{V\in \mf{V}'\mid V\rightcircleast V_i \mbox{ or } V\rightarrowast V_i\mbox{ in }H\}$. Further, $H'$ is \emph{bridged relative to $\mf{V}'$ in $H$} if any two vertices in a connected circle component of $H'$ are bridged relative to $\mf{V}'$.
\end{myDef}
\begin{prop}
\label{thm:existence}
Given a PAG $\ml{P}$, for any set of vertices $\mf{C}\subseteq \{V\mid X \leftcircleast V \mbox{ in }\ml{P}\}$, there exists a MAG $\ml{M}$ consistent with $\ml{P}$ with $X\leftarrowast V$, $\forall V\in \mf{C}$ and $X\rightarrow V$, $\forall V\in \{V\mid X\leftcircleast V \mbox{ in }\ml{P}\}\backslash \mf{C}$ if and only if
\begin{enumerate}
\item[(1)] ${\normalfont \mbox{PossDe}}(X,\ml{P}[-\mf{C}])\cap {\normalfont \mbox{Pa}}(\mf{C},\ml{P})=\emptyset$;
\item[(2)] the subgraph $\ml{P}[\mf{C}]$ of $\ml{P}$ induced by $\mf{C}$ is a complete graph;
\item[(3)] $\ml{P}[{\normalfont \mbox{PossDe}}(X,\ml{P}[-\mf{C}])\backslash \{X\}]$ is bridged relative to $\mf{C}\cup \{X\}$ in $\ml{P}$.
\end{enumerate}
\end{prop}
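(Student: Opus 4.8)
The plan is to prove the biconditional by treating necessity and sufficiency separately, using the standard invariants of a Markov equivalence class (all MAGs in the MEC share the same adjacencies and the same unshielded colliders, and agree on colliders along discriminating paths) together with the defining properties of a MAG (ancestral: no directed or almost directed cycle; maximal). Throughout I write $\mf{V}'=\mf{C}\cup\{X\}$ and abbreviate the induced subgraph $\ml{P}[-\mf{C}]$; the vertices of $\mf{C}$ are exactly those receiving an arrowhead at $X$, while each remaining circle-neighbour becomes $X\rightarrow V$.

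For necessity, suppose a MAG $\ml{M}$ consistent with $\ml{P}$ realizes the prescribed local transformation. Condition (2) is forced because every $V\in\mf{C}$ carries an arrowhead into $X$: if two of them, $V_1,V_2$, were non-adjacent, then $V_1\rightarrowast X\leftarrowast V_2$ would be an unshielded collider of $\ml{M}$; since unshielded colliders are MEC-invariant, $\ml{P}$ would already carry arrowheads at $X$ there, contradicting that these are circle edges --- hence $\ml{P}[\mf{C}]$ is complete. Condition (1) is forced by ancestrality: each $V\in\mf{C}$ has $V\rightarrowast X$, so $V\notin\text{De}(X,\ml{M})$; a member of $\text{PossDe}(X,\ml{P}[-\mf{C}])$ that were also a parent of some $V\in\mf{C}$ would, once the out-edges of $X$ are oriented, yield a directed path from $X$ into that parent followed by $\rightarrow V\rightarrowast X$, producing a directed or almost directed cycle (I would justify that the relevant possible directed path is realized as directed in $\ml{M}$ under this transformation). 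Condition (3) is the subtle one: the circle component on $\text{PossDe}(X,\ml{P}[-\mf{C}])\backslash\{X\}$ becomes fully oriented in $\ml{M}$, and I would read the bridged (unimodal) pattern of the $\ml{F}_{V_i}$-sets along each minimal circle path off the requirement that no unshielded collider relative to $\mf{V}'$ is created and that the induced orientation is acyclic, via Def.~\ref{def:bridged}.

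For sufficiency I would construct $\ml{M}$ explicitly. Starting from $\ml{P}$, apply the prescribed marks at $X$ and close under the sound rules $\ml{R}_1$--$\ml{R}_{10}$ to obtain a partial mixed graph. The remaining task is to orient the circle component, and here Def.~\ref{def:bridged} does the work: I would use the unimodality of the $\ml{F}_{V_i}$-sets along minimal circle paths to build an ancestral ordering of $\text{PossDe}(X,\ml{P}[-\mf{C}])\backslash\{X\}$ orienting every circle edge downstream from $X$, making these vertices genuine descendants of $X$ while keeping $\mf{C}$ non-descendant. Condition (1) guarantees no parent of $\mf{C}$ is captured, so no cycle is created, and condition (2) guarantees no new unshielded collider at $X$. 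Finally I would verify the result is a MAG in the MEC: it is ancestral and maximal by construction, it has the skeleton of $\ml{P}$, and it introduces no new unshielded collider nor alters any discriminating-path collider, so by the characterization of Markov equivalence it lies in the MEC of $\ml{P}$ and realizes the transformation.

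I expect sufficiency to be the main obstacle, and within it the simultaneous orientation of the circle component. The delicate point is to turn the combinatorial bridged condition --- the $\ml{F}$-sets rising then falling along each minimal circle path --- into a genuine acyclic, collider-free orientation of the entire connected circle component at once (not merely path by path), consistent with the marks already forced by $\ml{R}_1$--$\ml{R}_{10}$. I would proceed by induction, peeling off a vertex whose $\ml{F}$-set is maximal (a source of the induced ordering) and showing that its removal preserves conditions (1)--(3) for the remaining subgraph, thereby reducing to a smaller instance; chordality of the circle component of a PAG would be invoked both to guarantee such a vertex exists and that its circle-neighbourhood is complete, which is what prevents the orientation from creating new unshielded colliders.
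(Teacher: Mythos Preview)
The paper does not contain its own proof of this proposition. Prop.~\ref{thm:existence} appears in Appendix~\ref{sec:More Related Results} as a result quoted verbatim from the PAGcauses paper of Wang et al.\ (2023a), and is reproduced only to make the present paper self-contained; its proof lives entirely in that earlier reference. There is therefore nothing in this paper to compare your proposal against.

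A brief assessment of the proposal on its own merits. Your necessity argument for~(2) is correct. For~(1) you write that you ``would justify that the relevant possible directed path is realized as directed in $\ml{M}$''; this is the real content of that direction and it is not free --- in the original reference it relies on structural properties of PAGs such as the \emph{balanced} property recorded here in Lemma~\ref{lemma:M P same}, so that step has to be supplied rather than asserted. Your treatment of~(3) (``read the bridged pattern off'') is too schematic to evaluate: one must actually show that a failure of the unimodal $\ml{F}$-pattern along some minimal circle path forces either a new unshielded collider or a cycle in every MAG realizing the transformation, and this case analysis is where the work is. For sufficiency, your peeling-by-maximal-$\ml{F}$-set strategy is in the same spirit as the explicit construction the original authors give (a close variant of which is reproduced here as Alg.~\ref{algo:maximal localMAG to MAG} for the analogous Prop.~\ref{prop:existencecritical}); as you correctly flag, the delicate point is upgrading path-wise compatibility to a single globally consistent acyclic orientation of the entire circle component, and chordality is indeed the right tool for that.
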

After incorporating a valid local transformation into the PAG $\ml{P}$,~\citet{conf/icml/WangQZ23} used the sound and complete orientation rules $\ml{R}_1-\ml{R}_3,\ml{R}_4',\ml{R}_7-\ml{R}_{10},\ml{R}_{11}$ to further update the graph. And the obtained graph is called by maximal local MAG, denoted by $\mb{M}$. By incorporating different valid local transformations, PAGcauses can obtain different maximal local MAGs. However, determining a maximal local MAG $\mb{M}$ is not sufficient for determining the only adjustment set in all the MAGs consistent with $\mb{M}$. Hence, they established the graphical characterization for the adjustment set in the MAGs consistent with $\mb{M}$ in Prop.~\ref{prop:existencecritical} and Prop.~\ref{thm:existenceDSEP}. Both Prop.~\ref{prop:existencecritical} and Prop.~\ref{thm:existenceDSEP} ensure that PAGcauses can find the same set of causal effects given $\mb{M}$ as methods to enumerate all the MAGs consistent with $\mb{M}$.
\begin{restatable}{prop}{thmDSEP}
\label{thm:existenceDSEP}
Given a maximal local MAG $\mb{M}$, suppose a MAG $\ml{M}$ consistent with $\mb{M}$ such that there exists an adjustment set relative to $(X,Y)$. Let $\mf{W}$ be ${\normalfont \mbox{D-SEP}}(X,Y,\ml{M}_{\utilde{X}})$. Then $\mf{W}$ is a potential adjustment set in $\mb{M}$ and there exists a block set $\mf{S}$ such that
\begin{enumerate}
\item[(1)] ${\normalfont \mbox{PossDe}}(\bar{\mf{W}},\mb{M}[-\mf{S}])\cap {\normalfont \mbox{Pa}}(\mf{S},\mb{M})=\emptyset$;      
\item[(2)] $\mb{M}[\mf{S}_{V}]$ is a complete graph for any $V\in \bar{\mf{W}}$, where $\mf{S}_{V}=\{V'\in\mf{S}|V\leftcircleast V'\mbox{ in }\mb{M}\}$; 
\item[(3)] $\mb{M}[{\normalfont \mbox{PossDe}}(\bar{\mf{W}},\mb{M}[-\mf{S}])]$ is bridged relative to $\mf{S}$ in $\mb{M}$.
\end{enumerate}
\end{restatable}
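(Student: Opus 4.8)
The plan is to turn the existence of \emph{some} adjustment set in $\ml{M}$ into an explicit one via the Maathuis--Perkovic characterization, transfer its defining collider structure from $\ml{M}_{\utilde{X}}$ into $\mb{M}$, and then read off the required block set directly from the ancestral structure of $\ml{M}$. Since an adjustment set relative to $(X,Y)$ exists in $\ml{M}$, Prop.~\ref{prop: improved generalized backdoor} gives that $\mf{W}=\text{D-SEP}(X,Y,\ml{M}_{\utilde{X}})$ is itself an adjustment set and that $\mf{W}\cap\text{PossDe}(X,\ml{M})=\emptyset$. The bridge between $\ml{M}$ and $\mb{M}$ that I would set up first is this: for $V\in\mf{W}$ take a \emph{minimal} collider path from $X$ to $V$ in $\ml{M}_{\utilde{X}}$ (one exists by Def.~\ref{def:d-sep}); by minimality every internal triple is an unshielded collider, and unshielded colliders are invariant across the MEC, so the same arrowheads, hence the same collider path, occur in $\mb{M}$, beginning with a bidirected edge $X\leftrightarrow\cdots$ (edges out of $X$ being deleted in $\ml{M}_{\utilde{X}}$). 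Each prefix is again a minimal collider path whose vertices are ancestors of $X$ or $Y$, so every internal vertex lies in $\mf{W}$. From this I would record the key structural fact $\bar{\mf{W}}\cap\text{Anc}(Y\cup\mf{W},\ml{M})=\emptyset$: if some $V\in\bar{\mf{W}}$ were an ancestor of $Y\cup\mf{W}$ in $\ml{M}$, then combining its $\mb{M}$-collider path from $X$ (internal vertices in $\mf{W}$, hence ancestors) with that ancestral relation would place $V$ in $\text{D-SEP}(X,Y,\ml{M}_{\utilde{X}})=\mf{W}$, contradicting $V\in\bar{\mf{W}}$. The identical argument in $\mb{M}$ yields condition~(3) of Def.~\ref{def: possible adjust}.

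Next I would verify that $\mf{W}$ is a potential adjustment set. Condition~(1) of Def.~\ref{def: possible adjust} follows from the transferred collider path $X\leftrightarrow\cdots\leftarrowast V$ with internal vertices in $\mf{W}$; for its second half, assuming (as Prop.~\ref{thm:dag and mag} does) $X\in\text{Anc}(Y,\ml{M})$, every $\mf{W}$-vertex is an ancestor of $Y$ in $\ml{M}$, so a directed $\ml{M}$-path from $V$ to $Y$ exists, it avoids $\bar{\mf{W}}$ since no $\bar{\mf{W}}$-vertex is an ancestor of $Y$, and a directed $\ml{M}$-path is a possible directed path in $\mb{M}$. Condition~(2), $\mf{W}\cap\text{PossDe}(X,\mb{M})=\emptyset$, I would obtain from $\mf{W}\cap\text{PossDe}(X,\ml{M})=\emptyset$ after arguing that a vertex lying both on a D-SEP collider path and on an $\mb{M}$-possible-directed path out of $X$ cannot escape being an $\ml{M}$-descendant of $X$ without forcing an almost directed cycle. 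Condition~(3) is the fact established above.

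For the block set I would take $\mf{S}=\text{Anc}(Y\cup\mf{W},\ml{M})\cap[\text{PossDe}(\bar{\mf{W}},\mb{M})\setminus\bar{\mf{W}}]$. Because directed paths in $\mb{M}$ remain directed in $\ml{M}$ and directed paths in $\ml{M}$ are possible directed in $\mb{M}$, this gives $\mf{S}_{min}\subseteq\mf{S}\subseteq\mf{S}_{max}$ in the sense of Def.~\ref{def:T1 and T2}, so $\mf{S}$ is a block set. The three conditions then follow almost entirely from the defining formula for $\mf{S}$. For condition~(1), if $U\in\text{PossDe}(\bar{\mf{W}},\mb{M}[-\mf{S}])$ were a parent of some $S\in\mf{S}$, then $U\to S$ in $\ml{M}$ and $S\in\text{Anc}(Y\cup\mf{W},\ml{M})$ force $U\in\text{Anc}(Y\cup\mf{W},\ml{M})$; but $U\notin\mf{S}$, so either $U\in\bar{\mf{W}}$ (contradicting $\bar{\mf{W}}\cap\text{Anc}(Y\cup\mf{W},\ml{M})=\emptyset$) or $U\in\text{PossDe}(\bar{\mf{W}},\mb{M})\setminus\bar{\mf{W}}$ with $U\notin\mf{S}$, contradicting the definition of $\mf{S}$. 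For condition~(3), if $\mb{M}[\text{PossDe}(\bar{\mf{W}},\mb{M}[-\mf{S}])]$ were not bridged relative to $\mf{S}$, an unbridged path would exist, and by the property discussed in the remark following Def.~\ref{def:unbridged path}, applied in $\ml{M}$, at least one of its vertices is an ancestor of $\mf{S}$, hence of $Y\cup\mf{W}$, in $\ml{M}$; since that vertex lies in the $\mf{S}$-free set $\text{PossDe}(\bar{\mf{W}},\mb{M}[-\mf{S}])$, the same case split again contradicts either $\bar{\mf{W}}\cap\text{Anc}(Y\cup\mf{W},\ml{M})=\emptyset$ or the definition of $\mf{S}$.

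Condition~(2), that $\mb{M}[\mf{S}_V]$ is complete for each $V\in\bar{\mf{W}}$, I would argue separately by invariance of unshielded colliders. For $S_1,S_2\in\mf{S}_V$ the edges $V\leftcircleast S_i$ carry circles at $V$; since $S_i\in\mf{S}\subseteq\text{Anc}(Y\cup\mf{W},\ml{M})$ while $V\notin\text{Anc}(Y\cup\mf{W},\ml{M})$, these circles must resolve to arrowheads at $V$ in $\ml{M}$, so that if $S_1,S_2$ were nonadjacent then $S_1\rightarrowast V\leftarrowast S_2$ would be an unshielded collider in $\ml{M}$ at a position carrying circles in $\mb{M}$; this is impossible, because a circle in $\mb{M}$ admits a consistent MAG with a tail there, contradicting collider invariance across the MEC. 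Hence $S_1,S_2$ are adjacent. The hard part of the whole argument is the $\ml{M}\!\leftrightarrow\!\mb{M}$ correspondence of the first paragraph: making fully rigorous that minimal collider paths in $\ml{M}_{\utilde{X}}$ transfer, with invariant arrowheads, to collider paths in $\mb{M}$ whose internal vertices lie in $\mf{W}$, carefully handling the deletion of edges out of $X$ and the absence of circles at $X$ in a maximal local MAG, together with the possible-descendant transfer needed for condition~(2) of Def.~\ref{def: possible adjust}; once that correspondence and the single fact $\bar{\mf{W}}\cap\text{Anc}(Y\cup\mf{W},\ml{M})=\emptyset$ are in hand, everything else is bookkeeping with directed-path preservation and the unbridged-path property.
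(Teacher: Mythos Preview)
This proposition is not proved in the present paper; it is imported without proof from \citet{conf/icml/WangQZ23} together with Prop.~\ref{prop:existencecritical}, Prop.~\ref{thm:dag and mag} and Prop.~\ref{thm:existence}. The paper treats it as a black box---the proof of Thm.~\ref{thm:existenceDSEP rules} opens by citing Prop.~\ref{thm:existenceDSEP} to obtain that $\mf{W}$ is a potential adjustment set---so there is no in-paper argument to compare against.

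Your direct construction $\mf{S}=\text{Anc}(Y\cup\mf{W},\ml{M})\cap[\text{PossDe}(\bar{\mf{W}},\mb{M})\setminus\bar{\mf{W}}]$ is the natural one, and your derivations of the three block-set conditions from the single fact $\bar{\mf{W}}\cap\text{Anc}(Y\cup\mf{W},\ml{M})=\emptyset$ are correct; the collider-path transfer via unshielded-collider invariance (using that $\mb{M}$ has no circles at $X$) and the $\mf{S}_V$-completeness argument via the \textbf{Complete} property of Lemma~\ref{lemma:M P same} are also sound. For condition~(3) you should note explicitly that no vertex of $\text{PossDe}(\bar{\mf{W}},\mb{M}[-\mf{S}])$ is an $\mb{M}$-ancestor of $\mf{S}$---this follows from your already-verified condition~(1) by chasing a directed path to its first entry into $\mf{S}$---since only then does the unbridged-path remark apply.

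There is one genuine gap, in condition~(2) of Def.~\ref{def: possible adjust}. Going from $\mf{W}\cap\text{De}(X,\ml{M})=\emptyset$ to $\mf{W}\cap\text{PossDe}(X,\mb{M})=\emptyset$ is not an ``almost directed cycle'' argument; I do not see how to make that sketch work. What is actually needed is that in a maximal local MAG every minimal possible directed path out of $X$ is already directed in $\mb{M}$. Start from $X\to V_1$ (no circles at $X$); inductively, if the mark at $V_i$ on the edge $V_i\!-\!V_{i+1}$ were a circle, the \textbf{Balanced} property of Lemma~\ref{lemma:M P same} applied to $V_{i-1}\rightarrowast V_i\leftcircleast V_{i+1}$ would force $V_{i-1}$ adjacent to $V_{i+1}$, contradicting minimality, so that mark is a tail; and then closure under $\ml{R}_{11}$ (no tail--circle edges survive in $\mb{M}$) forces the mark at $V_{i+1}$ to be an arrowhead, giving $V_i\to V_{i+1}$. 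Hence $\text{PossDe}(X,\mb{M})=\text{De}(X,\mb{M})\subseteq\text{De}(X,\ml{M})$, which closes the step you flagged as hard.
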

With the results above, they proposed their method PAGcauses in Alg.~\ref{alg:obtain causal bounds trivially}.
\begin{algorithm}[htbp]
    \SetAlgoLined
    \KwIn{PAG $\ml{P}$, $X$, $Y$}
    $\widehat{\mbox{\normalfont AS}}(\ml{P})= \emptyset$ \Comment{Record all the valid adjustment sets}\;
    \lIf{$X\not\in{\normalfont \mbox{PossAn}}(Y,\ml{P})$}{\Return No causal effects}
    \uIf{the conditions in Prop.~\ref{prop: improved generalized backdoor} are satisfied for $\ml{P}$}{\Return $\widehat{\mbox{\normalfont AS}}(\ml{P})\leftarrow \{{\normalfont \mbox{D-SEP}}(X,Y,\ml{P}_{\underline{X}})\}$\Comment{Prop.~\ref{prop: improved generalized backdoor}}}
    \For{each set $\mf{C}\subseteq \{V\mid V\rightcircleast X\mbox{ in }\ml{P}\}$}{
        \uIf{the three conditions in Prop.~\ref{thm:existence} are satisfied}{
        Obtain a maximal local MAG $\mb{M}$ based on $\ml{P}$ and $\mf{C}$\;
        Find all potential adjustment sets $\mf{W}_1,\mf{W}_2,\cdots$ given $\mb{M}$ according to Def.~\ref{def: possible adjust}\;
        \For{each potential adjustment set $\mf{W}_i$}{
            \For{each block set $\mf{S}$}{
                \uIf{the three conditions in Prop.~\ref{prop:existencecritical} are satisfied given $\mf{S}$}{
                    $\widehat{\mbox{\normalfont AS}}(\ml{P})\leftarrow \widehat{\mbox{\normalfont AS}}(\ml{P})\cup \{\mf{W}_i\}$\;
                    \tb{break}\Comment{Break the loop of $\mf{S}$}\;
                }
            }
        }}
    }
    \KwOut{Set of causal effects via adjustment in the given PAG $\ml{P}$ identified with $\widehat{\mbox{\normalfont AS}}(\ml{P})$ by~\eqref{eq:adjustment criterion}}
    \caption{PAGcauses}
    \label{alg:obtain causal bounds trivially}
\end{algorithm}

In the proof of Prop.~\ref{prop:existencecritical}, if the three conditions are satisfied, they present an algorithm to construct a MAG consistent with $\mb{M}$ such that the adjustment set is $\mf{W}$. We show the algorithm in Alg.~\ref{algo:maximal localMAG to MAG}.
\renewcommand{\algorithmicrequire}{\textbf{input:}}
\renewcommand{\algorithmicensure}{\textbf{output:}}
\begin{algorithm}[tb]
     \caption{Orient a maximal local MAG of $X$ as a MAG}
    \label{algo:maximal localMAG to MAG}
    \begin{algorithmic}[1]
    \Require Maximal local MAG $\mb{M}$, potential adjustment set $\mf{W}$ and corresponding $\mf{\bar{W}}$ according to Def.~\ref{def:T1 and T2}, block set $\mf{S}$
    \State for $\forall K\in \text{PossDe}(\bar{\mf{W}},\mb{M}[-\mf{S}])$ and $\forall T\in \mf{S}$ such that $K\leftcircleast T$ in $\mb{M}$, orient it as $K\leftarrowast T$ (the mark at $T$ remains);
    \State update the subgraph $\mb{M}[\text{PossDe}(\bar{\mf{W}},\mb{M}[-\mf{S}])]$ as follows until no feasible updates: for any two vertices $V_i$ and $V_{j}$ such that $V_i\leftrightcircle V_{j}$, orient it as $V_i\rightarrow V_{j}$ if (1) $\ml{F}_{V_i}\backslash \ml{F}_{V_{j}}\not=\emptyset$ or (2) $\ml{F}_{V_i}=\ml{F}_{V_{j}}$ as well as there is a vertex $V_k\in \text{PossDe}(\bar{\mf{W}},\mb{M}[-\mf{S}])$ not adjacent to $V_j$ such that $V_k\rightarrow V_i\leftrightcircle V_j$, where $\ml{F}_{V_i}=\{V\in \mf{S}\mid V\rightcircleast V_i\mbox{ in }\mb{M}\}$; 
    \State orient the circles on the remaining $\rightarrowcircle$ edges as tails;
    \State in subgraph $\mb{M}[\text{PossDe}(\bar{\mf{W}},\mb{M}[-\mf{S}])]$, orient the circle component into a DAG without new unshielded colliders;
   \State in subgraph $\mb{M}[-\text{PossDe}(\bar{\mf{W}},\mb{M}[-\mf{S}])]$, orient the circle component into a DAG without new unshielded colliders.
    \Ensure A MAG $\ml{M}$
\end{algorithmic}
\end{algorithm}%

\section{Complexity Analysis of Algorithm~\ref{alg: R1213}}
\label{sec:complexity of rules}
Suppose there are $m$ edges and $d$ vertices in $H$. There are $m$ edges that can be transformed. Hence the round of loop is $\ml{O}(m)$. In each round, suppose we want to detect whether $A\leftcircleast B$ can be transformed by $\ml{R}_{12}$ or $\ml{R}_{13}$. The complexity of Line 2 and Line 3 is $\ml{O}(d)$. And determining the set $\text{Anc}(\mf{S}_A,H)$ takes a $\ml{O}(md)$ complexity. Executing Line 7 and Line 9 take a $\ml{O}(m)$ complexity. And the complexity of Line 8 is $\ml{O}(m^2)$. Hence the complexity of Alg.~\ref{alg: R1213} is $\ml{O}(m^3)$.
\section{Proof}
\label{sec: proofs}
\subsection{Proof of Proposition~\ref{prop:implementation soundess}}
\label{sec:prop implementation of R12R13}
\begin{proof}
For $\ml{R}_{13}$, the implementation on Line 5 of Alg.~\ref{alg: R1213} as well as the conditions on Line 2 - Line 4 totally follow the conditions in $\ml{R}_{13}$. Hence it evidently that for the edges that can be transformed by $\ml{R}_{13}$, Alg.~\ref{alg: R1213} can transform these edges, and will not transform other edges on Line 5. It suffices to show that in addition to the edges transformed to $\ml{R}_{13}$, Alg.~\ref{alg: R1213} can transform and only transform the edges triggered by $\ml{R}_{12}$ in the following.

We first prove that if $\ml{R}_{12}$ is triggered in $H$, then Line 9 of Alg.~\ref{alg: R1213} will transform this edge. Suppose there is an unbridged path $V_0\leftrightcircle V_1\leftrightcircle\cdots \leftrightcircle V_n$ relative to $\mf{S}_A$ in $H$. Then there is $\ml{F}_{V_0}\backslash \ml{F}_{V_1}\neq \emptyset$ and $\ml{F}_{V_n}\backslash \ml{F}_{V_{n-1}}\neq \emptyset$. Suppose $S_1\in \ml{F}_{V_0}\backslash \ml{F}_{V_1}$ and $S_2\in \ml{F}_{V_n}\backslash \ml{F}_{V_{n-1}}$. According to Line 7 of Alg.~\ref{alg: R1213}, there is $S_1\rightarrowast V_0$ and $V_n\leftarrowast S_2$ in $H'$. Note $S_1$ cannot be adjacent to $V_1$, for otherwise there can only be $V_1\rightarrow S_1$ in $H$, for otherwise there is $S_1\in \ml{F}_{V_1}$. However, in this case Line 5 of Alg.~\ref{alg: R1213} has been triggered, thus Line 7 and 8 will not be executed. Similarly, $S_2$ is not adjacent to $V_{n-1}$. Hence there is an uncovered path $S_1\rightarrowast V_0\leftrightcircle V_1\leftrightcircle\cdots \leftrightcircle V_n\leftarrowast S_2$ in $H'$. According to the update of Line 8 of Alg.~\ref{alg: R1213}, there must be an unshielded collider in this path after the update. Thus Line 9 will be triggered and $A\leftcircleast B$ can be transformed by $A\leftarrowast B$.

Then we prove that if Alg.~\ref{alg: R1213} transforms an edge $A\leftcircleast B$ to $A\leftarrowast B$ on Line 9, then $\ml{R}_{12}$ can be triggered. Since the rigorous proof is somewhat tedious, we just show a proof sketch here. If Alg.~\ref{alg: R1213} transforms an edge $A\leftcircleast B$ to $A\leftarrowast B$ on Line 9, then there is an unshielded collider formed in $H'$ on Line 8. In this case, according to the proof idea of Lemma 5 of~\citet{conf/icml/WangQZ23}, we can prove that $H[\mf{D}]$ is not bridged relative to $\mf{S}_A$. Then, we will prove that there is an unbridged path in $H[\mf{D}]$ relative to $\mf{S}_A$.

Suppose there is a minimal path $p=V_0\leftrightcircle \cdots \leftrightcircle V_{n+1}$ in $H[\mf{D}]$ where there is not a vertex $V_s$ such that $\ml{F}_{V_i}\subseteq \ml{F}_{V_{i+1}},0\leq i\leq s-1$ and $\ml{F}_{V_{i+1}}\subseteq \ml{F}_{V_i},s\leq i\leq n$. As Line 9 is triggered, no vertex in $\mf{D}$ belongs to $\text{Anc}(\mf{S}_A,H)$. Hence if there is some vertex $V\not\in \ml{F}_{V_i}$, $V$ is not adjacent to $V_i$. Next, we consider the path $p$. We discuss whether $\ml{F}_{V_0}\backslash \ml{F}_{V_1}=\emptyset$. If empty, we consider whether $\ml{F}_{V_1}\backslash \ml{F}_{V_2}=\emptyset$ instead. We repeat the process above until we find the first index $j$ such that $\ml{F}_{V_j}\backslash \ml{F}_{V_{j+1}}\not=\emptyset$. Note such $j$ must exist, for otherwise, there is $\ml{F}_{V_0}\subseteq \ml{F}_{V_1}\subseteq \ml{F}_{V_2}\subseteq\cdots \subseteq\ml{F}_{V_{n+1}}$, in which case there is $s={n+1}$ such that $\ml{F}_{V_i}\subseteq \ml{F}_{V_{i+1}},0\leq i\leq s-1$ and $\ml{F}_{V_{i+1}}\subseteq \ml{F}_{V_i},s\leq i\leq n$, contradiction. Then, we consider the sub-path $V_j\leftrightcircle \cdots \leftrightcircle V_{n+1}$. Note according to the process above, there is $\ml{F}_{V_0}\subseteq \ml{F}_{V_1}\subseteq \ml{F}_{V_2}\subseteq\cdots \subseteq  \ml{F}_{V_j}$ and $\ml{F}_{V_j}\backslash \ml{F}_{V_{j+1}}\not=\emptyset$. Then, we consider whether $\ml{F}_{V_{n+1}}\backslash \ml{F}_{V_n}=\emptyset$. If empty, we consider whether $\ml{F}_{V_n}\backslash \ml{F}_{V_{n-1}}=\emptyset$ instead. We repeat the process above until we find the first index $k$ such that $\ml{F}_{V_k}\backslash \ml{F}_{V_{k-1}}\not=\emptyset$. Similar to the proof above, such index $k$ must exist. And there is $\ml{F}_{V_k}\supseteq \ml{F}_{V_{k+1}}\supseteq \cdots \supseteq  \ml{F}_{V_{n+1}}$ and $\ml{F}_{V_{k-1}}\not=\emptyset$. Next we discuss the relationship between $j$ and $k$. We will prove the impossibility of $k\leq j$. Suppose $k\leq j$. According to the result above, there is $\ml{F}_{V_0}\subseteq \ml{F}_{V_1}\subseteq \ml{F}_{V_2}\subseteq\cdots \subseteq  \ml{F}_{V_j}$ and $\ml{F}_{V_k}\supseteq \ml{F}_{V_{k+1}}\supseteq \cdots \supseteq  \ml{F}_{V_{n+1}}$. Hence, there is $\ml{F}_{V_0}\subseteq \ml{F}_{V_1}\subseteq \ml{F}_{V_2}\subseteq\cdots \subseteq \ml{F}_{V_k}=\ml{F}_{V_{k+1}}=\cdots=\ml{F}_{V_j}\supseteq\cdots \supseteq\ml{F}_{V_{n+1}}$. In this case, there is $s={j}$ such that $\ml{F}_{V_i}\subseteq \ml{F}_{V_{i+1}},0\leq i\leq s-1$ and $\ml{F}_{V_{i+1}}\subseteq \ml{F}_{V_i},s\leq i\leq n$, contradiction. Hence there is $k\geq j+1$. In this case, there is a minimal path $V_j\leftrightcircle V_{j+1}\leftrightcircle\cdots\leftrightcircle V_k$ such that $\ml{F}_{V_j}\backslash \ml{F}_{V_{j+1}}\not=\emptyset$ and $\ml{F}_{V_{k}}\backslash\ml{F}_{V_{k-1}}\not=\emptyset$ in $H[\mf{D}]$. It is an unbridged path relative to $\mf{S}_A$ in $H[\mf{D}]$.

And since there are an uncovered possible directed paths from $A$ to any vertice in $\mf{D}$ and at the same time $B$ is the vertex adjacent to $A$ in the paths, $\ml{R}_{12}$ can be triggered to transform $A\leftcircleast B$ to $A\leftarrowast B$. The proof completes.
\end{proof}
\subsection{Proof of Theorem~\ref{Thm:rule 12 all vertices}}
\label{sec:Proof of Thm:rule 12}
\begin{proof}
We first prove the soundness of $\ml{R}_{13}$, then prove the soundness of $\ml{R}_{12}$.

For $\ml{R}_{13}$, suppose there is an MAG $\ml{M}$ with $A\rightarrow B$. It is evident that the uncovered path is $A\rightarrow B\rightarrow \cdots \rightarrow K$. According to the definition of $\mf{S}_A$ and the conditions in $\ml{R}_{13}$, there must be a vertex $C\in \mf{S}_A$ such that there is $C\rightarrowast A\rightarrow \cdots K\rightarrow \cdots C$, which contradicts with the ancestral property, contradiction.

For $\ml{R}_{12}$, suppose there is an MAG $\ml{M}$ with $A\rightarrow B$. According to the condition of $\ml{R}_{12}$, it is evident that for each vertex $K_i,1\leq i\leq m$, there is a minimal directed path $A\rightarrow B\rightarrow \cdots\rightarrow K_i$ in $\ml{M}$. And for any $T\in \mf{S}_A$ and $K_i,1\leq i\leq m$ such that there is an edge between $T$ and $K_i$ in the PMG $H$, there must be $T\rightarrowast K_i$ in $\ml{M}$, for otherwise there is $A\rightarrow B\rightarrow \cdots \rightarrow K_i\rightarrow T\rightarrowast A$, contradicting with ancestral property. According to Def.~\ref{def:unbridged path}, there is an unbridged path $p:K_1\leftrightcircle K_2\leftrightcircle \cdots\leftrightcircle K_m$ such that $\ml{F}_{K_1}\backslash \ml{F}_{K_2}\neq\emptyset$ and $\ml{F}_{K_m}\backslash \ml{F}_{K_{m-1}}\neq\emptyset$. 

Suppose $C_1\in \ml{F}_{K_1}\backslash \ml{F}_{K_2}$ and $C_2\in \ml{F}_{K_m}\backslash \ml{F}_{K_{m-1}}$. If $C_1$ is adjacent to $K_2$, as $C_1\not\in \ml{F}_{K_2}$, there is $K_2\rightarrow C_1$ in $H$. In this case there must be $A\leftarrowast B$ according to $\ml{R}_{13}$, the soundness of which has been proven. Thus $A\rightarrow B$ and that $C_1$ is adjacent to $K_2$ are impossible. In the following, we consider the case that $C_1$ is not adjacent to $K_2$ and $C_2$ is not adjacent to $K_{m-1}$. We have shown before that if there is an edge $A\rightarrow B$ in $\ml{M}$, there must be $C_1\rightarrowast K_1$ and $C_2\rightarrowast K_m$ in $\ml{M}$. In this case, no matter how we transform the circles in $C_1\rightarrowast K_1\leftrightcircle K_2\leftrightcircle \cdots\leftrightcircle K_m\leftarrowast C_2$, there will be a new unshielded collider in $\ml{M}$, which contradicts with the fact that $\ml{M}$ is consistent with $H$.
\end{proof}
\subsection{Proof of Theorem~\ref{prop:existencecritical rules}}
\begin{proof}[Proof of Theorem~\ref{prop:existencecritical rules}]
Note the graph $\mb{M}$ is updated in every round of Alg.~\ref{alg:get S}. To distinguish them, we use $\mb{M}_i$ to denote the graph obtained after Line 5 of Alg.~\ref{alg:get S} in the $i$-th round. Denote $\mb{M}$ the original maximal local MAG. Note in the whole process, there are no new tails introduced. Hence, for any $i$, ${\normalfont \mbox{Anc}}(Y,\mb{M}_i)={\normalfont \mbox{Anc}}(Y,\mb{M})$. For brevity, denote $\mf{T}={\normalfont \mbox{PossDe}}(\bar{\mf{W}},\mb{M}[-\mf{S}])\backslash \bar{\mf{W}})$ like Line 2 of Alg.~\ref{alg:get S}. Suppose there are $T$ rounds in Alg.~\ref{alg:get S}, where in the $i$-th round, $1\leq i\leq T-1$, an edge $A_i\leftarrowast S_i$ is transformed by $\ml{R}_{12}$ on Line 6 of Alg.~\ref{alg:get S} ($A_i$ just denotes any a vertex), and thus $\text{Anc}(S_i,\mb{M}_i)\cap \mf{T}$ is incorporated to $\mf{S}$ on Line 7 of Alg.~\ref{alg:get S}.
 
Since the algorithm returns a set of vertices $\mf{S}$ after $T$ rounds, according to Line 4, 6, and 9 of Alg.~\ref{alg:get S}, we conclude that (1) $\mb{M}_T[\mf{S}_V]$ is a complete graph for any $V\in \bar{\mf{W}}$, where $\mf{S}_V=\{V'\in \mf{S}|V\leftcircleast V'\mbox{or }V\leftarrowast V'\mbox{ in }\mb{M}_T\}$, (2) ${\normalfont \mbox{PossDe}}(\bar{\mf{W}},\mb{M}_T[-\mf{S}])\cap {\normalfont \mbox{Pa}}(\mf{S},\mb{M}_T)=\emptyset$, (3) there is no unbridged path relative to $\mf{S}$ in $\mb{M}_T[\text{PossDe}(\bar{\mf{W}},\mb{M}_T[-\mf{S}])]$. We will prove that the three conditions in Prop.~\ref{prop:existencecritical} are satisfied given the set $\mf{S}$, thus we can conclude the desired result by Prop.~\ref{prop:existencecritical}.

(1) We will prove that $\mb{M}[\mf{S}_V]$ is a complete graph for any $V\in \bar{\mf{W}}$, where $\mf{S}_V=\{V'\in \mf{S}|V\leftcircleast V'\mbox{or }V\leftarrowast V'\mbox{ in }\mb{M}\}$. In the process of Alg.~\ref{alg:get S}, we only transform some circles at $V\in \bar{\mf{W}}$ to arrowheads, hence $\mb{M}[\mf{S}_V]=\mb{M}_T[\mf{S}_V]$ for any $V\in \bar{\mf{W}}$. Hence given $\mb{M}_T[\mf{S}_V]$ is a complete graph for any $V\in \bar{\mf{W}}$, $\mb{M}[\mf{S}_V]$ is a complete graph for any $V\in \bar{\mf{W}}$.

(2) We will prove that ${\normalfont \mbox{PossDe}}(\bar{\mf{W}},\mb{M}[-\mf{S}])\cap {\normalfont \mbox{Pa}}(\mf{S},\mb{M})=\emptyset$. Suppose $A= {\normalfont \mbox{PossDe}}(\bar{\mf{W}},\mb{M}[-\mf{S}])\cap {\normalfont \mbox{Pa}}(\mf{S},\mb{M})$. Without loss of generality, suppose there is $V\in \bar{\mf{W}}$ and a minimal possible directed path $p=\langle V,J_1,\cdots,J_k,A\rangle$ from $V$ to $A$ in $\mb{M}$ such that each non-endpoint in $p$ does not belong to $\bar{\mf{W}}$ (If there is $V'\in \bar{\mf{W}}$ in the path $p$, we consider $V'$ instead of $V$ and the sub-path from $V'$ to $A$ instead of $p$). Note in the process of Alg.~\ref{alg:get S}, we only transform some circles at $V\in \bar{\mf{W}}$ to arrowheads on the edges connecting $\bar{\mf{W}}$ and $\mf{S}$. Hence, since $p$ is not a minimal possible directed path in $\mb{M}_T$, and each non-endpoint does not belong to $\bar{\mf{W}}$, there must be $J_1\in \mf{S}$. However, in this case the path $p$ is not in $\mb{M}[-\mf{S}]$ since $J_1\in \mf{S}$, contradiction. Hence ${\normalfont \mbox{PossDe}}(\bar{\mf{W}},\mb{M}[-\mf{S}])\cap {\normalfont \mbox{Pa}}(\mf{S},\mb{M})=\emptyset$.

(3) We will prove that there is no unbridged path relative to $\mf{S}$ in $\mb{M}[\text{PossDe}(\bar{\mf{W}},\mb{M}[-\mf{S}])]$. Note in the process of Alg.~\ref{alg:get S}, we only transform some circles at $V\in \bar{\mf{W}}$ to arrowheads on the edges connecting $\bar{\mf{W}}$ and $\mf{S}$. Hence, if there is an unbridged path relative to $\mf{S}$ in $\mb{M}$, this path still exists in $\mb{M}_T$ since each vertex in this path cannot belong to $\mf{S}$, which concludes that $\mb{M}_T[{\normalfont \mbox{PossDe}}(\bar{\mf{W}},\mb{M}_T[-\mf{S}])]$ is not bridged relative to $\mf{S}$ in $\mb{M}_T$, contradiction. Hence there is no unbridged path relative to $\mf{S}$ in $\mb{M}[\text{PossDe}(\bar{\mf{W}},\mb{M}[-\mf{S}])]$, that is, $\mb{M}[{\normalfont \mbox{PossDe}}(\bar{\mf{W}},\mb{M}[-\mf{S}])]$ is bridged relative to $\mf{S}$ in $\mb{M}$ according to Def.~\ref{def:bridged}.
\end{proof}
\subsection{Proof of Theorem~\ref{thm:existenceDSEP rules}}
The proof relied on some results by~\citet{conf/icml/WangQZ23}. We first present two supporting results in Lemma~\ref{lemma:M P same} and Lemma~\ref{lemma:mpdp}. Lemma~\ref{lemma:mpdp} implies that if there is a possible directed path from $A$ to $B$ in a maximal local MAG $\mb{M}$, then we can find a minimal possible directed path from $A$ to $B$ in $\mb{M}$.
\begin{lemma}[\citet{conf/icml/WangQZ23}]
\label{lemma:M P same}
Given a maximal local MAG $\mb{M}$ obtained from a PAG $\ml{P}$ and a valid local transformation of $X$ represented by $\mf{C}$, the following properties are satisfied:
\begin{enumerate}
\item[]\textbf{(Invariant)} The arrowheads and tails in $\mb{M}$ are invariant in all the MAGs consistent with $\ml{P}$ and the local transformation of $X$ represented by $\mf{C}$;
\item[]\textbf{(Chordal)} the circle component in $\mb{M}$ is chordal;
\item[]\textbf{(Balanced)} for any three vertices $A,B,C$ in $\mb{M}$, if $A\rightarrowast B \leftcircleast C$, then there is an edge between $A$ and $C$ with an arrowhead at $C$, namely, $A\rightarrowast C$. Furthermore, if the edge between $A$ and $B$ is $A\rightarrow B$, then the edge between $A$ and $C$ is either $A\rightarrow C$ or $A\rightarrowcircle C$ (i.e., it is not $A\leftrightarrow C)$;
\item[]\textbf{(Complete)} for each circle at vertex $A$ on any edge $A\leftcircleast B$ in $\mb{M}$, there exist MAGs $\ml{M}_1$ and $\ml{M}_2$ consistent with $\mb{M}$ with $A\leftarrowast B\in \mf{E}(\ml{M}_1)$ and $A\rightarrow B\in \mf{E}(\ml{M}_2)$;
\item[]\textbf{(P6)} we can always obtain a MAG consistent with $\ml{P}$ and the local transformation of $X$ represented by $\mf{C}$, by transforming the circle component into a DAG without unshielded colliders and transforming $A\rightarrowcircle B$ as $A\rightarrow B$.
\end{enumerate}
\end{lemma}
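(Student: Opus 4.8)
The plan is to recognize these five properties as the local-background-knowledge analogues of the structural properties that \citet{journals/ai/Zhang08} established for PAGs, and to lift each of them using the fact that $\mb{M}$ is the closure of $\ml{P}$ together with the valid local transformation of $X$ (treated as local BK) under the rule set $\ml{R}_1$--$\ml{R}_3,\ml{R}_4',\ml{R}_7$--$\ml{R}_{10},\ml{R}_{11}$, which \citet{journals/arXiv/WangQZ2022} proved sound and complete for incorporating local BK into a PAG. Throughout I would use two consequences of this: \emph{soundness}, every non-circle mark in $\mb{M}$ agrees with all MAGs consistent with $\ml{P}$ and $\mf{C}$; and \emph{completeness}, if a mark is invariant across that restricted class then the rules have already oriented it, so a surviving circle witnesses that both orientations are realizable.

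The \emph{Invariant} and \emph{Complete} properties come almost directly from these two facts. For Invariant, the marks inherited from $\ml{P}$ are invariant over the full MEC (hence over the restricted class), the marks of the local transformation are given as BK, and every mark the rules add is sound; thus all arrowheads and tails of $\mb{M}$ are invariant. For Complete, a circle at $A$ on an edge $A\leftcircleast B$ means, by completeness, that no rule forced the mark, so the restricted class contains a MAG with a tail at $A$ and one with an arrowhead at $A$; the no--selection-bias edge constraint (the remark that the $\ast$ in $\rightcircleast$ is never a tail) then upgrades these to $A\rightarrow B$ and $A\leftarrowast B$ respectively.

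\emph{Balanced} I would handle by case analysis against the rule closure. Adjacency of $A$ and $C$ follows because, were $A$ and $C$ non-adjacent, the configuration $A\rightarrowast B\leftcircleast C$ would trigger $\ml{R}_1$ and remove the circle at $B$, contradicting that $\mb{M}$ is rule-closed; the arrowhead at $C$ and the second clause then follow from the ancestral property together with completeness, since the circle at $B$ guarantees a consistent MAG realizing $C\rightarrow B$ (resp.\ $B\rightarrow C$), so a tail or circle at $C$, or an edge $A\leftrightarrow C$, would force a directed or almost-directed cycle or contradict an invariant arrowhead. For \emph{Chordal}, I would adapt the proof of \citet{journals/ai/Zhang08} that the circle component of a PAG is chordal: assuming a chordless cycle of circle edges of length $\ge 4$ survives in $\mb{M}$, completeness of the local-BK rules forces either a chord or an orientation somewhere on the cycle, contradicting that all its edges remain circles in the rule-closed $\mb{M}$.

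The main obstacle is \emph{P6}, the explicit construction. Using Chordal, the circle component admits a perfect elimination ordering, so it can be oriented into a DAG introducing no new unshielded colliders; orienting every remaining $\rightarrowcircle$ as $\rightarrow$ then yields a fully directed/bidirected graph $\ml{M}$. The real work is to verify that $\ml{M}$ is a MAG consistent with $\ml{P}$ and $\mf{C}$: the skeleton is unchanged, and the no--new-unshielded-collider condition, together with Balanced (which controls how arrowheads and circles may meet at a vertex), preserves both the colliders and the discriminating-path colliders, hence Markov equivalence; ancestrality (no directed or almost-directed cycle) must be checked using Balanced to exclude cycles created when circle edges are turned into tails, and maximality follows because no inducing path between non-adjacent vertices can appear under these orientations. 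I expect the ancestrality check and the preservation of discriminating-path colliders to be the most delicate steps, paralleling the extension argument for PAGs but now carried out relative to the fixed local orientation $\mf{C}$.
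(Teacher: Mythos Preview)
The paper does not prove this lemma at all: it is quoted verbatim from \citet{conf/icml/WangQZ23} and used only as a black box (inside the proofs of Lemma~\ref{lemma: possDe of barW} and Theorem~\ref{thm:existenceDSEP rules}). There is therefore no proof in the present paper to compare your proposal against.

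On the proposal itself, the overall architecture is the right one and matches the way these properties are established in that line of work: \emph{Invariant} and \emph{Complete} really are immediate from the soundness and completeness of the local-BK rule set of \citet{journals/arXiv/WangQZ2022}, and \emph{Chordal} and \emph{P6} are the local-BK analogues of Zhang's PAG results, proved by the same perfect-elimination/no-new-unshielded-collider construction. Your sketch of \emph{Balanced} is slightly loose, however. The adjacency of $A$ and $C$ via $\ml{R}_1$ is fine, but to force an arrowhead at $C$ you invoke a consistent MAG with $C\rightarrow B$; per-mark completeness only gives you a MAG with a \emph{tail at $B$} on the $B$--$C$ edge (i.e.\ $B\rightarrow C$), not one with a tail at $C$, and it does not by itself let you combine chosen orientations at two different circles. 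The actual argument goes through $\ml{R}_2$ together with the existence of a MAG with $B\rightarrow C$ to rule out both a tail and a surviving circle at $C$ on the $A$--$C$ edge, and similarly uses $\ml{R}_2$/$\ml{R}_8$ for the ``furthermore'' clause excluding $A\leftrightarrow C$. Apart from that point, your plan is sound.
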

\begin{lemma}[\citet{conf/icml/WangQZ23}]
\label{lemma:mpdp}
Consider a maximal local MAG $\mb{M}$. If there is a possible directed path from $A$ to $B$ in $\mb{M}$, then there is a minimal possible directed path from $A$ to $B$ in $\mb{M}$.
\end{lemma}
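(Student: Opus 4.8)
The plan is to lift the problem into a fully oriented MAG, solve it there with an elementary shortest-path argument, and then pull the resulting path back. Concretely, suppose $p=\langle A=V_0,\dots,V_k=B\rangle$ is a possible directed path in $\mb{M}$. First I would produce a single MAG $\ml{M}$ consistent with $\mb{M}$ in which every edge of $p$ is oriented forward, so that $p$ becomes a genuine directed path $A\to\cdots\to B$ and hence $A\in\text{Anc}(B,\ml{M})$. Such an $\ml{M}$ is obtained by the construction underlying the \textbf{(P6)} property of Lemma~\ref{lemma:M P same}: orient the (chordal) circle component into a DAG with no new unshielded colliders and turn every $\rightarrowcircle$ into $\rightarrow$, but now choosing the orientation so that, for each edge of $p$, the circle end lying at $V_i$ becomes a tail and the circle end lying at $V_{i+1}$ becomes an arrowhead.

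Given such an $\ml{M}$, I would then take a directed path from $A$ to $B$ in $\ml{M}$ with the fewest edges, say $p^\ast=\langle A=W_0,\dots,W_r=B\rangle$; this exists because $A\in\text{Anc}(B,\ml{M})$. I claim $p^\ast$ is minimal, i.e.\ no two non-consecutive $W_a,W_b$ are adjacent. Indeed, the edge between them in the MAG $\ml{M}$ is either $W_a\to W_b$, $W_a\leftarrow W_b$, or $W_a\leftrightarrow W_b$. The first case lets us shortcut $p^\ast$ to a strictly shorter directed path, contradicting minimal length; the second yields the directed cycle $W_a\to\cdots\to W_b\to W_a$; the third yields the almost directed cycle $W_a\to\cdots\to W_b\leftrightarrow W_a$. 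The latter two contradict the ancestral property of $\ml{M}$, so $p^\ast$ has no such chord.

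Finally I would transfer $p^\ast$ back to $\mb{M}$. Since $\ml{M}$ is consistent with $\mb{M}$, the two graphs share the same skeleton and every non-circle mark of $\mb{M}$ is retained in $\ml{M}$. Hence each edge $W_a\to W_{a+1}$ of $p^\ast$ corresponds in $\mb{M}$ to an edge whose mark at $W_a$ is a tail or a circle and whose mark at $W_{a+1}$ is an arrowhead or a circle, because an arrowhead at $W_a$ or a tail at $W_{a+1}$ in $\mb{M}$ would survive into $\ml{M}$ and contradict $W_a\to W_{a+1}$. Thus $p^\ast$ is a possible directed path in $\mb{M}$, and since adjacency is identical in the two graphs, its minimality in $\ml{M}$ is exactly minimality in $\mb{M}$, giving the required minimal possible directed path from $A$ to $B$.

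The main obstacle is the very first step, the realizability claim: I must guarantee that the forward orientation of all of $p$'s edges can be completed to a MAG in the correct Markov equivalence class. Turning the $W_a$-ends into tails is harmless for colliders, and the internal pattern $W_{a-1}\to W_a\to W_{a+1}$ never creates a collider at a path vertex, so the real work is to orient the remaining circle component and the circle ends of any $\circ\!\!\rightarrow$ edge on $p$ so that no new unshielded collider appears at any $W_{a+1}$. Here I would lean on the chordality of the circle component together with the \textbf{(Complete)} and \textbf{(P6)} properties of Lemma~\ref{lemma:M P same}, choosing a perfect elimination ordering compatible with the forward direction of $p$; showing that such a compatible, collider-free completion always exists is the crux of the argument, after which the shortest-path reasoning and the pull-back are routine.
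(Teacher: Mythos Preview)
The paper does not give its own proof of this lemma; it is quoted as a supporting result from \citet{conf/icml/WangQZ23}, so there is no in-paper argument to compare against directly.

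Your strategy is sound. Once the realizability step is secured, the shortest-directed-path argument in $\ml{M}$ is airtight (each chord type yields shortcut, directed cycle, or almost directed cycle), and the pull-back is correct because consistency preserves the skeleton and all non-circle marks. You are right that the realizability step is where the work lies; the sketch via a perfect elimination ordering of the chordal circle component compatible with the forward direction of $p$ is the natural line, and the \textbf{(Balanced)} property guarantees that no new unshielded collider can arise between a freshly oriented circle edge and a pre-existing non-circle edge, so only colliders internal to the circle component must be controlled.

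The route actually taken in the cited reference is almost certainly the direct one that never leaves $\mb{M}$: take a \emph{shortest} possible directed path $p^\ast=\langle W_0,\dots,W_r\rangle$ in $\mb{M}$ and show that any chord between $W_a$ and $W_b$ with $b>a+1$ must itself be possible-directed from $W_a$ to $W_b$, contradicting minimal length. The engine is again \textbf{(Balanced)}: an arrowhead at $W_a$ on the chord, combined with the non-arrowhead at $W_a$ on the path edge to $W_{a+1}$, forces a forbidden mark further along; a tail at $W_b$ on the chord forces an arrowhead at $W_a$ (absence of selection bias rules out $-$ and $\circ$ at $W_a$) and reduces to the previous case. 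Your lift-and-project approach trades this local case analysis for a global orientation-extension lemma; both are valid, and yours is arguably more conceptual once the extension step is in hand.
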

\begin{lemma}
\label{lemma: possDe of barW}
Given a maximal local MAG $\mb{M}$, suppose a MAG $\ml{M}$ consistent with $\mb{M}$ such that there exists an adjustment set relative to $(X,Y)$. Let $\mf{W}$ be ${\normalfont \mbox{D-SEP}}(X,Y,\ml{M}_{\utilde{X}})$. Suppose there is a minimal possible directed path $p=\langle J_0(=V),J_1,\cdots,J_s(=T)$ from $V\in \bar{\mf{W}}$ to a vertex $T$ in $\mb{M}$, where each non-endpoint in $p$ does not belong to $\mf{W}\cup\bar{\mf{W}}$. If $T\in {\normalfont \mbox{Anc}}(Y,\ml{M})$, then $p$ can only be as $J_0\rightarrowcircle J_1\rightarrow \cdots \rightarrow J_s$ in $\mb{M}$. And there exists a collider path $X(=F_0)\leftrightarrow F_1\leftrightarrow\cdots \leftrightarrow F_{t-1}\leftarrowast V$ in $\mb{M}$ with edges $F_i\rightarrow J_1,0\leq i\leq n-1$.
\end{lemma}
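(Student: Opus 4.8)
The plan is to pin down the orientations of $p$ in $\mb{M}$ from the hypothesis $T\in\text{Anc}(Y,\ml{M})$, the consistency of $\ml{M}$ with $\mb{M}$, and two facts about $\mf{W}$: that it is a \emph{potential adjustment set} in $\mb{M}$ (Prop.~\ref{thm:existenceDSEP}), so in particular $\bar{\mf{W}}\cap\text{Anc}(Y\cup\mf{W},\mb{M})=\emptyset$ and $\mf{W}\cap\text{PossDe}(X,\mb{M})=\emptyset$; and that it is an \emph{adjustment set in $\ml{M}$} (Prop.~\ref{prop: improved generalized backdoor}). The first thing I would establish is the auxiliary fact that \emph{no $V'\in\bar{\mf{W}}$ is an ancestor of $Y\cup\mf{W}$ in $\ml{M}$}: if one were, the collider path witnessing $V'\in\bar{\mf{W}}$ (whose marks are all arrowheads, hence present in $\ml{M}$ by the Invariant part of Lemma~\ref{lemma:M P same}), spliced with a shortest directed path from $V'$ toward $Y\cup\mf{W}$ and iterated whenever that directed segment lands inside $\mf{W}$, would yield a non-causal $X$--$Y$ path that is open given $\mf{W}$, contradicting the characterization of adjustment sets in MAGs on which Prop.~\ref{prop: improved generalized backdoor} rests. (Trimming to a proper path and terminating the iteration is routine, and this step may also be read off from the PAGcauses results.) Dually, any vertex that is a possible ancestor of $Y$ in $\mb{M}$, lies outside $\mf{W}$, and is reached from $X$ by such a collider path through $\mf{W}$ is, by Def.~\ref{def:T1 and T2}, in $\bar{\mf{W}}$, hence cannot be an interior vertex of $p$.

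For the shape of $p$: each edge of $p$ is, in $\mb{M}$, one of $J_i\rightarrow J_{i+1}$, $J_i\rightarrowcircle J_{i+1}$, $J_i\leftrightcircle J_{i+1}$ (``tail at $J_i$, circle at $J_{i+1}$'' being excluded by the absence of selection bias). Passing to $\ml{M}$ and using the auxiliary fact, $J_0=V\notin\text{Anc}(Y,\ml{M})$ whereas $J_s=T\in\text{Anc}(Y,\ml{M})$, so not all of $p$ can be realized as a directed path out of $V$ in $\ml{M}$. Walking along $p$ and using that arrowheads propagate through $\mb{M}$ via the Balanced property of Lemma~\ref{lemma:M P same} (so that an interior vertex of $p$ reached from $X$ by a collider path through $\mf{W}$ would land in $\bar{\mf{W}}$, which is forbidden), I would argue that the only place where the ``circle'' pattern of $p$ can interrupt the chain $\rightarrow\cdots\rightarrow J_s$ is the very first edge; hence $J_1\rightarrow J_2\rightarrow\cdots\rightarrow J_s$ in $\mb{M}$ with $J_1\in\text{Anc}(Y,\ml{M})\cap(\text{PossAn}(Y,\mb{M})\setminus\mf{W})$. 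The first edge then cannot carry a tail at $J_0$ (that would give $V\in\text{Anc}(J_1,\ml{M})\subseteq\text{Anc}(Y,\ml{M})$) nor a circle at $J_1$ (that would, via the Balanced property and the arrowhead incident to $V$ from its collider path, put $J_1$ into $\bar{\mf{W}}$), so it is exactly $J_0\rightarrowcircle J_1$.

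For the collider path into $J_1$, I would start from the collider path $q:X(=F_0)\leftrightarrow F_1\leftrightarrow\cdots\leftrightarrow F_{t-1}\leftarrowast V$ supplied by Def.~\ref{def:T1 and T2}, with every $F_i\in\mf{W}$; the $V$-end of its last edge is not a tail, since $V\rightarrow F_{t-1}$ would put $V\in\text{Anc}(\mf{W},\mb{M})$, contrary to $\bar{\mf{W}}$ being disjoint from $\text{Anc}(Y\cup\mf{W},\mb{M})$. Applying the Balanced property to the arrowhead at $F_{t-1}$ and the circle at $V$ on $J_0\rightarrowcircle J_1$, and iterating down $q$, yields $F_i\rightarrowast J_1$ for each $i$; the mark at any $F_i$ there cannot be an arrowhead, for that would turn the head of $q$ into a collider path from $X$ through $\mf{W}$ to $J_1$, forcing $J_1\in\bar{\mf{W}}$. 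Since $X=F_0$ bears no circle marks in a maximal local MAG, $X\rightarrow J_1$, and a downward induction along $q$ -- using $X\rightarrow J_1$, the bidirected links of $q$, the Balanced property, and, where a circle persists, the $\bar{\mf{W}}$-argument on $J_1$ -- upgrades every remaining circle at an $F_i$ to a tail, giving $F_i\rightarrow J_1$ for all $0\le i\le t-1$.

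The main obstacle I anticipate is the circle-mark case analysis in the last two steps: fixing the ``break point'' of $p$ at the first edge rather than at an interior vertex, and converting circles at the $F_i$ into tails. In every such case the decisive device is the same -- if the ``bad'' resolution of a circle held, one could splice arrowheads already forced by the Balanced property of the maximal local MAG into a collider path from $X$ through $\mf{W}$ to an interior vertex of $p$ (or to $J_1$), which would place that vertex in $\bar{\mf{W}}$ and contradict the hypothesis that the interior of $p$ avoids $\mf{W}\cup\bar{\mf{W}}$ -- but verifying in each instance that the spliced path really is a collider path at the relevant vertices, and disposing of the degenerate cases $t=1$ and $s=1$ (where $J_1$ is the endpoint $T$), is where the care is needed.
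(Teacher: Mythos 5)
Your proposal is correct in outline and follows essentially the same route as the paper's proof: it uses the collider path from Def.~\ref{def:T1 and T2}, the fact that $V\in\bar{\mf{W}}$ cannot be an ancestor of $Y$ in $\ml{M}$ (via $\mf{W}={\normalfont \mbox{D-SEP}}(X,Y,\ml{M}_{\utilde{X}})$), the exclusion of $F_i\leftarrowast J_1$ because the interior of $p$ avoids $\mf{W}\cup\bar{\mf{W}}$, the no-new-unshielded-collider argument to force $F_{t-1}$ adjacent to $J_1$, and the balanced property plus minimality of $p$ to fix the remaining marks. The only difference is cosmetic ordering (you settle the shape of $p$ before the collider path into $J_1$, the paper does the reverse), and one small imprecision worth noting: the collider path of Def.~\ref{def:T1 and T2} guarantees an arrowhead at $F_{t-1}$, not at $V$, in $\mb{M}$, so the balanced property cannot be invoked at $V$ directly as you suggest --- the arrowhead at $V$ only appears in $\ml{M}$, which is why the paper routes the argument through the adjacency of $F_{t-1}$ and $J_1$ first.
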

\begin{proof}
According to Def.~\ref{def:T1 and T2}, there exists a collider path $X(=F_0)\leftrightarrow F_1\leftrightarrow\cdots\leftrightarrow F_{t-1}\leftarrowast V$ in $\mb{M}$, where $F_1,\cdots,F_{t-1}\in \mf{W}$. There cannot be an edge $F_i\leftarrowast J_1$ in $\mb{M}$ for any $0\leq i\leq t-1$, for otherwise $J_1\in \mf{W}\cup \bar{\mf{W}}$.

Since $V$ is not an ancestor of $Y$ in $\ml{M}$ and $p$ is a minimal possible directed path, there must be $F_{t-1}\leftrightarrow V\leftarrowast J_1$ in $\ml{M}$. Hence $F_{t-1}$ is adjacent to $J_1$, for otherwise there is a new unshielded collider in $\ml{M}$ relative to $\mb{M}$. Since (1) for each $F_i,0\leq i\leq t-1$, there cannot be $F_i\leftarrowast J_1$ in $\mb{M}$, and (2) the balanced property is fulfilled in $\mb{M}$, we can conclude that there is there is $F_i\rightarrow J_1$ or $ F_i\rightarrowcircle J_1$,$\forall 1\leq i\leq n-1$ and $X\rightarrow J_1$, otherwise there is always a discriminating path for $V$ which leads to a non-circle mark at $V$ on the edge between $V$ and $J_1$ in $\ml{P}$. Due to $V\rightarrowast F_{n-1}\rightarrow J_1$ in $\mb{M}$ and the balanced property of $\mb{M}$, there is $V\rightarrowcircle J_1$. Since the path $p$ is a minimal possible directed path, the path can only be as $V\rightarrowcircle J_1\rightarrow \cdots\rightarrow J_s$. 
\end{proof}

\begin{lemma}
\label{lemma: S are ancestors}
Given a maximal local MAG $\mb{M}$, suppose a MAG $\ml{M}$ consistent with $\mb{M}$ such that there exists an adjustment set relative to $(X,Y)$. Let $\mf{W}$ be ${\normalfont \mbox{D-SEP}}(X,Y,\ml{M}_{\utilde{X}})$. For any $S$ incorporated into the set of vertices $\mf{S}$ in the process of Alg.~\ref{alg:get S} (on Line 7), there is $S\in{\normalfont \mbox{Anc}}(Y,\ml{M})$.
\end{lemma}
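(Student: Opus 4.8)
The plan is to establish, by induction on the iterations of the while-loop in Alg.~\ref{alg:get S}, the stronger pair of invariants: (i) $\ml{M}$ remains consistent with the successively updated graph $\mb{M}$, and (ii) $\mf{S}\subseteq{\normalfont \mbox{Anc}}(Y,\ml{M})$ at every stage; the lemma is precisely (ii) restricted to the vertices introduced on Line~7, but the base case $\mf{S}_0\subseteq{\normalfont \mbox{Anc}}(Y,\ml{M})$ is needed for the induction. Two facts are used throughout. First, $\bar{\mf{W}}\cap{\normalfont \mbox{Anc}}(Y,\ml{M})=\emptyset$: if some $V\in\bar{\mf{W}}$ were an ancestor of $Y$ in $\ml{M}$, then the collider path from $X$ to $V$ witnessing $V\in\bar{\mf{W}}$ (its marks retained in $\ml{M}$ by Lemma~\ref{lemma:M P same}, lying in $\ml{M}_{\utilde{X}}$) together with $V\in{\normalfont \mbox{Anc}}(X\cup Y,\ml{M}_{\utilde{X}})$ would place $V$ in ${\normalfont \mbox{D-SEP}}(X,Y,\ml{M}_{\utilde{X}})=\mf{W}$, contradicting $V\notin\mf{W}$; since every vertex of $\mf{W}$ is an ancestor of $X$ or $Y$ in $\ml{M}_{\utilde{X}}$ (Def.~\ref{def:d-sep}) and we work in the regime $X\in{\normalfont \mbox{Anc}}(Y,\ml{M})$ relevant to determining a causal effect (as in Prop.~\ref{thm:dag and mag}), one moreover has $\mf{W}\subseteq{\normalfont \mbox{Anc}}(Y,\ml{M})$ and ${\normalfont \mbox{Anc}}(\mf{W}\cup\{Y\},\ml{M})={\normalfont \mbox{Anc}}(Y,\ml{M})$. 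Second, Alg.~\ref{alg:get S} only turns circles into arrowheads, and by the absence of selection bias such an edge has no tail at its other end, so no new directed edge is ever created; hence ${\normalfont \mbox{Anc}}(\cdot,\mb{M})$ is unchanged by the updates and stays contained in ${\normalfont \mbox{Anc}}(\cdot,\ml{M})$.

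For the base case, given $V'\in\mf{S}_0$ take the witnessing $V\in\bar{\mf{W}}$ and minimal possible directed path $p=\langle V,V',\dots,T\rangle$ with $T\in{\normalfont \mbox{Anc}}(\mf{W}\cup\{Y\},\mb{M})$ and no internal vertex in $\bar{\mf{W}}$ (Def.~\ref{def:S_V for V in barW}); truncating $p$ at its first vertex of $\mf{W}$ if any, the endpoint of the resulting (sub)path lies in ${\normalfont \mbox{Anc}}(Y,\ml{M})$ by the first paragraph and its internal vertices avoid $\mf{W}\cup\bar{\mf{W}}$, so Lemma~\ref{lemma: possDe of barW} applies and forces the shape $V\rightarrowcircle V'\rightarrow\cdots$ in $\mb{M}$, whence $V'\in{\normalfont \mbox{Anc}}(Y,\ml{M})$. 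In the inductive step, the Line~5 replacement of $V\leftcircleast S$ by $V\leftarrowast S$ for $V\in\bar{\mf{W}},S\in\mf{S}$ is consistent with $\ml{M}$, since otherwise a tail at $V$ in $\ml{M}$ would give $V\in{\normalfont \mbox{Anc}}(S,\ml{M})\subseteq{\normalfont \mbox{Anc}}(Y,\ml{M})$, contradicting the first paragraph; thus (i) is preserved.

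It remains to treat Line~7: suppose $\ml{R}_{12}$ would orient $A\leftcircleast B$ as $A\leftarrowast B$ in the current $\mb{M}$, with unbridged path $\langle K_1,\dots,K_m\rangle$ relative to $\mf{S}_A$ and uncovered possible directed paths $\langle A,B,\dots,K_i\rangle$. We add ${\normalfont \mbox{Anc}}(B,\mb{M})\cap\mf{T}$ to $\mf{S}$, so by the first paragraph it suffices to show $B\in{\normalfont \mbox{Anc}}(Y,\ml{M})$. Since $\ml{M}$ is consistent with the current $\mb{M}$, soundness of $\ml{R}_{12}$ (Thm.~\ref{Thm:rule 12 all vertices}) gives $A\leftarrowast B$ in $\ml{M}$. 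Next, each $\langle A,B,\dots,K_i\rangle$ must, from $B$ onward, be a genuine directed path $B\rightarrow\cdots\rightarrow K_i$ in $\ml{M}$: otherwise its first non-forward edge creates an unshielded collider at some internal vertex $U$, and inspecting whether the marks at $U$ were already arrowheads in the maximal local MAG (excluded by its closure under $\ml{R}_1$, since the only added marks arise from circles at $\bar{\mf{W}}$-vertices) or were still circles there (then the collider is absent from the PAG, contradicting that $\ml{M}$ lies in its MEC) gives a contradiction either way; hence $B\in{\normalfont \mbox{Anc}}(K_i,\ml{M})$ for all $i$. Finally, because $\ml{R}_1$--$\ml{R}_{11}$ are complete for incorporating local background knowledge, $\ml{R}_{12}$ cannot be triggered on the maximal local MAG itself, so the unbridgedness of $\langle K_1,\dots,K_m\rangle$ relative to $\mf{S}_A$ is due to vertices that entered $\mf{S}_A$ only through the Line~5 updates — these force $A\in\bar{\mf{W}}$ and are members of $\mf{S}$, hence in ${\normalfont \mbox{Anc}}(Y,\ml{M})$ by the induction hypothesis — so a witnessing vertex $C$ of $\ml{F}_{K_1}\setminus\ml{F}_{K_2}$ or of $\ml{F}_{K_m}\setminus\ml{F}_{K_{m-1}}$ can be taken in ${\normalfont \mbox{Anc}}(Y,\ml{M})$, and the intrinsic unbridged-path property (the Remark after Def.~\ref{def:unbridged path}) makes the appropriate $K_i$ an ancestor in $\ml{M}$ of that $C$. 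Chaining $B\rightarrow\cdots\rightarrow K_i\rightarrow\cdots\rightarrow C$ in $\ml{M}$ gives $B\in{\normalfont \mbox{Anc}}(Y,\ml{M})$ and closes the induction.

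The hard part is this Line~7 step, specifically matching the \emph{good} witness $C$ with a $K_i$ that $B$ provably reaches: depending on which of the three possible shapes of the unbridged path inside $\ml{M}$ (cf.\ the Remark after Def.~\ref{def:unbridged path}) occurs, $B$ is forced to be an ancestor of one particular extremal witness, and one must then argue that this particular witness — not merely some member of $\mf{S}_A$ — is among the newly added, hence $Y$-ancestral, ones; the case where the shape routes $B$ only to the ``far'' witness while that witness happens to be a pre-existing in-neighbour of $A$ looks the most delicate and will likely need the adjacency bookkeeping from the soundness proof of $\ml{R}_{12}$ together with the balanced, chordal and maximal properties of the maximal local MAG from Lemma~\ref{lemma:M P same}. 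A secondary technical point is the no-new-unshielded-collider argument above, which has to be carried out on the graph obtained from the maximal local MAG by adjoining arrowheads at $\bar{\mf{W}}$-vertices, a graph that is not re-closed under the orientation rules.
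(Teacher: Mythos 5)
Your overall architecture matches the paper's: induct over the rounds of Alg.~\ref{alg:get S}, keep the invariant $\mf{S}\subseteq{\normalfont \mbox{Anc}}(Y,\ml{M})$, handle $\mf{S}_0$ via Lemma~\ref{lemma: possDe of barW}, and in the inductive step route $S_{i+1}$'s ancestrality through the vertices of the unbridged path. However, the Line~7 step contains a genuine gap exactly where you place the weight of the argument. Your claim that each uncovered possible directed path $\langle A,B,\dots,K_i\rangle$ must, from $B$ onward, become a directed path in $\ml{M}$ ``since the first non-forward edge creates an unshielded collider'' fails at the very first edge $B\,\hbox{--}\,V_2$: if the edge between $A$ and $B$ is $A\leftrightcircle B$ in the current graph, soundness of $\ml{R}_{12}$ only forces an arrowhead at $A$ in $\ml{M}$, and the mark at $B$ may become a \emph{tail} ($A\leftarrow B$); then $A\leftarrow B\leftarrowast V_2$ contains no collider at $B$, no contradiction arises, and $B\in{\normalfont \mbox{Anc}}(K_i,\ml{M})$ is not obtained. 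The paper avoids this by reversing the order of deductions: it first proves that every $K_i$ on the unbridged path is an ancestor of $Y$ in $\ml{M}$, and only then applies Lemma~\ref{lemma: possDe of barW} (which exploits the collider path $X\leftrightarrow F_1\leftrightarrow\cdots\leftrightarrow F_{t-1}\leftarrowast V$ through $\mf{W}$ together with the balanced property of the maximal local MAG) to conclude that the minimal possible directed path is already of the form $V\rightarrowcircle S_{i+1}\rightarrow\cdots\rightarrow K_i$ \emph{in $\mb{M}$ itself}; the first-edge problem then never arises because every edge after the first is already directed in the PMG.

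The second issue is the one you flag yourself as ``the most delicate'': showing that the witnesses of unbridgedness lie in ${\normalfont \mbox{Anc}}(Y,\ml{M})$. Your completeness-of-$\ml{R}_1$--$\ml{R}_{11}$ meta-argument does not by itself place a witness $C\in\ml{F}_{K_1}\backslash\ml{F}_{K_2}$ inside the already-accumulated set $\mf{S}$ (the set $\mf{S}_A$ of $\ml{R}_{12}$ also contains $A$ itself and pre-existing in-neighbours of $A$), and you do not resolve the matching of $B$ to a ``good'' witness. The paper's resolution is different and self-contained: it shows that $T_1\rightarrow J$ is impossible for a witness $J$ (otherwise $S_{i+1}$ would already have been incorporated into $\mf{S}$ in an earlier round, or would lie in $\mf{S}_0$), so $J$ is not adjacent to $T_2$ and the extended path $J\rightcircleast T_1\leftrightcircle\cdots\leftrightcircle T_f\leftcircleast K$ is uncovered; the no-new-unshielded-collider constraint then makes \emph{every} $T_s$ an ancestor of $J$ or $K$ in $\ml{M}$, and both $J$ and $K$ belong to $\mf{S}$, hence to ${\normalfont \mbox{Anc}}(Y,\ml{M})$ by the induction hypothesis. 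This sidesteps your case analysis over the three shapes of the unbridged path entirely. As it stands, your proposal is a correct skeleton with the two central steps either incorrect as argued or left open.
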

\begin{proof}
Note the graph $\mb{M}$ is updated in every round of Alg.~\ref{alg:get S}. To distinguish them, we use $\mb{M}_i$ to denote the graph obtained after Line 5 of Alg.~\ref{alg:get S} in the $i$-th round. Denote $\mb{M}$ the original maximal local MAG. Note in the whole process, there are no new tails introduced. Hence, for any $i$, ${\normalfont \mbox{Anc}}(Y,\mb{M}_i)={\normalfont \mbox{Anc}}(Y,\mb{M})$.

For $S\in \mf{W}$, since $\mf{W}={\normalfont \mbox{D-SEP}}(X,Y,\ml{M}_{\utilde{X}})$, there is $S\in{\normalfont \mbox{Anc}}(\mf{W}\cup\{Y\},\mb{M})$ and $S\in {\normalfont \mbox{Anc}}(Y,\ml{M})$. For $S\in \mf{S}_0$ defined in Def.~\ref{def:S_V for V in barW}, according to Lemma~\ref{lemma: possDe of barW}, since $S$ is the vertex adjacent to a vertex $V\in\bar{\mf{W}}$ in a minimal possible directed path from $V$ to a vertex in ${\normalfont \mbox{Anc}}(Y,\mb{M})$, there is $S\in {\normalfont \mbox{Anc}}(Y,\mb{M})$. Suppose there are $T$ rounds in Alg.~\ref{alg:get S}, where in the $i$-th round, $1\leq i\leq T-1$, an edge $A\leftarrowast S_i$ is transformed by $\ml{R}_{12}$ on Line 6 of Alg.~\ref{alg:get S}, and thus $\text{Anc}(S_i,\mb{M}_i)\cap \mf{T}$ is incorporated to $\mf{S}$ on Line 7 of Alg.~\ref{alg:get S}. For brevity, denote $\mf{T}={\normalfont \mbox{PossDe}}(\bar{\mf{W}},\mb{M}[-\mf{S}])\backslash \bar{\mf{W}})$ like Line 2 of Alg.~\ref{alg:get S}.

We first prove $S_1\in {\normalfont \mbox{Anc}}(Y,\ml{M})$. Since $\ml{R}_{12}$ is triggered, there is an unbridged path $p=\langle K_1,\cdots,K_m\rangle$ relative to $\mf{S}_0$ and there exists an uncovered possible directed path $\langle V,S_1,\cdots,K_j \rangle$ for $1\leq j\leq m$ in $\mb{M}_1$, where $V\in\bar{\mf{W}}$. Without loss of generality, suppose $p$ is an unbridged path. In the unbridged path, there is $C\rightcircleast K_1\leftrightcircle \cdots\leftrightcircle K_m\leftcircleast D$ in $\mb{M}$, where $C\in \ml{F}_{K_1}\backslash \ml{F}_{K_2}$, $D\in \ml{F}_{K_{m}}\backslash \ml{F}_{K_{m-1}}$, $\ml{F}_{K_i}=\{V\in \mf{V}'\mid V\rightcircleast K_i \mbox{ or }V\rightarrowast K_i\mbox{ in }\mb{M}_1\}$. Next we prove there is not $K_1\rightarrow C$ in $\mb{M}_1$. Suppose $K_1\rightarrow C$ in $\mb{M}_1$ for contradiction. As in the process of Alg.~\ref{alg:get S} we never add a tail, there is $K_1\in \rightarrow C$ in $\mb{M}$. In this case, there is a directed path $S_1\rightarrow \cdots \rightarrow K_1\rightarrow C$ in $\mb{M}$ and $C\in \mf{S}_0\subseteq {\normalfont \mbox{Anc}}(\mf{W}\cup\{Y\},\mb{M})$. Hence $S_1\in {\normalfont \mbox{Anc}}(\mf{W}\cup\{Y\},\mb{M})$. Since $V\in\bar{\mf{W}}$ is adjacent to $S_1$, $S_1$ should belong to $\mf{S}_0$, contradiction. Hence there cannot be an edge $K_1\rightarrow C$ in $\mb{M}_1$.

Hence, in $\ml{M}$ consistent with $\mb{M}$, for each vertex $F_j,1\leq j\leq m$, it is easy to prove that $F_j$ is an ancestor of either $C$ or $D$, for otherwise there will be an unshielded collider in the path $\langle C,K_1,\cdots,K_m,D\rangle$ in $\ml{M}$. And since $C,D\in \mf{S}_0$, and $C,D$ are ancestors of $Y$ in $\ml{M}$, it holds $C,D\in {\normalfont \mbox{Anc}}(Y,\ml{M})$. And since there is $V\rightarrowcircle S_1\rightarrow\cdots\rightarrow K_j$ in $\mb{M}$, $S_1$ is an ancestor of $Y$ in $\mb{M}$. Thus $S_1\in {\normalfont \mbox{Anc}}(Y,\ml{M})$. It is evident that all the vertices in $\text{Anc}(S_1,\mb{M}_1)\cap \mf{T}$ are ancestors of $Y$.

Next we prove the induction result. Suppose in the first $i$ round, each vertex in $\mf{S}_0,\text{Anc}(S_1,\mb{M}),\cdots,\text{Anc}(S_i,\mb{M})$ is an ancestor of $Y$ in $\ml{M}$. We will prove $S_{i+1}\in \text{Anc}(Y,\ml{M})$.

Since $\ml{R}_{12}$ is triggered,there is an unbridged path $p=\langle T_1,\cdots,T_f\rangle$ relative to $\mf{S}_0\cup \bigcup_{1\leq q\leq i}(\text{Anc}(S_{q},\mb{M}_{q})\cap \mf{T})$ and there exists an uncovered possible directed path $\langle V,S_{i+1},\cdots,T_s \rangle$ for $1\leq s\leq f$ in $\mb{M}_{i+1}$ where $V\in\bar{\mf{W}}$. Without loss of generality, suppose $p$ is an unbridged path. In the unbridged path, there is $J\rightcircleast T_1\leftrightcircle \cdots\leftrightcircle T_f\leftcircleast K$ in $\mb{M}_{i+1}$, where $J\in \ml{F}_{T_1}\backslash \ml{F}_{T_2}$, $K\in \ml{F}_{T_{f}}\backslash \ml{F}_{T_{f-1}}$. Next we prove there is not $T_1\rightarrow J$ in $\mb{M}_{i+1}$. Suppose $T_1\rightarrow J$ in $\mb{M}$ for contradiction.  Note in the whole process of Alg.~\ref{alg:get S}, we never add a tail, hence there is $T_1\rightarrow J$ in $\mb{M}$. In this case, there is a directed path $S_{i+1}\rightarrow \cdots \rightarrow T_1\rightarrow J$ in $\mb{M}$ and $J\in \mf{S}_0\cup \bigcup_{1\leq q\leq i}(\text{Anc}(S_{q},\mb{M}_{q})\cap \mf{T})$. If $J\in \mf{S}_0$, then $S_{i+1}\in {\normalfont \mbox{Anc}}(\mf{W}\cup\{Y\},\mb{M})$, $S_{i+1}$ should belong to $\mf{S}_0$, contradiction. If $S_{i+1}\in \text{Anc}(S_{q},\mb{M}_{q})\cap \mf{T},1\leq q\leq i$, then $S_{i+1}$ should have been incorporated into $\mf{S}$ in the $q$-round of Alg.~\ref{alg:get S}, contradiction. Hence there cannot be an edge $T_1\rightarrow J$ in $\mb{M}_{i+1}$.

Hence, in $\ml{M}$ consistent with $\mb{M}$, for each vertex $T_s,1\leq s\leq f$, it is an ancestor of either $J$ or $K$, for otherwise there will be an unshielded collider in the path $\langle J,T_1,\cdots,T_f,K\rangle$ in $\ml{M}$. And since $J,K\in \mf{S}_0\cup \bigcup_{1\leq q\leq i}(\text{Anc}(S_{q},\mb{M}_{q})\cap \mf{T})$, and $J,K$ are ancestors of $Y$ in $\ml{M}$, it holds $J,K\in {\normalfont \mbox{Anc}}(Y,\ml{M})$. And since there is $V\rightarrowcircle S_{i+1}\rightarrow\cdots\rightarrow T_s$ in $\mb{M}$, $S_{i+1}$ is an ancestor of $Y$. Thus $S_{i+1}\in {\normalfont \mbox{Anc}}(Y,\ml{M})$. Hence all the vertices in $\text{Anc}(S_{i+1},\mb{M}_{i+1})\cap \mf{T}$ are ancestors of $Y$ in $\ml{M}$.

By induction, we can prove that all the incorporated vertices in $\mf{S}$ are ancestors of $Y$ in $\ml{M}$.
\end{proof}
\begin{proof}[Theorem.~\ref{thm:existenceDSEP rules}]
Prop.~\ref{thm:existenceDSEP} has implied that $\mf{W}={\normalfont \mbox{D-SEP}}(X,Y,\ml{M}_{\utilde{X}})$ is a potential adjustment set. We will prove that a set of vertices $\mf{S}$ will be returned by Alg.~\ref{alg:get S}.

Note the graph $\mb{M}$ is updated in every round of Alg.~\ref{alg:get S}. To distinguish them, we use $\mb{M}_i$ to denote the graph obtained after Line 5 of Alg.~\ref{alg:get S} in the $i$-th round, use $\mf{S}_i$ to denote the set of vertices obtained after Line 7 of Alg.~\ref{alg:get S} in the $i$-th round. Denote $\mb{M}$ the original maximal local MAG. For $S\in \mf{S}_0$ in Def.~\ref{def:S_V for V in barW}, there must be $V\leftarrowast S$ in $\ml{M}$ if there is $V\leftcircleast S$ in $\mb{M}$. Hence the arrowheads introduced in the first round of Alg.~\ref{alg:get S} must exist in $\ml{M}$. And due to the soundness of $\ml{R}_{12}$ by Thm.~\ref{Thm:rule 12 all vertices}, all the arrowheads introduced in Alg.~\ref{alg:get S} exist in $\ml{M}$. 

In the whole process, there are no new tails introduced. Hence, for any $i$, ${\normalfont \mbox{Anc}}(Y,\mb{M}_i)={\normalfont \mbox{Anc}}(Y,\mb{M})$. For brevity, denote $\mf{T}={\normalfont \mbox{PossDe}}(\bar{\mf{W}},\mb{M}[-\mf{S}])\backslash \bar{\mf{W}}$ like Line 2 of Alg.~\ref{alg:get S}. Suppose there are $J$ rounds in Alg.~\ref{alg:get S}, where in the $i$-th round, $1\leq i\leq J-1$, an edge $A\leftarrowast S_i$ is transformed by $\ml{R}_{12}$ on Line 6 of Alg.~\ref{alg:get S}, and thus $\text{Anc}(S_i,\mb{M}_i)\cap \mf{T}$ is incorporated to $\mf{S}$ on Line 7 of Alg.~\ref{alg:get S}. Hence there is evidently $\mf{S}_{i+1}=\mf{S}_i\cup (\text{Anc}(S_{i+1},\mb{M}_{i+1})\cap \mf{T})$, for $0\leq i\leq J-1$.

It suffices to show that in the $i$-th round $1\leq i\leq J$, there is (1) ${\normalfont \mbox{PossDe}}(\bar{\mf{W}},\mb{M}_i[-\mf{S}_i])\cap {\normalfont \mbox{Pa}}(\mf{S}_i,\mb{M}_i)=\emptyset$, (2) $\mb{M}_i[\mf{S}_V]$ is a complete graph for any $V\in \bar{\mf{W}}$, where $\mf{S}_V=\{V'\in \mf{S}_i|V\leftcircleast V'\mbox{or }V\leftarrowast V'\mbox{ in }\mb{M}_i\}$, and (3) there is not an unbridged path relative to $\mf{S}_i$ in $\mb{M}_i[{\normalfont \mbox{PossDe}}(\bar{\mf{W}},\mb{M}_i[-\mf{S}_i])]$. As when these three conditions are satisfied in each round, Alg.~\ref{alg:get S} could output a set of vertices. Suppose in the $i$ round, the algorithm output ``No''. According to Alg.~\ref{alg:get S}, at least one of the three conditions is violated. We will prove the impossibility of the violations of the three conditions in the following.

If ${\normalfont \mbox{PossDe}}(\bar{\mf{W}},\mb{M}_i[-\mf{S}_i])\cap {\normalfont \mbox{Pa}}(\mf{S}_i,\mb{M}_i)\not=T$, suppose there is a minimal possible directed path $p$ from $V\in \bar{\mf{W}}$ to $T$ in $\mb{M}_i[-\mf{S}_i]$ such that each non-endpoint does not belong to $\bar{\mf{W}}$, and there is an edge $T\rightarrow S$ in $\mb{M}_i$ for $S\in \mf{S}_i$. Hence $T\in \text{Anc}(S,\mb{M}_i)$. According to Lemma~\ref{lemma: possDe of barW}, $p$ is $V\rightarrowcircle \rightarrow \cdots \rightarrow T$ in $\mb{M}$. In this case, if $S\in \mf{S}_0$, according to Def.~\ref{def:S_V for V in barW} and Lemma~\ref{lemma: possDe of barW}, there is $S\in \text{Anc}(\mf{W}\cup \{Y\},\mb{M})$. Thus $T\in \text{Anc}(\mf{W}\cup \{Y\},\mb{M})$, which implies that there is a minimal possible directed path $p'$ from $V\in \bar{\mf{W}}$ to $T\in \text{Anc}(\mf{W}\cup \{Y\},\mb{M})$ such that $p'$ is a sub-path of $p$. However, according to Def.~\ref{def:S_V for V in barW}, in the case above, the vertex adjacent to $V$ in $p'$ should belong to $\mf{S}_0$, contradicting with the fact that the path $p$ is in $\mb{M}_i[-\mf{S}_i]$. If $S_j$ is incorporated into $\mf{S}$ in Alg.~\ref{alg:get S} in the $j,j<i$ round, since there is $T\in \text{Anc}(S,\mb{M})$, $T$ should belong to $\mf{S}_{j+1},\mf{S}_{j+2},\cdots,\mf{S}_{i}$, contradicting with the fact that the path $p$ is in $\mb{M}_i[-\mf{S}_i]$. Hence there is always a contradiction if there is ${\normalfont \mbox{PossDe}}(\bar{\mf{W}},\mb{M}_i[-\mf{S}_i])\cap {\normalfont \mbox{Pa}}(\mf{S}_i,\mb{M}_i)\not=\emptyset$.

Since $\mf{W}={\normalfont \mbox{D-SEP}}(X,Y,\ml{M}_{\utilde{X}})$, for any $V\in \bar{\mf{W}}$, $V$ is not an ancestor of $Y$ in $\ml{M}$. If $\mb{M}_i[\mf{S}_V]$ is not a complete graph for some $V\in \bar{\mf{W}}$, there must be an edge $V\rightarrow S$ in $\ml{M}$, for otherwise there will be new unshielded collider at $V$. Due to Lemma~\ref{lemma: S are ancestors}, $S$ is an ancestor of $Y$. Thus $V$ is an ancestor of $Y$, thus $V\in \mf{W}\cap \bar{\mf{W}}$, contradicting with $\mf{W}\cap \bar{\mf{W}}=\emptyset$.

Finally, we prove that there is not an unbridged path relative to $\mf{S}_i$ in $\mb{M}_i[{\normalfont \mbox{PossDe}}(\bar{\mf{W}},\mb{M}_i[-\mf{S}_i])]$. Suppose $\mb{M}_i[{\normalfont \mbox{PossDe}}(\bar{\mf{W}},\mb{M}[-\mf{S}_i])]$ is not bridged relative to $\mf{S}_i$ in $\mb{M}$. Since $\mb{M}_i[{\normalfont \mbox{PossDe}}(\bar{\mf{W}},\mb{M}_i[-\mf{S}_i])]$ is not bridged relative to $\mf{S}_i$ in $\mb{M}_i$, without loss of generality, suppose an unbridged path $K_1\leftrightcircle \cdots \leftrightcircle K_m$ relative to $\mf{S}_i$, and there is $A\in \bar{\mf{W}}$. According to Lemma~\ref{lemma: S are ancestors}, all the vertices in $\mf{S}_i$ are ancestors of $Y$ in $\ml{M}$. At first, we prove $K_1,\cdots,K_m$ are ancestors of $Y$ in $\ml{M}$. As there are vertices $S_1,S_2\in \mf{S}_i$ such that $S_1\in \ml{F}_{K_1}\backslash \ml{F}_{K_2}$ and $S_2\in \ml{F}_{K_m}\backslash \ml{F}_{K_{m-1}}$, where $\ml{F}_V=\{V'\in \mf{S}_i|V\leftcircleast V' \mbox{ or } V\leftarrowast V' \mbox{ in }\mb{M}_i\}$ as Def.~\ref{def:unbridged path}. 

If $S\in \mf{S}_0$, according to Def.~\ref{def:S_V for V in barW}, there is a minimal possible directed path $p=\langle V',S,\cdots,T\rangle$ from $V'\in \bar{\mf{W}}$ to $T\in \text{Anc}(\mf{W}\cup \{Y\},\mb{M})$ where each non-endpoint does not belong to $\bar{\mf{W}}$ in $\mb{M}_i$. According to Def.~\ref{def:T1 and T2}, there exists a collider path $X\leftrightarrow F_1 \leftrightarrow\cdots \leftrightarrow F_{t-1}\leftarrowast V'$ where $F_1,\cdots,F_{t-1}\in \mf{W}$. And according to the result (2) above, $\mb{M}_i[\mf{S}_{V'}]$ is a complete graph. Hence $F_{t-1}$ is adjacent to $S$. In this case if there is an edge $F_{t-1}\leftarrowast S$ in $\mb{M}_i$, there is $F_{t-1}\leftarrowast S$ in $\mb{M}$, thus there is $S\in \mf{W}\cup \bar{\mf{W}}$. And due to $S\not\in \bar{\mf{W}}$, there is $S\in \mf{W}$. And if there is an edge $F_{t-1}\leftcircleast S$ or $F_{t-1}\rightarrow S$ in $\mb{M}_i$, there is $F_{t-1}\leftcircleast S$ or $F_{t-1}\rightarrow S$ in $\mb{M}$, there must be $V\rightarrowcircle S$ in $\mb{M}$. Due the $p$ is a minimal possible directed path in $\mb{M}_i$, $p$ is also a minimal possible directed path in $\mb{M}$, $p$ is $V'\rightarrowcircle S\rightarrow \cdots \rightarrow T$ in $\mb{M}$, thus $S\in \text{Anc}(\mf{W}\cup \{Y\},\mb{M})$. Hence, no matter what the edge is between $S$ and $F_{t-1}$, there is $S\in \text{Anc}(\mf{W}\cup \{Y\},\mb{M})$. In this case, if there is $K_j\rightarrow S$ in $\mb{M}$, there is $K_j\in \mf{S}_0$ due to $S\in \mf{S}_0$, contradicting with $K_j\in {\normalfont \mbox{PossDe}}(\bar{\mf{W}},\mb{M}_i[-\mf{S}_i])$. If $S$ is incorporated into $\mf{S}$ in the $j,j<i$ round on Line 7 of Alg.~\ref{alg:get S}, $K$ is also incorporated into $\mf{S}$ in this round due to $K\in \text{Anc}(S,\mb{M})$, contradicting with $K_j\in {\normalfont \mbox{PossDe}}(\bar{\mf{W}},\mb{M}_i[-\mf{S}_i])$. Hence for any $K_j,1\leq j\leq m$ and $S\in \mf{S}_i$ in the process of Alg.~\ref{alg:get S}, there cannot be an edge $S\leftarrow K_j$.

Hence, consider the uncovered path $p_1=\langle S_1,K_1,K_2,\cdots,K_m,S_2\rangle$ in $\mb{M}_i$ where the sub-path from $K_1$ to $K_m$ is a circle path. Note the non-circle marks in $\mb{M}_i$ also exist in $\ml{M}$ due to the soundness of $\ml{R}_{12}$ according to Thm.~\ref{Thm:rule 12 all vertices}. Since $\ml{M}$ cannot have new unshielded colliders relative to $\mb{M}_{i}$, there is each vertex in $K_1,K_2,\cdots,K_m$ is an ancestor of either $S_1$ or $S_2$ in $\ml{M}$. Since $S_1$ and $S_2$ are ancestors of $Y$ according to Lemma~\ref{lemma: S are ancestors}, any vertex in $K_1,\cdots,K_m$ are ancestors of $Y$ in $\ml{M}$.

Note in the process of Alg.~\ref{alg:get S}, we only add arrowheads at $\bar{\mf{W}}$, which are not ancestors of $Y$ in $\ml{M}$, hence we will never introduce any arrowheads at $K_j,\forall 1\leq j\leq m$ in Alg.~\ref{alg:get S}. Hence in $\mb{M}_i$, the uncovered path $p_1$ is in the form of $S_1\rightcircleast K_1\leftrightcircle \cdots \leftrightcircle K_m\leftcircleast S_2$, that is, there cannot be $S_1\rightarrowast K_1$ or $K_m\leftarrowast S_2$ in $\mb{M}_{i}$.

Next, for any $K_j,1\leq j\leq m$, consider the minimal possible directed path $p=\langle A, B_j,\cdots,K_j\rangle$ from $A$ to $K_j$. Note we use notation $B$ to denote the vertex adjacent to $A$ in the minimal possible directed path from $A$ to $K_j,1\leq j\leq m$. Without loss of generality, we suppose each non-endpoint in $p$ does not belong to $\bar{\mf{W}}$, since if there is another vertex $A'\in \bar{\mf{W}}$ in $p$, we can consider $A'$ instead of $A$, it is evidently that $K_1,\cdots,K_m$ are possible descendants of $A'$ as well since there is a minimal possible directed path from $A'$ to $K_j$ and there are circle paths from $K_j$ to each vertex in $K_1,\cdots,K_m$.

Note it is possible that there are many minimal possible directed paths from $A$ to $K_j$. Next, we prove that for any $B_i,B_j,1\leq i<j\leq m$, there is either $B_i$ and $B_j$ denote the same vertex, or $B_i$ is adjacent to $B_j$, and $B_i$ is adjacent to each vertex in $\mf{S}$ in $\mb{M}$. Suppose $B_i$ is not adjacent to $B_j$ or $S\in \mf{S}$ in $\mb{M}$. Since $S$ is an ancestor of $Y$ in any MAG $\ml{M}$ consistent with $\mb{M}$ such that $\mf{W}={\normalfont \mbox{D-SEP}}(X,Y,\ml{M}_{\utilde{X}})$, and $A\in \bar{\mf{W}}$, there is $A\leftarrowast S$ in $\ml{M}$. Since there is not $A\leftarrowast B_j$ or $A\leftarrowast B_i$ in $\mb{M}$, there is either $A\rightarrow B_j$ or $A\rightarrow B_i$ in $\ml{M}$. And since $B_j$ and $B_i$ are located at minimal possible directed paths from $A$ to $K_j$ and $K_i$, respectively, there must be $A\in \text{Anc}(K_i,\ml{M})$ or $A\in \text{Anc}(K_j,\ml{M})$. Since we have proven that $K_i,K_j$ are ancestors of $Y$ in $\ml{M}$, $A$ is an ancestors of $Y$ in $\ml{M}$, in which case there is $A\in \mf{W}\cap \bar{\mf{W}}$, contradicting with $\mf{W}\cap \bar{\mf{W}}=\emptyset$ in Def.~\ref{def:T1 and T2}. Hence, for any $B_i,B_j,1\leq i<j\leq m$, there is either $B_i$ and $B_j$ denote the same vertex, or $B_i$ is adjacent to $B_j$, and $B_i$ is adjacent to each vertex in $\mf{S}$ in $\mb{M}$.

Next, we prove that for any $1\leq j\leq m$, there is not $B_j=K_j$. That is, the minimal possible directed path from $A$ to $K_j$ cannot be $A\leftcircleast K_j$ in $\mb{M}_i$. Suppose there is $A\leftcircleast K_j$ in $\mb{M}_i$. According to the result above, for any $S\in \mf{S}$, $K_j$ is adjacent to $S$. In this case, there must be $m\geq 3$, for otherwise if the unbridged path is just $K_1\leftrightcircle K_2$, suppose $j=1$, then there must be $\ml{F}_{K_1}\supseteq \ml{F}_{K_2}$, contradicting with the definition of unbridged path in Def.~\ref{def:unbridged path}. We consider the circle path $K_j\leftrightcircle K_{j+1}\leftrightcircle\cdots \leftrightcircle K_m$ in $\mb{M}$ (If $j\geq m-1$, then we consider the circle path $K_1\leftrightcircle \cdots K_j$ instead. And it is impossible that $m=3$ and $j=2$, for otherwise the path cannot be unbridged). There is $S_2\in \ml{F}_{K_m}\backslash \ml{F}_{K_{m-1}}$. Since there cannot be an edge $K_{m-1}\rightarrow S_2$ in $\mb{M}_i$ and $\mb{M}$, which we have proven before, and $S_2\in \ml{F}_{K_m}\backslash \ml{F}_{K_{m-1}}$, $S_2$ cannot be adjacent to $K_{m-1}$. Next, we can conclude that $K_{m-2}$ is not adjacent to $S_2$, for otherwise in the substructure comprised of $K_{m-2},K_{m-1},K_m,S_2$, there must be $K_m\rightarrow S_2\leftarrow K_{m-2}$ oriented by $\ml{R}_9$ in $\ml{P}$, which leads to $K_{m-2}\rightarrow S$ in $\mb{M}$ and $\mb{M}_i$, contradiction. Similarly, we can conclude that no vertices in $K_j,K_{j+1},\cdots,K_m$ is adjacent to $S$. However, we have proven that $K_j$ is adjacent to $S$, contradiction. Hence, for any $B_j,1\leq j\leq m$, $B_j\neq K_j$.

Next, we will prove that for any $B_j,1\leq j\leq m$, $B_j$ is also in the minimal possible directed path from $A$ to $K_i$, where $1\leq i\leq m$ and $i\neq j$. Without loss of generality, suppose $i>j$. Consider the minimal possible directed path $\langle A,B_{j+1},\cdots,K_{j+1}\rangle$ from $A$ to $K_{j+1}$ in $\mb{M}$. We will prove that $B_j$ is also the vertex adjacent to $A$ in a minimal possible directed path from $A$ to $K_{j+1}$. If $B_{j+1}$ and $B_j$ denote the same vertex, the result evidently holds. We just consider the case $B_{j+1}\neq B_j$. We have proven that $B_j$ is adjacent to $B_{j+1}$ before.

Note each vertex in $K_1,\cdots,K_m$ is an ancestor of $Y$ in $\ml{M}$. According to Lemma~\ref{lemma: possDe of barW}, there must be $p_1=A\rightarrowcircle B_j\rightarrow \cdots \rightarrow K_j$ and $p_2=A\rightarrowcircle B_{j+1}\rightarrow \cdots \rightarrow K_{j+1}$. Since we have proven $K_{j+1}\neq B_{j+1}$ above, $A$ cannot be adjacent to $K_{j+1}$, for otherwise $p_2$ is not a minimal possible directed path. If $B_j$ is adjacent to $K_{j+1}$ in $\mb{M}$, it is evident that there is a minimal possible directed path $\langle A,B_j,K_{j+1} \rangle$, thus $B_j$ is also the vertex adjacent to $A$ in a minimal possible directed path from $A$ to $K_{j+1}$. If $B_j$ is not adjacent to $K_{j+1}$, due to the possible directed path $B_j\rightarrow \cdots \rightarrow K_j\leftrightcircle K_{j+1}$ in $\mb{M}_i$, the corresponding path in $\mb{M}$ is also a possible directed path, there must be a minimal possible directed path $p'$ from $B_j$ to $K_{j+1}$ in $\mb{M}$ according to Lemma~\ref{lemma:mpdp}, and thus the corresponding path of $p'$ in $\mb{M}_i$ must be also a minimal possible directed path since in Alg.~\ref{alg:get S} we only transform some edge $V\leftcircleast S$ to $V\leftarrowast S$ for $V\in \bar{\mf{W}}$ and $S\in \mf{S}_i$, while each vertex in $p'$ does not belong to $\mf{S}$ since it is in $\mb{M}_i[-\mf{S}_i]$. And $A$ is not adjacent to any non-endpoint in $p'$ since $A\rightarrowcircle B_j\rightarrow \cdots \rightarrow K_{j}$ is a minimal possible directed path and $A$ is not adjacent to $K_{j+1}$. Hence, we have a new minimal possible directed path $A\rightarrowcircle B_j\bigoplus p'$, where $B_j$ is the vertex adjacent to $A$ in a minimal possible directed path from $A$ to $K_{j+1}$. Similarly, we can prove that $B_j$ is also the vertex adjacent to $A$ in a minimal possible directed path from $A$ to $K_{j+2}$. Repeat the process, we can prove that $B_j$ is also the vertex adjacent to $A$ in a minimal possible directed path from $A$ to $K_i$, for any $1\leq i\leq m$ and $i\neq j$.

Till now, we have proven that there exists a minimal possible directed path from $A$ to each vertex in $K_1,\cdots,K_m$ such that $B_j$ is the common vertex adjacent to $A$ in all paths. And it is evidently that the minimal possible directed path is an uncovered path. Hence, if $B_j\not\in \mf{S}_i$, the edge $A\leftcircleast B_j$ should be transformed by $\ml{R}_{12}$ in $\mb{M}_i$ on Line 6 of Alg.~\ref{alg:get S}, thus the algorithm will enter the next loop, contradiction. Hence there is not an unbridged path relative to $\mf{S}_i$ in $\mb{M}_i[{\normalfont \mbox{PossDe}}(\bar{\mf{W}},\mb{M}_i[-\mf{S}_i])]$.
\end{proof}

\subsection{Proof of Proposition~\ref{prop: DD-SEP}}
\label{subsec:proof of proposition dd-sep}
\begin{proof}
We prove it by mathematical induction. For each $V\in {\normalfont \mbox{DD-SEP}}(X,Y,\mb{M}_{\utilde{X}})$, we consider the minimal collider path satisfying the three conditions of Definition~\ref{def:dd-sep} in $\mb{M}$. If the length is $1$, there is $X\leftarrowast V$ in $\mb{M}_{\utilde{X}}$ since there cannot be $X\rightast V$ in $\mb{M}_{\utilde{X}}$ according to the definition of $\mb{M}_{\utilde{X}}$. If there is $V\in \text{Anc}(Y,\mb{M})$, it trivially concludes that $V\in {\normalfont \mbox{D-SEP}}(X,Y,\ml{M}_{\utilde{X}})$ according to the definition. If $\mb{M}[\ml{Q}_V]$ is not a complete graph, evidently there are at least two vertices in $\ml{Q}_V$. Suppose $S_1,S_2\in \ml{Q}_V$ are not adjacent. It is evident that there is either $V\rightarrow S_1$ or $V\rightarrow S_2$ in $\ml{M}$, otherwise there is a new unshielded collider in $\ml{M}$ relative to $\mb{M}$ and $\ml{P}$, which contradicts with that $\ml{M}$ is consistent with $\mb{M}$. Thus there is also $V\in \text{Anc}(Y,\ml{M})$ such that $V\in {\normalfont \mbox{D-SEP}}(X,Y,\ml{M}_{\utilde{X}})$. Hence if the length is $1$, there is $V\in {\normalfont \mbox{D-SEP}}(X,Y,\ml{M}_{\utilde{X}})$. Suppose the result holds when the length of the minimal collider path mentioned above is $k$. For the vertex $V$ with a minimal collider path satisfying the three conditions of Definition~\ref{def:dd-sep} whose length is $k+1$, suppose the path is comprised of $X,V_1,V_2,\cdots,V_{k+1}$. We have $V_1,V_2,\cdots,V_k\in {\normalfont \mbox{D-SEP}}(X,Y,\ml{M}_{\utilde{X}})$. For $V_{k+1}$, similar to the proof above, no matter whether $V_{k+1}\in \text{Anc}(Y,\mb{M})$ or  $\mb{M}[\ml{Q}_{V_{k+1}}]$ is not a complete graph, there is always $V_{k+1}\in \text{Anc}(Y,\ml{M})$, thus $V_{k+1}\in {\normalfont \mbox{D-SEP}}(X,Y,\ml{M}_{\utilde{X}})$ due to the collider path where each non-endpoint belongs to ${\normalfont \mbox{D-SEP}}(X,Y,\ml{M}_{\utilde{X}})$.
\end{proof}
\subsection{Proof of Corollary~\ref{cor:corsetequals}}
\label{subsec:proof of Corollary}
\begin{proof}
The proof follows Thm. 4 of~\citet{conf/icml/WangQZ23} based on Thm.~\ref{prop:existencecritical rules} and Thm.~\ref{thm:existenceDSEP rules}. Thm.~\ref{prop:existencecritical rules} and Thm.~\ref{thm:existenceDSEP rules} can ensure that by using Alg.~\ref{alg:get S} for each potential adjustment set, we can find the set of causal effects in all the DAGs represented by the MAGs consistent with $\mb{M}$. And since in PAGrules, all possible local transformation are considered on Line 5, PAGrules can return the set of causal effects in all the DAGs represented by the MAGs consistent with $\ml{P}$.
\end{proof}



\end{document}